\documentclass[letterpaper]{article} 
\usepackage{aaai25}  
\usepackage{times}  
\usepackage{helvet}  
\usepackage{courier}  
\usepackage[hyphens]{url}  
\usepackage{graphicx} 
\urlstyle{rm} 
\usepackage{natbib}  
\usepackage{caption} 
\frenchspacing  
\setlength{\pdfpagewidth}{8.5in} 
\setlength{\pdfpageheight}{11in} 
%
\usepackage{algorithm}
\usepackage{algorithmic}

%
\usepackage{newfloat}
\usepackage{listings}
\DeclareCaptionStyle{ruled}{labelfont=normalfont,labelsep=colon,strut=off} 
\lstset{%
	basicstyle={\footnotesize\ttfamily},
	numbers=left,numberstyle=\footnotesize,xleftmargin=2em,
	aboveskip=0pt,belowskip=0pt,%
	showstringspaces=false,tabsize=2,breaklines=true}
\floatstyle{ruled}
\newfloat{listing}{tb}{lst}{}
\floatname{listing}{Listing}
%
\pdfinfo{
/TemplateVersion (2025.1)
}

\setcounter{secnumdepth}{0} 

%


\title{Recursive Aggregates as Intensional Functions in Answer Set Programming:\\Semantics and Strong Equivalence}
\author{Jorge Fandinno,  Zachary Hansen}
\affiliations{
    \textsuperscript{\rm 1}University of Nebraska Omaha, Omaha, NE, USA\\
    \{jfandinno,zachhansen\}@unomaha.edu
}

\usepackage{bibentry}

\usepackage{amsmath}
\usepackage{amsthm}
\usepackage{amsfonts}
\usepackage{amssymb}
\usepackage{listings}
\usepackage{xspace}
\usepackage{xcolor}
\usepackage{trimclip}
\newcommand{\I}{\mathcal{I}}

\renewcommand{\P}{\mathcal{P}}
\renewcommand{\S}{\mathcal{S}}
\newcommand{\F}{\mathcal{F}}
\newcommand{\less}{\prec^{\P\!\F}}
\newcommand{\lesseq}{\preceq^{\P\!\F}}
\newcommand{\lessP}[1]{\prec^{#1}}
\newcommand{\lesseqP}[1]{\preceq^{#1}}

\newcommand{\modelsht}{\models_{ht}}


\newcommand{\sneg}{\raisebox{1pt}{\rotatebox[origin=c]{180}{\ensuremath{\neg}}}}

\def\rar{\rightarrow}
\def\lrar{\leftrightarrow}

\def\ba{\begin{array}}
\def\ea{\end{array}}
\def\bce{\begin{center}}
\def\ece{\end{center}}

\def\beq{\begin{equation}}
\def\eeq#1{\label{#1}\end{equation}}

\newcommand{\cli}{\mathit{cli}}
\newcommand{\taug}{\tau^{\cli}}
\newcommand{\taud}{\tau^{\mathit{dlv}}}
\newcommand{\setsg}{\sets^{\cli}}
\newcommand{\setsd}{\sets^{\mathit{dlv}}}

\newtheorem{definition}{Definition}

\newtheorem{theorem}{Theorem}

\newtheorem{proposition}[theorem]{Proposition}
\newtheorem{lemma}[theorem]{Lemma}

\newcounter{myCounter}

\newcounter{problemCounter}
\setcounter{problemCounter}{2}
\newcommand\p[1]{\smallskip\noindent\textbf{Problem} \arabic{problemCounter} [#1].\stepcounter{problemCounter}}

\def\bc{\begin{choices}}
\def\ec{\end{choices}}
\def\boc{\begin{oneparchoices}}
\def\eoc{\end{oneparchoices}}

\def\p{\part}
\def\bq{\begin{questions}}
\def\eq{\end{questions}}
\def\bp{\begin{parts}}
\def\ep{\end{parts}}

\def\beq{\begin{equation}}
\def\eeq#1{\label{#1}\end{equation}}
\def\ba{\begin{array}}
\def\ea{\end{array}}

\def\rar{\rightarrow}

\DeclareMathOperator{\myinf}{\mathit{inf}}

\DeclareMathOperator{\mysup}{\mathit{sup}}

\DeclareMathOperator{\head}{\mathit{Head}}

\newcommand{\anthem}{\textsc{anthem}}
\newcommand{\core}{\texttt{ASP\nobreakdash-Core\nobreakdash-2}}
\newcommand{\clingo}{\texttt{clingo}}
\newcommand{\dlv}{\texttt{dlv}}

\newcommand{\boldX}{\mathbf{X}}
\newcommand{\boldx}{\mathbf{x}}
\newcommand{\boldy}{\mathbf{y}}
\newcommand{\boldY}{\mathbf{Y}}
\newcommand{\boldZ}{\mathbf{Z}}
\newcommand{\boldz}{\mathbf{z}}

\newcommand{\boldt}{\mathbf{t}}
\newcommand{\boldu}{\mathbf{u}}

\newcommand{\boldd}{\mathbf{d}}
\newcommand{\boldc}{\mathbf{c}}

\newcommand{\boldl}{\mathbf{l}}

\newcommand{\sets}{\mathit{s}}
\newcommand{\tuple}[1]{\langle#1\rangle}

\newcommand{\ruleo}{\;\hbox{:-}\;}

\newcommand{\fsum}{\mathit{sum}}
\newcommand{\fcount}{\mathit{count}}

\newcommand{\pmemF}[2]{#1 \in #2}

\newcommand{\ftuplen}[1]{\mathit{tuple}_{#1}}
\newcommand{\ftuple}{\ftuplen{}}

\newcommand{\tset}{t_{\mathit{set}}}

\newcommand{\dtuple}{d_{\mathit{tuple}}}

\newcommand{\sorttuple}{s_{\mathit{tuple}}}

\newcommand{\sortset}{s_{\mathit{set}}}
\newcommand{\sortsuper}{s_{\mathit{gen}}}

\newcommand{\vartuple}{T}

\newcommand{\eqdef}{%
  \mathrel{\vbox{\offinterlineskip\ialign{%
    \hfil##\hfil\cr%
    $\scriptscriptstyle\mathrm{def}$\cr%
    \noalign{\kern1pt}%
    $=$\cr%
    \noalign{\kern-0.1pt}%
}}}}

\newcommand{\Ans}[1]{\mathcal{#1}}

\newcommand{\domaintuple}{d_{\mathit{tuple}}}

\newcommand{\finterp}{standard interpretation}

\newcommand{\htinterp}{ht\nobreakdash-interpretation\xspace}
\newcommand{\htinterps}{ht\nobreakdash-interpretations\xspace}

\newcommand{\agginterp}{agg\nobreakdash-interpretation\xspace}
\newcommand{\agginterps}{agg\nobreakdash-interpretations\xspace}

\newcommand{\htmodel}{ht\nobreakdash-model\xspace}

\newcommand{\agghtinterp}{agg\nobreakdash-\htinterp}
\newcommand{\agghtinterps}{agg\nobreakdash-\htinterps}

\newcommand{\aggmodel}{agg\nobreakdash-model\xspace}
\newcommand{\aggmodels}{agg\nobreakdash-models\xspace}

\newcommand{\agghtmodel}{agg\nobreakdash-ht\nobreakdash-model\xspace}
\newcommand{\agghtmodels}{agg\nobreakdash-ht\nobreakdash-models\xspace}

\newcommand{\aggstable}{agg\nobreakdash-stable\xspace}

\newcommand{\numeral}[1]{\overline{#1}}

\DeclareMathOperator{\Not}{\mathit{not}}
\newcommand{\HF}{\Gamma}
\newcommand{\Int}{I}
\newcommand{\grp}[3]{\mathit{gr}^{#1}_{#2}(#3)}
\newcommand{\gr}[2]{\grp{\P\!\F}{#1}{#2}}

\makeatletter
\newcommand\thefontsize[1]{{#1 The current font size is: \f@size pt\par}}
\makeatother

\newcommand{\g}[2]{\mathit{gr}_{#1}(#2)}

\def\lrar{\leftrightarrow}
\def\beq{\begin{equation}}
\def\eeq#1{\label{#1}\end{equation}}
\def\ba{\begin{array}}
\def\ea{\end{array}}

\def\anthem{{\sc anthem}}

\def\p2f{\hbox{p2f}}

\def\head{\emph{Head\/}}

\newcommand{\FLP}{\mathit{FLP}}

\newcommand{\NN}[1]{\mathit{n}(#1)}
\newcommand{\PPP}[1]{\mathit{p}(#1)}
\newcommand{\PNN}[1]{\mathit{pn}(#1)}

\newcommand{\HT}{\ensuremath{\mathit{HT}}}
\newcommand{\AGG}{\ensuremath{\mathit{AGG}}}

\begin{document}

\maketitle

\begin{abstract}
    This paper shows that the semantics of programs with aggregates implemented by the solvers \clingo\ and \dlv\ can be characterized as extended First-Order formulas with intensional functions in the logic of Here\nobreakdash-and-There.
    Furthermore, this characterization can be used to study the strong equivalence of programs with aggregates under either semantics.
    We also present a transformation that reduces the task of checking strong equivalence to reasoning in classical First\nobreakdash-Order logic, which serves as a foundation for automating this procedure.
\end{abstract}

\section{Introduction}\label{sec:introduction}

Answer set programming~(ASP) is a declarative programming paradigm well-suited for solving knowledge-intensive search and optimization problems~\cite{lifschitz19a}.
Its success relies on the combination of a rich knowledge representation language with effective solvers.
Some of its most useful constructs are~\emph{aggregates}, that is, functions that apply to sets.
The semantics of aggregates have been extensively studied in the literature~\citep{siniso02a,doporo03a,pedebr07a,sonpon07b,ferraris11a,fapfle11a,gelzha14a,gelzha19a,cafascsc17b}.
In most cases, they rely on the idea of \emph{grounding}---a process that replaces all variables by variable-free terms.
%
%
This makes reasoning about First-Order (FO) programs with aggregates cumbersome and it does not allow the use of classical FO theorem provers for verifying properties about this class of programs.

Though several approaches describe the semantics of aggregates bypassing the need for grounding, most of these approaches only allow a restricted class of aggregates~\cite{lee08a,lifschitz22a} or use some extension of the logical language~\cite{baleme11b,leme12a,aschzhzh15a,cafafape18a}.
Recently, \citet{fahali22a,fahali24a} showed how to translate logic programs with aggregates into FO sentences, which, after the application of the SM operator~\cite{feleli11a}, captures the \core\ semantics.
%
%
Though most practical problems can be represented within the restrictions of the \core\ semantics, some notable exceptions are more naturally represented using recursive aggregates, which are not allowed by \core.
One of these examples is the \emph{Company Control problem}, which consists of finding companies that control other companies by
(directly or indirectly)
owning a majority of their shares.
This problem has been encoded in the literature using the following logic program~\cite{pedebr07a,fapfle11a,mupira90a,kemstu91a,kroysa97a}:
\begin{flalign}
&\texttt{\small ctrStk(C1,C1,C2,P)} \ruleo \texttt{\small ownsStk(C1,C2,P)}.&
\\
&\texttt{\small ctrStk(C1,C2,C3,P)} \ruleo \texttt{\small controls(C1,C2),}&
\notag
\\
&\phantom{\texttt{\small ctrStk(C1,C2,C3,P)} \ruleo}\texttt{\small ownsStk(C2,C3,P)}.
\\
&\texttt{\small controls(C1,C3)} \ruleo \texttt{\small company(C1), company(C3),}&
\notag
\\
&\hspace*{32pt}\texttt{\small \#sum\{P,C2\,:\,ctrStk(C1,C2,C3,P)\}\,>\,50}.&
\label{eq:rule.control}
\end{flalign}%
where atom~$\texttt{ownsStk(C1,C2,P)}$ means that company~$\texttt{C1}$ directly owns~$\texttt{P\%}$ of the shares of company~$\texttt{C2}$;
$\texttt{ctrStk(C1,C2,C3,P)}$ means that company~$\texttt{C1}$ controls~$\texttt{P\%}$ of the shares of company~$\texttt{C3}$ through company~$\texttt{C2}$ that it controls; and~$\texttt{controls(C1,C3)}$ means that company~$\texttt{C1}$ controls company~$\texttt{C3}$.
Another area where allowing recursive aggregates is important is in the study of \emph{strong equivalence}~\cite{lipeva01a,lipeva07a}.
The strong equivalence problem consists of determining whether two programs have the same behavior in any context.
Even if the programs we are analyzing do not contain recursion, adding some context may introduce it.

In this paper, we show that the translation introduced by~\citeauthor{fahali22a} can also be used for programs with recursive aggregates if we interpret functions in an intensional way~\cite{linwan08a,cabalar11a,lifschitz12a,balduccini13a,barlee19a}.
We focus on the Abstract Gringo~\cite{gehakalisc15a} generalization of the semantics by~\citet{ferraris11a}, which is used in the answer set solver~\texttt{clingo}, and the semantics by~\citet{fapfle11a}, which are used in the answer set solver~\texttt{dlv}.
We prove that the translation introduced by~\citeauthor{fahali22a} coincides with the Abstract Gringo semantics when we interpret the function symbols representing sets according to the semantics for intensional functions by~\citeauthor{barlee19a}.
For~\texttt{dlv}, we introduce a similar translation, which uses a second form of negation.
We show how we can use these translations to express the strong equivalence of the two programs and how to reduce this problem to reasoning in classical FO logic.
%


\section{Preliminaries}\label{sec:preliminaries}
We start by reviewing the syntax of programs with aggregates and presenting an extension of the logic of Quantified Here-and-There~\cite{peaval08a} with intensional functions that is suited for programs with aggregates.

\paragraph{Syntax of programs with aggregates.}
We follow here the presentation by~\citet{fahali22a}.
We assume a {\em (program) signature} with three countably infinite sets of symbols:
\emph{numerals}, \emph{symbolic constants} and \emph{program variables}.
We also assume a 1-to-1 correspondence between numerals and integers; the numeral corresponding to an integer~$n$ is denoted by~$\numeral{n}$.
\emph{Program terms} are either numerals, symbolic constants, variables or one of the special symbols~$\myinf$ and~$\mysup$.
A program term (or any other expression) is~\emph{ground} if it contains no variables.
We assume that a total order on ground terms is chosen such that
\begin{itemize}
	\item \strut $\myinf$ is its least element and $\mysup$ is its greatest element,
	\item for any integers~$m$ and $n$, $\numeral{m} < \numeral{n}$ iff $m < n$, and
	\item for any integer~$n$ and any symbolic constant~$c$, $\numeral{n} < c$.
\end{itemize}
An \emph{atom} is an expression of the form~$p(\boldt)$, where $p$~is a symbolic constant and~$\boldt$ is a list of program terms.
A \emph{comparison} is an expression of the form~$t \prec t'$, where $t$ and $t'$ are program terms and~$\prec$ is one of the  \emph{comparison symbols}:%
\begin{align}
	\label{rel:1}
	= \quad \neq \quad < \quad > \quad \leq \quad \geq
\end{align}
An \emph{atomic formula} is either an atom or a comparison.
A \emph{basic literal} is an atomic formula possibly preceded by one or two occurrences of \emph{not}.
An {\em aggregate element} has the form
\begin{gather}
    t_1,\dots,t_k : l_1,\dots,l_m
    \label{eq:agel:2}
\end{gather}
where each $t_i$ ($1\leq i\leq k$) is a program term and each $l_i$ ($1\leq i\leq m$) is a basic literal.
%
%
%
%
%
An {\em aggregate atom} is of form
    ${\texttt{\#op}\{E\} \prec u}$
where~$\texttt{op}$ is an operation name, $E$ is an aggregate element,
$\prec$ is one of the comparison symbols in~(1), and $u$ is a program term, called \emph{guard}.
We consider operation names~$\texttt{count}$ and~$\texttt{sum}$.
For example, the expression
$$
\texttt{\#sum\{P,C2\,:\,ctrStk(C1,C2,C3,P)\}\,>\,50}
$$
in the body of rule~\eqref{eq:rule.control} is an aggregate atom.
An {\em aggregate literal} is an aggregate atom possibly preceded by one or two occurrences of~{\em not}.
A \emph{literal} is either a basic literal or an aggregate literal.
A \emph{rule} is an expression of the form
\begin{align}
	\label{rule:4}
	\head \ruleo B_1, \dots, B_n,
\end{align}
where
$\head$ is an atom or symbol~$\bot$, and each $B_i$ is a literal.
%

We call the symbol $\hbox{:-}$ the {\em rule operator}. We call the left-hand side of the rule operator the {\em head}, the right-hand side of the rule operator the {\em body}.
%
%
When the head of the rule is an atom we call the rule \emph{normal},
and when it is the symbol~$\bot$ we call it a \emph{constraint}.
When the body of a normal rule is empty, we call the rule a \emph{fact}.
%
A {\em program} is a set of rules.

Each operation name~$\texttt{op}$ is associated with a function~$\widehat{\texttt{op}}$  that maps every set of tuples of ground terms to a ground term.
%
%
%
If the first member of a tuple~$\boldt$ is a numeral~$\numeral{n}$ then we say that integer~$n$ is the weight of~$\boldt$, otherwise the weight of~$\boldt$ is~$0$.
For any set~$\Delta$ of tuples of ground terms,
\begin{itemize}
    \item $\widehat{\texttt{count}}(\Delta)$
    is the numeral corresponding to the cardinality of~$\Delta$,
    if~$\Delta$ is finite; and $\mysup$ otherwise.%

    \item $\widehat{\texttt{sum}}(\Delta)$
    is the numeral corresponding to the sum of the weights of all tuples in~$\Delta$,
    if~$\Delta$ contains finitely many tuples with non-zero weights; and $0$ otherwise.%
    %
    If~$\Delta$ is empty, then~${\widehat{\texttt{sum}}(\Delta) = 0}$.
\end{itemize}
Though we illustrate the semantics of aggregates using the operation names~$\texttt{count}$ and~$\texttt{sum}$, the semantics can be extended to other operation names by adding the appropriate functions~$\widehat{\texttt{op}}$~\cite{fahali24a}.

\paragraph{Many-sorted logic and extended FO formulas.}
A many-sorted signature consists of symbols of three
kinds---\emph{sorts}, \emph{function constants}, and
\emph{predicate constants}.
A reflexive and transitive \emph{subsort}
relation is defined on the set of sorts.
A tuple $s_1,\dots,s_n$ ($n\geq 0$) of \emph{argument sorts} is assigned
to every function constant and to every predicate constant; in addition, a
\emph{value sort} is assigned to every function constant.
Function constants with $n=0$ are called \emph{object constants}.
For every sort, an infinite sequence of \emph{object variables} of that sort is chosen.
\emph{Terms} and \emph{atomic formulas} over a (many-sorted) signature~$\sigma$ are defined as usual with the consideration that the sort of a term must be a subsort of the sort of the function or predicate constant of which it is an argument.
\emph{Extended First\nobreakdash-Order formulas} over~$\sigma$ are formed from atomic formulas
and the 0-place connective~$\bot$ (falsity) using the unary connective~$\sneg$, the binary
connectives $\wedge$, $\vee$, $\to$ and the quantifiers $\forall$, $\exists$.
We define the usual abbreviations:
$\neg F$ stands for $F\to\bot$ and
$F\lrar G$ stands for $(F\to G)\wedge (G\to F)$.
%
We have two negation symbols \mbox{($\neg$ and~$\sneg$)} and both correspond to classical negation in the context of classical FO logic.
The symbol~$\neg$ represents standard negation in the logic of \mbox{Here-and-There} and corresponds to default negation in logic programs under the \clingo\ semantics, while symbol~$\sneg$ is a new connective and it represents default negation under the \dlv\ semantics.
%
%
\emph{Interpretations}, \emph{sentences}, \emph{theories}, \emph{satisfaction} and~\emph{models} are defined as usual with the additional condition that~${I \models \sneg F}$ iff~$I \not\models F$.
A \emph{standard} FO formula (resp. sentence, theory) is a formula (resp. sentence, theory) without the new operator~$\sneg$.

\paragraph{Stable Model Semantics with Intensional Functions.}
%
Let~$I$ and~$H$ be two interpretations of a signature~$\sigma$ and~$\P$ and~$\F$ respectively be sets of predicate and function constants of~$\sigma$.
We write~$H \lesseq I$ if
\begin{itemize}
    \item $H$ and~$I$ have the same universe for each sort;
    \item ${p^H \subseteq p^I}$ for every predicate constant~$p$ in~$\P$ and\\${p^H = p^I}$ for every predicate constant~$p$ not in~$\P$; and
    \item $f^H = f^I$ for every function constant~$f$ not in~$\F$.
\end{itemize}
%
%
If~$I$ is an interpretation of a signature~$\sigma$ then
by~$\sigma^I$ we denote the signature
obtained from~$\sigma$ by adding, for every element $d$ of a domain $|I|^s$,
its \emph{name} $d^*$ as an object constant of sort~$s$.
An \emph{\htinterp} of~$\sigma$ is a pair
$\langle H,I\rangle$, where~$H$ and~$I$ are interpretations
of~$\sigma$ such that~${H \lesseq I}$.
(In terms of many\nobreakdashes-sorted Kripke models, $I$ is the there-world, and~$H$
is the here-world).
The satisfaction relation~$\modelsht$ between an
HT\nobreakdash-interpretation $\langle H, I\rangle$ of~$\sigma$
and a sentence~$F$ over~$\sigma^I$ is defined recursively as follows:
\begin{itemize}
\item
  $\langle H, I\rangle \modelsht p(\boldt)$,
  if $I \models p(\boldt)$ and $H \models p(\boldt)$;

\item
$\langle H, I\rangle \modelsht t_1=t_2$ if $t_1^I=t_2^I$ and~$t_1^H=t_2^H$;

\item $\tuple{H,I} \modelsht \sneg F$ if both~$I \not\models F$ and~$H \not\models F$;


\item
$\langle H, I\rangle \modelsht F\land G$ if
$\langle H, I\rangle \modelsht F$ and
$\langle H, I\rangle \modelsht G$;
\item
$\langle H, I\rangle \modelsht F\lor G$ if
$\langle H, I\rangle \modelsht F$ or
$\langle H, I\rangle \modelsht G$;
\item
  $\langle H, I\rangle \modelsht F\to G$ if
  $I \models F\to G$,
    and\\\hspace*{87pt}
  $\langle H, I\rangle \not\modelsht F$ or $\langle H, I\rangle \modelsht G$;
\item
  $\langle H, I\rangle\modelsht\forall X\,F(X)$
 if $\langle H, I\rangle\modelsht F(d^*)$
  \\
  for each~$d\in|I|^s$, where~$s$ is the sort of~$X$;
\item
  $\langle H, I\rangle\modelsht\exists X\,F(X)$
 if $\langle H, I\rangle\modelsht F(d^*)$
  \\
  for some~$d\in|I|^s$, where~$s$ is the sort of~$X$.

\end{itemize}
If $\langle H, I\rangle \modelsht F$ holds, we say that $\langle H, I\rangle$ \emph{ht-satisfies}~$F$ and that $\langle H, I\rangle$ is an \emph{\htmodel} of~$F$.
If it is clear from the context that the $\modelsht$ entailment relation is referred to, we will simply say that $\langle H, I\rangle$ \emph{satisfies}~$F$.
We say that~$\tuple{H,I}$ \emph{satisfies}
a set of sentences~$\Gamma$ if it satisfies every sentence~$F$ in~$\Gamma$.
%

We write~${H \less I}$ if~${H \lesseq I}$ and~${H \neq I}$.
A model~$I$ of a set~$\Gamma$ of sentences is called \emph{stable} if there is no~${H \less I}$ such that~$\tuple{H,I}$ satisfies~$\Gamma$.
For finite standard theories, this definition of stable models coincides with the definition of one by~\citet{barlee19a} when sets~$\P$ and~$\F$ respectively contain the intensional predicate and function constants.
For (possibly infinite) standard theories with~$\F = \emptyset$, each stable model~$I$ corresponds to the equilibrium 
model~$\tuple{I,I}$ by~\citet{peaval08a}.

\section{Logic Programs With Aggregates as Extended Many-Sorted First-Order Sentences}\label{sec:translations}

We present here translations~$\taug$ and~$\taud$ that turn a program into extended FO sentences with equality over a signature~$\sigma(\P,\S)$ of {\em three sorts}; $\P$ and~$\S$ are sets of predicate and \emph{set symbols}, respectively.
Superscripts~$\mathit{cli}$ and~$\mathit{dlv}$ refer to the semantics of~\clingo\ and \dlv, respectively.
\paragraph{Target Signature.}
A \emph{set symbol} is a pair~$E/\boldX$, where $E$~is an aggregate element and~$\boldX$ is a list of variables occurring in~$E$.
For brevity's sake, each set symbol~$E/\boldX$ is assigned a short name~$|E/\boldX|$.
The target signature is of three sorts.
The first sort is called the \emph{general sort} (denoted~$\sortsuper$);  all program terms are of this sort.
The second sort is called the {\em tuple sort} (denoted~$\sorttuple$); it contains entities that are \emph{tuples} of objects of the general sort.
The third sort is called the {\em set sort} (denoted~$\sortset$); it contains entities that are \emph{sets} of elements of the second sort, that is, sets of tuples of objects of the general sort.
%
%
Signature~$\sigma(\P,\S)$ contains:
\begin{enumerate}
\item all ground terms as object constants of the general sort;

\item all predicate symbols in~$\P$ with all arguments of the general sort;

\item\label{en:1:3} comparison symbols other than equality as binary predicate symbols whose arguments are of the general sort;

\item predicate constant~$\in\!\!/2$ with the first argument of the sort tuple and the second argument of the sort set;

\item function constant~$\ftuple/k$ with arguments of the general sort and value of the tuple sort for each set symbol~$E/\boldX$ in~$\S$ with~$E$ of the form of~\eqref{eq:agel:2};

\item\label{en:2} unary function constants~$\fcount$ and~$\fsum$ whose argument is of the set sort and whose value is of the general sort;

\item\label{en:3:5} for each set symbol~$E/\boldX$ in~$\S$ where~$n$ is the number of variables in~$\boldX$, function constants~$\setsg_{|E/\boldX|}$ and~$\setsd_{|E/\boldX|}$
with~$n$ arguments of the general sort and whose value is of the set sort.




\end{enumerate}
We assume that~$\P$ is the set of intensional predicates and that the set of intensional functions is the set of all function symbols corresponding to set symbols~in~$\S$.
We use infix notation in constructing atoms that utilize predicate symbols of comparisons (${>,\geq,<,\leq,\neq}$) and the set membership predicate~$\in$.
%
%
Function constants~$\setsg_{|E/\boldX|}$ and~$\setsd_{|E/\boldX|}$ are used to represent sets occurring in aggregates for the \clingo\ and \dlv\ semantics, respectively.
Each of these function constants maps an $n$\nobreakdash-tuple of ground terms~$\boldx$ to the set of tuples represented%
\footnote{For a tuple $\boldX$ of distinct variables, a tuple~$\boldx$ of ground terms of the same length as~$\boldX$, and an expression~$\alpha$, by~$\alpha^\boldX_\boldx$ we denote the expression obtained from~$\alpha$ by substituting~$\boldx$ for~$\boldX$.}
by~$E^\boldX_\boldx$.
%
%
These claims are formalized below.
%
%
%

About a predicate symbol~$p/n$, we say that it \emph{occurs} in a program~$\Pi$ if there is an atom of the form~$p(t_1,\dotsc,t_n)$ in~$\Pi$.
For set symbols, we need to introduce first the concepts of global variables and set symbols.
A variable is said to be \emph{global} in a rule if
\begin{enumerate}
    \item it occurs in any non-aggregate literal, or
    \item it occurs in a guard of any aggregate literal.
\end{enumerate}
We say that set symbol~$E/\boldX$ occurs in rule~$R$ if this rule contains an aggregate literal with the aggregate element~$E$ and~$\boldX$ is the lexicographically ordered list of all variables in~$E$ that are global in~$R$.
We say that~$E/\boldX$ occurs in a program~$\Pi$ if~$E/\boldX$ occurs in some rule of the program.
For instance, in rule~\eqref{eq:rule.control}
the global variables are~\texttt{C1} and~\texttt{C3}.
Set symbol~$E_{ctr}/\boldX_{ctr}$  occurs in this rule where~$E_{ctr}$ stands for the aggregate element~$\texttt{P,C2\,:\,ctrStk(C1,C2,C3,P)}$ and~$\boldX_{ctr}$ is the list of variables~$\texttt{C1},\texttt{C3}$.
We denote by~$\setsg_{ctr}/2$ and~$\setsd_{ctr}/2$ the function symbols associated with this set symbol for the \clingo\ and \dlv\ semantics, respectively.

When discussing a program~$\Pi$, we assume a signature~$\sigma(\P,\S)$ such that~$\P$ and~$\S$ are the sets that contain all predicate symbols and all set symbols occurring in~$\Pi$, respectively.
%
Furthermore, when it is clear from the context, we write just~$\sigma$ instead of~$\sigma(\P,\S)$.

\paragraph{Translations.}
We now describe translations
that convert a program into a set of extended FO sentences.
We use~$\tau^x_\boldZ$ and~$\tau^x$ to denote the rules that are common to both translations when~$x$ is replaced by either~$\cli$ or~$\mathit{dlv}$.
Given a list~$\boldZ$  of global variables in some rule $R$,
we define~$\taug_\boldZ$ and~$\taud_\boldZ$ for all elements of $R$ as follows.
\begin{enumerate}
    \item
    for every atomic formula~$A$ occurring outside of an aggregate literal, its translation~$\tau^x_\boldZ A$ is~$A$ itself; $\tau^x_\boldZ \bot$ is~$\bot$;
    \item
    for an aggregate atom~$A$ of form~$\texttt{\#count}\{E\} \prec u$ or~$\texttt{\texttt{\#sum}}\{E\} \prec u$, its translation~$\tau^x_\boldZ$ is the atom
    \begin{gather*}
        \fcount(\sets^x_{|E/\boldX|}(\boldX)) \prec u
        \text{\normalsize\ or }
        \fsum(\sets^x_{|E/\boldX|}(\boldX)) \prec u
        \end{gather*}
respectively, where~$\boldX$ is the lexicographically ordered list of the variables in~$\boldZ$ occurring in~$E$;

    \item for every (basic or aggregate) literal of the form~$\Not A$
its translation~$\taug_\boldZ (\Not A)$ is~$\neg \taug_\boldZ A$
and
its translation~$\taud_\boldZ (\Not A)$ is~$\sneg \taug_\boldZ A$;
for every literal of the form~$\Not\Not  A$
its translation~$\taug_\boldZ (\Not\Not A)$ is~$\neg\neg  \taug_\boldZ A$
and
its translation~$\taud_\boldZ (\Not\Not A)$ is~$\sneg\sneg  \taud_\boldZ A$.
%
\label{item:tauY*.last}
\end{enumerate}
We now define the translation~$\tau^x$ as follows:
\begin{enumerate}
    \setcounter{enumi}{3}
    \item for every rule~$R$ of form (4), its translation~$\tau^x R$ is the universal closure of
    $$
    \tau^x_\boldZ B_1 \wedge \dots \wedge \tau^x_\boldZ B_n \to \tau^x_\boldZ \head,
    $$
    where~$\boldZ$ is the list of the global variables of~$R$.
    \item for every program~$\Pi$, its translation~$\tau^x \Pi$ is the theory containing~$\tau^x R$ for each rule~$R$ in~$\Pi$.
\end{enumerate}
$\taug$ and~$\taud$ only differ in the translation of negation and the use of different function constants for set symbols.
For instance, rule~\eqref{eq:rule.control} is translated into the universal closure of
\begin{gather}
    \begin{aligned}
        \mathit{company}(C_1) \wedge \mathit{company}(C_3)
        &
        \\
        \wedge\, \fsum(\sets^x_{ctr}(C_1,C_3)) > 50
        &
        \to \mathit{controls}(C_1,C_3)
    \end{aligned}
    \label{eq:rule.control.translated}
\end{gather}
where variables~$C_1$ and~$C_3$ are of the general sort, and~$x$ is either~$\cli$ or~$\mathit{dlv}$ depending on the semantics considered.


\paragraph{Standard interpretations.}

A \emph{\finterp~$I$} is an interpretation of~$\sigma(\P,\S)$  that satisfies the following \emph{conditions}:
\begin{enumerate}
    \item universe $|I|^{\sortsuper}$ is the set containing all ground terms of the general sort;

    \item \label{interp.domain.tuple} universe $|I|^{\sorttuple}$ is the set of all tuples of form $\tuple{d_1,\dotsc,d_k}$ with~$d_i\in |I|^{\sortsuper}$ for each set symbol~$E/\boldX$ in~$\S$ with~$E$ of the form of~\eqref{eq:agel:2};

    \item every element of~$|I|^{\sortset}$ is a subset of~$|I|^{\sorttuple}$;




    \item $I$ interprets each ground program term as itself;

    \item $I$ interprets predicate symbols $>,\geq,<,\leq$ according to the total order chosen earlier;


    \item $I$ interprets each tuple term of form~$\ftuple(t_1,\dotsc,t_k)$ as the tuple~$\tuple{t_1^I,\dotsc,t_k^I}$;

    \item $\in^I$ is the set of pairs~$(t,s)$ s.t. tuple~$t$ belongs to set~$s$;





    \item for term $\tset$ of sort $\sortset$, $\fcount(\tset)^I$ is $\widehat{\texttt{count}}(\tset^I)$;

    \item for term $\tset$ of sort $\sortset$, $\fsum(\tset)^I$ is $\widehat{\texttt{sum}}(\tset^I)$;
\end{enumerate}
An \emph{\agginterp} is a standard interpretation~$I$ satisfying, for every set symbol~$E/\boldX$ in~$\S$ with~$E$ of the form of~\eqref{eq:agel:2} and for all~$x \in \{ \cli, \mathit{dlv} \}$, that~$\sets_{|E/\boldX|}^x(\boldx)^I$ is the set of all tuples of the form $\tuple{(t_1)^{\boldX\boldY}_{\boldx\boldy},\dots,(t_k)^{\boldX\boldY}_{\boldx\boldy}}$ such that $I$ satisfies~$\tau^x({l_1})^{\boldX\boldY}_{\boldx\boldy} \wedge \dots \wedge \tau^x({l_m})^{\boldX\boldY}_{\boldx\boldy}$.

For instance, the program representing the Company Control problem has a unique set symbol that is associated with the function symbols~$\sets^x_{ctr}/2$ (${x \in \{ \cli, \mathit{dlv}\}}$).
If~$I$ is an \agginterp such that $\mathit{ctrStk}^I$ is the set containing~$(c_1,c_2,c_3,10)$ and~$(c_1,c_4,c_3,20)$, it follows that~$\mathit{\sets^x_{ctr}(c_1,c_3)}^I$ (with~$x \in \{ \cli, \mathit{dlv}\}$) is the set containing tuples~$\tuple{10,c_2}$ and~$\tuple{20,c_4}$.

An \htinterp~$\tuple{H,I}$ is said to be \emph{standard} if both~$H$ and~$I$ are standard.
An \emph{\agghtinterp} is a standard~\htinterp~$\tuple{H,I}$ satisfying that~$I$ is an \agginterp and the following conditions for every set symbol~$E/\boldX$ in~$\S$ with~$E$ of the form of~\eqref{eq:agel:2} :
\begin{itemize}

    \item $\setsg_{|E/\boldX|}(\boldx)^H$ is the set of all tuples of form\\$\tuple{(t_1)^{\boldX\boldY}_{\boldx\boldy},\dots,(t_k)^{\boldX\boldY}_{\boldx\boldy}}$ such that~$\tuple{H,I}$ satisfies\\ $\taug({l_1})^{\boldX\boldY}_{\boldx\boldy} \wedge \dots \wedge \taug({l_m})^{\boldX\boldY}_{\boldx\boldy}$; and

    \item $\setsd_{|E/\boldX|}(\boldx)^H$ is the set of all tuples of form\\$\tuple{(t_1)^{\boldX\boldY}_{\boldx\boldy},\dots,(t_k)^{\boldX\boldY}_{\boldx\boldy}}$ such that~$H$ satisfies\\$\taud({l_1})^{\boldX\boldY}_{\boldx\boldy} \wedge \dots \wedge \taud({l_m})^{\boldX\boldY}_{\boldx\boldy}$.
\end{itemize}
where~$\boldY$ is the lexicographically ordered list of the variables occurring in~$E$ that are not in~$\boldX$.
Let us consider now an \agghtinterp~$\tuple{H,I}$ where~$I$ is as described above and~$\mathit{ctrStk}^H$ is the set containing~$(c_1,c_2,c_3,10)$.
Then, $\mathit{\sets^x_{ctr}(c_1,c_3)}^H$ is the set containing tuples~$\tuple{10,c_2}$.
In this example, there is no difference between the semantics of \clingo\ and \dlv.
As an example of where these semantics differ, consider an \agghtinterp~$\tuple{H,I}$ with~${p^H = r^H = \emptyset}$ and~${p^I = q^I = q^H = r^I= \{1\}}$, and~rule
\begin{gather}
    \texttt{\small p(1)\::-\:\#sum\{X\::\:q(X),\:not\:r(X)\}\:<\:1.}
    \label{eq:clingo.dlv.difference.rule}
\end{gather}
This rule is translated into the sentences
\begin{align}
    \fsum(\setsg_{qr}) < 1
    &\to p(1)
    \label{eq:difference.clingo.semantics}
    \\
    \fsum(\setsd_{qr}) < 1
    &\to p(1)
    \label{eq:difference.dlv.semantics}
\end{align}
for the \clingo\ and \dlv\ semantics, respectively.
It is clear that~$I$ satisfies both rules because~$1$ belongs to~$p^I$.
However, when considering the \agghtinterp~$\tuple{H,I}$, only the second rule is satisfied.
On the one hand, $(\sets^x_{qr})^I$ (with~$x \in \{\cli, \mathit{dlv}\}$) is the empty set.
Furthermore,  $(\setsg_{qr})^H$ is also the empty set because~${\tuple{H,I} \not\models \neg r(1)}$, and the antecedent of~\eqref{eq:difference.clingo.semantics} is satisfied.
Then, the rule is not satisfied because the consequent is not satisfied due to~$1$ not belonging to~$p^H$.
On the other hand, $(\setsd_{qr})^H$ is the set containing~$1$ because~${H \models q(1) \wedge \sneg r(1)}$.
Hence, $\tuple{H,I}$ does not satisfy the antecedent of~\eqref{eq:difference.dlv.semantics} and the rule is satisfied.
%



We now define stable models for programs with aggregates.
A model of a formula or theory that is also an \agginterp is called an~\emph{\aggmodel} and an \agghtinterp that satisfies a formula or theory is called an \emph{\agghtmodel}.

\begin{definition}
    \label{def:agg.stable.model}
    An \aggmodel $I$ of~$\Gamma$ is an \emph{\aggstable\ model} of~$\Gamma$ if there is no \agghtmodel~$\tuple{H,I}$ with~${H \less  I}$.
\end{definition}

\section{Correspondence with \clingo\ and \dlv}\label{sec:relation-with-semantics}

We establish now the correspondence between the semantics of programs with aggregates introduced in the previous section and the semantics of the solver~\clingo, named Abstract Gringo~\citep{gehakalisc15a}, and the solver~\dlv, which is based on the FLP\nobreakdash-reduct~\cite{fapfle11a}.
These semantics are stated in terms of infinitary formulas following the work by~\citet{harlif19a}.


\vspace{3pt}
\noindent{\bf Infinitary Formulas.}
We extend the definitions of infinitary logic~\cite{truszczynski12a} to formulas with intensional functions and the~$\sneg$ connective.
For every nonnegative integer~$r$,
\emph{infinitary ground formulas of rank~$r$} are
defined recursively:
\begin{itemize}
\item every ground atom in $\sigma$ is a formula of rank~0,
\item if $\HF$ is a set of formulas, and~$r$ is the smallest nonnegative integer that is greater than the ranks of all elements of $\HF$,
then $\HF^\land$ and $\HF^\lor$ are formulas of rank~$r$,
\item if $F$ and $G$ are formulas, and~$r$ is the smallest nonnegative
integer that is greater than the ranks of~$F$ and~$G$, then $F\rar G$ is a
formula of rank~$r$,
\item if $F$ is a formula, and~$r$ is the smallest nonnegative
integer that is greater than the rank $F$, then $\sneg F$ is a formula of rank~$r$.
\end{itemize}
We write $\{F,G\}^\land$ as $F\land G$,
$\{F,G\}^\lor$ as $F\lor G$, and $\emptyset^{\lor}$ as~$\bot$.

We extend the satisfaction relation for \htinterps\ to infinitary formulas by adding the following two conditions to the definition for FO formulas:
\begin{itemize}
    \item ${\tuple{H,I} \modelsht\HF^\land}$ if for every formula $F$ in~$\HF$,
    ${\tuple{H,I} \modelsht F}$,
    \item ${\tuple{H,I} \modelsht\HF^\lor}$ if there is a formula $F$ in~$\HF$
    such that ${\tuple{H,I} \modelsht F}$,
\end{itemize}
We write~$I \models F$ if~$\tuple{I,I} \modelsht F$.

\citet{truszczynski12a} defines the satisfaction of infinitary formulas with respect to sets of ground atoms instead of FO interpretations.
Such a satisfaction relation for infinitary formulas can be defined when we have no intensional functions.
An infinitary ground formula is \emph{propositional} if it does not contain intensional functions.
For a signature~$\sigma$, by~$\sigma^p$ we denote the set of all ground atoms over~$\sigma$ that do not contain intensional functions.
\renewcommand{\I}{\mathcal{A}}
Subsets of a propositional signature $\sigma^p$ are called \emph{propositional interpretations}.
The satisfaction relation between a propositional interpretation~$\I$ and an infinitary propositional formula is defined recursively:
\begin{itemize}
\item for every ground atom $A$ from $\sigma$, $\I\models A$ if $A$ belongs to~$\I$,
\item $\I\models\HF^\land$ if for every formula $F$ in~$\HF$,
$\I\models F$,
\item $\I\models\HF^\lor$ if there is a formula $F$ in~$\HF$
such that $\I\models F$,
\item $\I\models F\rar G$ if $\I\not\models F$ or $\I\models G$,
\item $\I\models \sneg F$ if $\I \not\models F$.
\end{itemize}
In the following, if~$I$ is an interpretation, then~$\mathcal{I}$ denotes the set of atomic formulas of~$\sigma^p$ satisfied by~$I$.
With this notation, the following result is easily proved by induction.

\begin{proposition}\label{prop:infinitary.interpretations.cl}
    Let~$F$ be an infinitary propositional formula.
    Then, $I \models F$ iff~$\mathcal{I} \models F$.
\end{proposition}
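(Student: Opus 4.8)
The plan is to prove the statement by structural induction on the rank of the infinitary propositional formula~$F$, exploiting the fact that the satisfaction relation~$\modelsht$ for an \htinterp~$\tuple{H,I}$ collapses to ordinary satisfaction when both worlds coincide. Recall that the proposition asserts~$I \models F$ iff~$\mathcal{I} \models F$, where by definition~$I \models F$ means~$\tuple{I,I} \modelsht F$, and~$\mathcal{I}$ is the set of atomic formulas of~$\sigma^p$ satisfied by~$I$. Since~$F$ is propositional (contains no intensional functions), every atom occurring in~$F$ is a ground atom of~$\sigma^p$, so the two sides of the biconditional are comparing the Kripke-style semantics of~$\tuple{I,I}$ against the propositional-interpretation semantics of the atom set~$\mathcal{I}$.

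First I would set up the induction on rank. For the base case (rank~$0$), $F$ is a ground atom~$A$ of~$\sigma^p$. By the clause for atoms in~$\modelsht$, we have~$\tuple{I,I} \modelsht A$ iff~$I \models A$ and~$I \models A$, i.e.\ simply~$I \models A$; and by the definition of~$\mathcal{I}$ this holds iff~$A \in \mathcal{I}$, which by the propositional clause is exactly~$\mathcal{I} \models A$. This settles the base case. For the inductive step I would treat each connective. The conjunction case~$\HF^\land$ and disjunction case~$\HF^\lor$ are immediate: the~$\modelsht$ clauses for~$\tuple{I,I}$ and the propositional clauses for~$\mathcal{I}$ have identical quantifier structure over the member formulas, so the induction hypothesis applied to each~$F \in \HF$ transfers directly. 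The negation case~$\sneg F$ uses that~$\tuple{I,I} \modelsht \sneg F$ iff~$I \not\models F$ and~$I \not\models F$, i.e.\ iff~$\tuple{I,I} \not\modelsht F$, which by the induction hypothesis is iff~$\mathcal{I} \not\models F$, matching the propositional clause~$\mathcal{I} \models \sneg F$ iff~$\mathcal{I} \not\models F$.

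The only case requiring care is implication~$F \to G$. On the propositional side,~$\mathcal{I} \models F \to G$ iff~$\mathcal{I} \not\models F$ or~$\mathcal{I} \models G$. On the \htinterp side, the~$\modelsht$ clause for implication reads: $\tuple{I,I} \modelsht F \to G$ iff~$I \models F \to G$ (the there-world condition) \emph{and} ($\tuple{I,I} \not\modelsht F$ or~$\tuple{I,I} \modelsht G$). The key observation is that when~$H = I$ the there-world condition~$I \models F \to G$ is subsumed by the here-condition, because for~$\tuple{I,I}$ classical satisfaction and~$\modelsht$ satisfaction agree (this is itself established by the same induction, or by the standard fact that~$\tuple{I,I}$ is a total model). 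Concretely, I would argue that~$\tuple{I,I} \modelsht F$ iff~$I \models F$ for the subformula~$F$, so the conjunct~$I \models F \to G$ adds nothing beyond~$\tuple{I,I} \not\modelsht F$ or~$\tuple{I,I}\modelsht G$; applying the induction hypothesis to~$F$ and~$G$ then yields~$\mathcal{I} \not\models F$ or~$\mathcal{I} \models G$, as required.

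I expect the main obstacle to be precisely this interplay between the two conjuncts in the~$\modelsht$ definition of implication. One must verify cleanly that, restricted to~$\tuple{I,I}$, the there-world clause~$I \models F\to G$ is redundant given the here-world disjunction — this is where the fact that~$H=I$ makes the model \emph{total} is used, and it is worth noting that this redundancy is exactly what fails for a genuine \htinterp with~$H \less I$, which is why the proposition is specific to the~$\tuple{I,I}$ case (equivalently, to~$I \models F$). Since~$F$ is propositional, no intensional-function terms appear, so the interpretation of function constants and the set-valued machinery of \agginterps play no role here; the argument is a routine structural induction once the implication case is handled correctly.
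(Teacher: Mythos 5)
Your proof is correct and takes essentially the approach the paper intends: the paper offers no explicit proof, remarking only that the result ``is easily proved by induction,'' and your rank induction with a case analysis on the connectives is exactly that argument. You also correctly isolate the one non-trivial point --- that in the implication clause the there-world conjunct $I \models F \to G$ is redundant at a total interpretation $\tuple{I,I}$, which is the infinitary analogue of Item~1 of Proposition~\ref{prop:persistence} in the appendix.
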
%

\paragraph{Grounding.}
The \emph{grounding} of a FO sentence allows us to replace quantifiers with infinitary conjunctions and disjunctions.
Formally, the \emph{grounding of a First\nobreakdash-Order sentence~$F$ with respect to an interpretation~$\Int$ and sets~$\P$ and~$\F$ of intensional predicate and function symbols} is defined as follows:
\begin{itemize}
\item $\gr{\Int}{\bot} \text{ is } \bot$;

\item $\gr{\Int}{p(\boldt)} \text{ is } p(\boldt)$ if~$p(\boldt)$ contains intensional symbols;

\item $\gr{\Int}{p(\boldt)} \text{ is } \top$
if $p(\boldt)$ does not contain intensional symbols and~${I \models p(\boldt)}$;
and ${\gr{\Int}{p(\boldt)} \text{ is } \bot}$ otherwise;

\item $\gr{\Int}{t_1 = t_2} \text{ is } (t_1 = t_2)$ if $t_1$ or~$t_2$ contain intensional symbols;

\item $\gr{\Int}{t_1 = t_2} \text{ is } \top$ if $t_1$ and~$t_2$ do not contain intensional symbols and $t_1^I = t_2^I$ and $\bot$ otherwise;

\item $\gr{\Int}{\sneg F} \text{ is } \sneg \gr{\Int}{F}$;

\item $\gr{\Int}{F \otimes G} \text{ is } \gr{\Int}{F} \otimes \gr{\Int}{G}$ if $\otimes$ is $\wedge$, $\vee$, or $\to$;
\item $\gr{\Int}{\exists X \, F(X)} \text{ is } \{ \gr{\Int}{F(u)} \mid u \in |\Int|^{s} \}^{\vee}$ if $X$ is a variable of sort~$s$;
\item $\gr{\Int}{\forall X \, F(X)} \text{ is } \{ \gr{\Int}{F(u)} \mid u \in |\Int|^{s} \}^{\wedge}$ if $X$ is a variable of sort~$s$.
\end{itemize}
For a first\nobreakdash-order theory~$\Gamma$,
we define~$\gr{\Int}{\Gamma} = \{ \gr{I}{F} \mid F \in \Gamma \}^\wedge$.
For any first\nobreakdash-order theory~$\Gamma$, $\gr{I}{\Gamma}$ is an  infinitary formula, which may contain intensional functions or the $\sneg$ connective.
We write~$\g I\cdot$ instead of~$\gr I\cdot$ when it is clear from the context.

\begin{proposition}\label{lem:grounding.ht}
    $\langle H,I\rangle \modelsht F$
    iff $\langle H,I\rangle \modelsht \g IF$.
\end{proposition}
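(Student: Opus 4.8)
The plan is to argue by structural induction on~$F$, proving the equivalence simultaneously for \emph{every} \htinterp~$\tuple{H,I}$, so that the induction hypothesis is available not only at $\tuple{H,I}$ but also at the diagonal interpretations $\tuple{I,I}$ and $\tuple{H,H}$ needed to treat the clauses that invoke classical satisfaction. Two auxiliary observations would be set up first. The first is the collapse of Here-and-There at a single world: for any interpretation~$J$ and any (possibly infinitary) formula~$G$, classical satisfaction $J \models G$ coincides with $\tuple{J,J}\modelsht G$; in the infinitary case this is the definition of $J\models G$, and in the first-order case it is the standard property of the logic. The second is a \emph{grounding-insensitivity} lemma: if $H \lesseq I$ then $\g{H}{G}$ and $\g{I}{G}$ are the \emph{same} infinitary formula. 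This follows from a short induction: by the definition of $\lesseq$, $H$ and~$I$ share the universe of every sort and agree on every predicate constant outside~$\P$ and every function constant outside~$\F$, so a non-intensional atom or equality is frozen to the identical $\top$ or $\bot$ under both groundings, an atom or equality containing an intensional symbol is copied verbatim under both, the connective clauses match syntactically, and the quantifier expansions range over the common domains $|H|^{s}=|I|^{s}$.

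For the base cases I would split each atom~$p(\boldt)$ and each equality~$t_1=t_2$ on whether it contains an intensional symbol. If it does, grounding is the identity and the two sides are literally the same formula. If it does not, then $p\notin\P$ and every function symbol in the expression lies outside~$\F$, so $H$ and~$I$ agree on all of them and on the universe; hence $\boldt^{H}=\boldt^{I}$ and $p(\boldt)$ holds in~$H$ iff it holds in~$I$ (and similarly for the equality). The clause for atoms then makes $\tuple{H,I}\modelsht p(\boldt)$ equivalent to $I\models p(\boldt)$, which is precisely the condition under which $\g{I}{p(\boldt)}$ is the formula~$\top$ (satisfied by every \htinterp) rather than~$\bot$. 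The case $F=\bot$ is immediate since $\g{I}{\bot}=\bot$.

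The cases for $\wedge$, $\vee$, and the quantifiers follow directly from the induction hypothesis, using for $\exists$ and $\forall$ that $|H|^{s}=|I|^{s}$ so the infinitary disjunction and conjunction produced by grounding range over exactly the witnesses inspected by~$\modelsht$. For $F=G\to K$, the $\modelsht$-clause has two parts. The part ``$\tuple{H,I}\not\modelsht G$ or $\tuple{H,I}\modelsht K$'' transfers verbatim through the induction hypothesis applied to~$G$ and~$K$ at $\tuple{H,I}$. The classical part $I\models G\to K$ I would unfold classically to ``$I\not\models G$ or $I\models K$'' and then convert each conjunct, via collapse and the induction hypothesis at the diagonal $\tuple{I,I}$, into $I\models\g{I}{G}$ and $I\models\g{I}{K}$, which reassemble into $I\models\g{I}{G}\to\g{I}{K}=\g{I}{G\to K}$.

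The delicate case, and the one I expect to be the main obstacle, is~$\sneg$, because its clause demands $I\not\models G$ \emph{and} $H\not\models G$, i.e.\ a classical check at the here-world~$H$ even though the grounding reference is~$I$. The $I$-part is handled exactly as the classical part of~$\to$. For the $H$-part I would pass $H\models G$ through collapse to $\tuple{H,H}\modelsht G$, apply the induction hypothesis at $\tuple{H,H}$ to get $\tuple{H,H}\modelsht\g{H}{G}$, collapse back to $H\models\g{H}{G}$, and finally use grounding-insensitivity to rewrite $\g{H}{G}$ as $\g{I}{G}$. This last rewrite is exactly where the agreement of $H$ and~$I$ on non-intensional symbols is indispensable, and it is what reconciles negation with grounding taken relative to the single reference interpretation~$I$.
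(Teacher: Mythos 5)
Your proof is correct, and its skeleton is the same structural induction the paper uses, but the decomposition is genuinely different and worth comparing. The paper first proves the classical statement ($I \models F$ iff $I \models \g IF$, its Lemma on classical grounding) as a separate prior lemma and then runs the ht\nobreakdash-induction, citing that lemma in the implication and $\sneg$ cases; you instead strengthen the induction hypothesis to quantify over \emph{all} \htinterps and recover the classical statement at the diagonals $\tuple{I,I}$ and $\tuple{H,H}$ via the collapse property ($\tuple{J,J} \modelsht G$ iff $J \models G$, which is Item~1 of the paper's persistence proposition in the first\nobreakdash-order case and the definition of $\models$ in the infinitary case), making the argument self-contained. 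More substantively, your grounding-insensitivity lemma --- $H \lesseq I$ implies $\g HG = \g IG$ as syntactic objects --- is an ingredient the paper's proof silently needs but never states: in its $\sneg$ case the paper justifies ``$H \not\models G$ iff $H \not\models \g IG$'' by citing the classical lemma, yet that lemma as stated only yields $H \models G$ iff $H \models \g HG$, i.e.\ grounding taken with respect to $H$ itself, and bridging $\g HG$ to $\g IG$ is exactly your observation that $H$ and $I$ share the universe of every sort and agree on every symbol outside $\P \cup \F$, so non-intensional atoms freeze identically and quantifier expansions coincide. In short, your route buys self-containment plus a fully explicit treatment of the one delicate step (the classical check at the here-world under grounding relative to the there-world), at the modest cost of carrying the stronger induction hypothesis; the paper's route is shorter on the page because it outsources the classical half and leaves the $\g HG = \g IG$ identification implicit.
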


\paragraph{Standard formulas and minimal models.}
We say that an infinitary propositional formula is \emph{standard} if it does not contain the~$\sneg$ connective.
The definitions of the semantics of \clingo\ and \dlv\ only use standard infinitary formulas and rely on the notion of minimal models.
A propositional interpretation~$\I$ satisfies a set $\HF$ of formulas, in symbols~$\I \models \Gamma$, if it satisfies every formula in~$\HF$.
We say that a set~$\mathcal{A}$ of atoms is a $\subseteq$\nobreakdash-minimal model of a set of infinitary formulas~$\Gamma$,
if $\mathcal{A} \models \Gamma$ and there is no~$\mathcal{B}$  satisfying~${\mathcal{B} \models \Gamma}$ and~${\mathcal{B} \subset \mathcal{A}}$.

\paragraph{Clingo.}
The \emph{FT\nobreakdash-reduct} $F^\I$ of a standard infinitary  formula~$F$ with respect to a propositional interpretation~$\I$ is defined recursively.
If $\I\not\models F$ then $F^\I$ is~$\bot$; otherwise,
\begin{itemize}
\item for every ground atom~$A$, $A^\I$ is $A$
\item $(\HF^\land)^\I = \{G^\I\ |\ G\in\HF\}^\land$,
\item $(\HF^\lor)^\I  = \{G^\I\ |\ G\in\HF\}^\lor$,
\item $(G\rar H)^\I$ is $G^\I\rar H^\I$,
\end{itemize}
We say that a propositional interpretation~$\mathcal{A}$ is an \emph{FT\nobreakdash-stable model} of a formula~$F$ if it is a $\subseteq$\nobreakdash-minimal model of~$F^\mathcal{A}$.
We say that a set~$\mathcal{A}$ of ground atoms is a \emph{clingo answer set} of a program~$\Pi$ if~$\mathcal{A}$ is an FT\nobreakdash-stable model of~$\tau\Pi$ where~$\tau$ is the translation from logic programs to infinitary formulas defined by~\citet{gehakalisc15a}.
The following result states that the usual relation between \htinterps and the FT\nobreakdash-reduct is satisfied in our settings.

\begin{proposition}\label{prop:infinitary.interpretations}
    %
    Let~$F$ be a standard infinitary formula of~$\sigma^p$.
    Then, $\tuple{H,I} \modelsht F$ iff $\mathcal{H} \models F^{\mathcal{I}}$.
\end{proposition}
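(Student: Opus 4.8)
The plan is to argue by induction on the rank of the formula $F$, deriving the biconditional for all standard propositional $F$ of a given rank from the corresponding statement for formulas of strictly smaller rank. Two facts do the heavy lifting. The first is Proposition~\ref{prop:infinitary.interpretations.cl}, which lets me move freely between $I \models F$ and $\mathcal{I} \models F$. The second is the \emph{persistence} of here-and-there satisfaction, i.e.\ that $\tuple{H,I}\modelsht F$ implies $I \models F$; this follows by a routine induction straight from the satisfaction clauses, each of which (for atoms and for implications) explicitly carries the there-world condition, while the remaining clauses merely propagate it through subformulas.

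Since the FT-reduct is defined with a global case split---$F^{\mathcal{I}}$ is $\bot$ whenever $\mathcal{I}\not\models F$---I would first settle the case $\mathcal{I}\not\models F$. Then the right-hand side is $\mathcal{H}\models\bot$, which is false; and on the left, Proposition~\ref{prop:infinitary.interpretations.cl} gives $I\not\models F$, so persistence (contrapositively) yields $\tuple{H,I}\not\modelsht F$. Both sides are false and the equivalence holds. This leaves the case $\mathcal{I}\models F$, in which $F^{\mathcal{I}}$ takes its recursive form and, again by Proposition~\ref{prop:infinitary.interpretations.cl}, $I\models F$.

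In the branch $\mathcal{I}\models F$ I would split on the shape of $F$. For a ground atom $A$ we have $A^{\mathcal{I}}=A$, and the chain $\mathcal{H}\models A$ iff $A\in\mathcal{H}$ iff $H\models A$, combined with the atom clause of $\modelsht$ (whose there-world conjunct is supplied by $I\models A$), closes the base case. For $F=\HF^\land$ and $F=\HF^\lor$ the reduct distributes over the connective, and applying the induction hypothesis to each member of $\HF$---all of strictly smaller rank---transfers satisfaction in both directions. The implication case $F=G\to K$ is the delicate one: $\tuple{H,I}\modelsht G\to K$ requires both $I\models G\to K$ and the disjunction ``$\tuple{H,I}\not\modelsht G$ or $\tuple{H,I}\modelsht K$,'' whereas $\mathcal{H}\models(G\to K)^{\mathcal{I}}=G^{\mathcal{I}}\to K^{\mathcal{I}}$ unfolds to ``$\mathcal{H}\not\models G^{\mathcal{I}}$ or $\mathcal{H}\models K^{\mathcal{I}}$.'' In the branch $\mathcal{I}\models G\to K$, Proposition~\ref{prop:infinitary.interpretations.cl} discharges the there-world conjunct $I\models G\to K$ at no cost, and the induction hypothesis applied to $G$ and to $K$ matches the two surviving disjuncts.

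The main obstacle is exactly this implication clause: one has to make sure that the extra there-world conjunct built into $\modelsht$ for $\to$ is automatically available precisely in the branch where the reduct is nontrivial, and this is what Proposition~\ref{prop:infinitary.interpretations.cl} secures. Everything else is bookkeeping; because $F$ ranges only over standard formulas of $\sigma^p$, neither the $\sneg$ connective nor any intensional function ever appears, so those features of the logic play no role in the argument.
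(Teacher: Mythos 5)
Your proof is correct and takes essentially the same route as the paper's: an induction on the rank of~$F$ with the identical case analysis (ground atoms, $\HF^\land$, $\HF^\lor$, implication). The only difference is organizational---you dispatch the reduct's global $\bot$-branch once up front via Proposition~\ref{prop:infinitary.interpretations.cl} and persistence, whereas the paper folds that branch into its per-case equivalence chains and, in the implication case, applies the induction hypothesis to~$\tuple{I,I}$ where you invoke Proposition~\ref{prop:infinitary.interpretations.cl}.
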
%


For \agghtinterps we can state the relation~$\less$ in terms of the atomic formulas satisfied by it as follows:

\begin{proposition}
    \label{prop:less.ht}
    Let~$\tuple{H,I}$ be an \agghtinterp.
    Then, $H \less I$ iff $\mathcal{H} \subset \mathcal{I}$.
\end{proposition}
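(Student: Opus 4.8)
The plan is to reduce the biconditional to a purely model-theoretic statement about when two standard interpretations related by $\lesseq$ actually coincide. Since $\tuple{H,I}$ is an \agghtinterp, it is in particular a standard \htinterp, so $H \lesseq I$ holds by definition and hence $H \less I$ is equivalent to $H \neq I$. First I would record that $\mathcal{H} \subseteq \mathcal{I}$ holds automatically under $H \lesseq I$: any atom $A$ in $\sigma^p$ contains no intensional function symbol, so each of its (general-sort) arguments is a ground program term interpreted as itself by both $H$ and $I$; the comparison, membership, and equality symbols lie outside $\P$ and are therefore interpreted identically, while for a program predicate $p\in\P$ we only have $p^H\subseteq p^I$. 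Consequently $H\models A$ implies $I\models A$, giving $\mathcal{H}\subseteq\mathcal{I}$. With this in hand, $\mathcal{H}\subset\mathcal{I}$ is equivalent to $\mathcal{H}\neq\mathcal{I}$, and the whole proposition collapses to the claim that $H=I$ iff $\mathcal{H}=\mathcal{I}$.

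The direction $H=I\Rightarrow\mathcal{H}=\mathcal{I}$ is immediate, so the substance is the converse. Assume $\mathcal{H}=\mathcal{I}$; I must show $H=I$. Because $H$ and $I$ are both standard, they already agree on the universe of each sort, on the ground terms, on the comparison predicates, on $\in$, and on $\fcount$ and $\fsum$; moreover $\mathcal{H}=\mathcal{I}$ forces $p^H=p^I$ for every $p\in\P$. The only components that could still distinguish $H$ from $I$ are the intensional set functions, so it suffices to prove $\sets^x_{|E/\boldX|}(\boldx)^H=\sets^x_{|E/\boldX|}(\boldx)^I$ for every set symbol $E/\boldX\in\S$, every $x\in\{\cli,\mathit{dlv}\}$, and every tuple $\boldx$ of ground terms.

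Here I would invoke the defining conditions of an \agginterp and an \agghtinterp. A tuple $\tuple{(t_1)^{\boldX\boldY}_{\boldx\boldy},\dots,(t_k)^{\boldX\boldY}_{\boldx\boldy}}$ belongs to $\setsg_{|E/\boldX|}(\boldx)^I$ exactly when $I$ satisfies $\taug(l_1)^{\boldX\boldY}_{\boldx\boldy}\wedge\dots\wedge\taug(l_m)^{\boldX\boldY}_{\boldx\boldy}$, and to $\setsg_{|E/\boldX|}(\boldx)^H$ exactly when $\tuple{H,I}$ ht-satisfies the same formula; analogously, membership in $\setsd_{|E/\boldX|}(\boldx)^I$ and $\setsd_{|E/\boldX|}(\boldx)^H$ is governed by classical satisfaction of $\taud(l_1)^{\boldX\boldY}_{\boldx\boldy}\wedge\dots$ by $I$ and by $H$, respectively. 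The crucial observation is that each $l_i$ is a \emph{basic} literal, so each substituted instance is a ground formula over $\sigma^p$ that contains no intensional function symbol. For the \dlv\ case, Proposition~\ref{prop:infinitary.interpretations.cl} then gives that satisfaction by $H$ is equivalent to $\mathcal{H}$ satisfying the formula and satisfaction by $I$ to $\mathcal{I}$ satisfying it, which coincide since $\mathcal{H}=\mathcal{I}$. For the \clingo\ case, Proposition~\ref{prop:infinitary.interpretations} reduces $\tuple{H,I}\modelsht\taug(\cdots)$ to $\mathcal{H}\models\taug(\cdots)^{\mathcal{I}}$ and $I\models\taug(\cdots)$ (that is, $\tuple{I,I}\modelsht\taug(\cdots)$) to $\mathcal{I}\models\taug(\cdots)^{\mathcal{I}}$; again these agree because $\mathcal{H}=\mathcal{I}$. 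Hence the two functions return the same set of tuples, $H=I$ follows, and with the reduction above the proposition is proved.

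The main obstacle I anticipate is exactly this last step: verifying cleanly that the set-valued intensional functions are fully determined by the $\sigma^p$-atoms. What makes it go through without circularity is that aggregate elements contain only basic literals, whose translations are intensional-function-free, so the sets never appear in their own defining conditions and the determination can be read off directly from Propositions~\ref{prop:infinitary.interpretations.cl} and~\ref{prop:infinitary.interpretations}. The only care needed is to keep the \clingo\ case (ht-satisfaction, using $\neg$) and the \dlv\ case (classical satisfaction by $H$, using $\sneg$) separate, since they appeal to different propositions; but both ultimately reduce to the single hypothesis $\mathcal{H}=\mathcal{I}$.
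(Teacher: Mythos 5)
Your proposal is correct and matches the paper's argument in substance: the paper factors the result through Lemma~\ref{lem:agginterp.less}, which states that $H \less I$ iff $H \lessP{\P\emptyset} I$, and the heart of that lemma---that the intensional set functions $\sets^x_{|E/\boldX|}$ cannot distinguish $H$ from $I$ once the two agree on the atoms of $\sigma^p$, precisely because aggregate elements contain only basic literals whose translations are intensional-function-free---is exactly your reduction of the claim to ``$H=I$ iff $\mathcal{H}=\mathcal{I}$''. If anything, your explicit appeal to Propositions~\ref{prop:infinitary.interpretations.cl} and~\ref{prop:infinitary.interpretations} to transfer satisfaction among $H$, $I$, $\tuple{H,I}$ and the corresponding propositional interpretations makes rigorous the step that the paper's lemma states only informally (that the defining sets differing ``means that $p^H \neq p^I$ for some predicate symbol~$p$'').
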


Using Propositions~\ref{prop:infinitary.interpretations.cl}\nobreakdash-\ref{prop:less.ht}, we can prove the relation between clingo answer sets and \aggstable models of the corresponding FO theory.
Note that clingo answer sets are propositional interpretations while \aggstable models of FO theories are FO interpretations.
To fill this gap, we introduce the following notation.
%
If~$I$ is an agg\nobreakdash-stable model of~$\taug\Pi$, we say that $\Ans{I}$ is a \emph{fo\nobreakdash-clingo answer set} of~$\Pi$.

\begin{theorem}
    \label{thm:abstrac.gringo.correspondence}
    The fo\nobreakdash-clingo answer sets of any program coincide with its clingo answer sets.
\end{theorem}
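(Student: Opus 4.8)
The plan is to route the whole correspondence through a single \emph{bridge lemma} that equates, on every \agghtinterp, satisfaction of the extended FO translation $\taug\Pi$ with the FT\nobreakdash-reduct of the Abstract Gringo translation $\tau\Pi$; once that is in hand, both inclusions follow by the usual minimality arguments built on Propositions~\ref{prop:infinitary.interpretations.cl}--\ref{prop:less.ht}. Concretely, I would first establish that for every \agghtinterp~$\tuple{H,I}$ and every program~$\Pi$,
$$\tuple{H,I}\modelsht\taug\Pi \quad\text{iff}\quad \mathcal{H}\models(\tau\Pi)^{\mathcal{I}}. \tag{B}$$
I would prove (B) by composing three steps. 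By Proposition~\ref{lem:grounding.ht}, $\tuple{H,I}\modelsht\taug\Pi$ iff $\tuple{H,I}\modelsht\g{I}{\taug\Pi}$; since $\tau\Pi$ is a standard infinitary formula over~$\sigma^p$, Proposition~\ref{prop:infinitary.interpretations} gives $\tuple{H,I}\modelsht\tau\Pi$ iff $\mathcal{H}\models(\tau\Pi)^{\mathcal{I}}$. Hence (B) reduces to the pointwise equivalence $\tuple{H,I}\modelsht\g{I}{\taug\Pi}$ iff $\tuple{H,I}\modelsht\tau\Pi$, which I would verify rule by rule and, within a rule, by structural induction on literals.

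For non\nobreakdash-aggregate atomic formulas both translations are the identity and both grounding procedures instantiate the global variables over the same general\nobreakdash-sort universe, so these cases are immediate. The crux is the aggregate atom: after grounding and substituting a tuple~$\boldx$ for the global variables, $\g{I}{\taug}$ yields the single ground atom $\fsum(\setsg_{|E/\boldX|}(\boldx))\prec u$ (or its $\fcount$ analogue), and by the HT\nobreakdash-satisfaction clause for comparisons together with the defining conditions of an \agghtinterp, $\tuple{H,I}$ satisfies it exactly when the relation $\prec u$ holds both for the tuple set selected by~$I$ and for the tuple set selected by~$\tuple{H,I}$. I would then match this against the HT\nobreakdash-satisfaction condition of the subset\nobreakdash-based infinitary encoding that $\tau$ assigns to the same aggregate. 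This combinatorial identification of the two encodings is the main obstacle, and it is where the intensional reading pays off: the here\nobreakdash-world value $\setsg_{|E/\boldX|}(\boldx)^H$ is precisely the set of tuples whose basic conditions are HT\nobreakdash-satisfied by~$\tuple{H,I}$, and because those conditions contain no set functions, $\setsg^H$ is well defined from~$\mathcal{H}$ with no circularity.

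Granting (B), the direction from fo\nobreakdash-clingo to clingo answer sets runs as follows. Let~$I$ be an \aggstable model of~$\taug\Pi$. Taking $H=I$ in~(B) yields $\mathcal{I}\models(\tau\Pi)^{\mathcal{I}}$, so $\mathcal{I}$ is a model of the reduct. For minimality, suppose $\mathcal{B}\subset\mathcal{I}$ with $\mathcal{B}\models(\tau\Pi)^{\mathcal{I}}$, and let~$H$ be the unique standard interpretation with $\mathcal{H}=\mathcal{B}$ whose set functions are fixed by the \agghtinterp conditions; since $\mathcal{B}\subset\mathcal{I}$ we get $H\lesseq I$, so $\tuple{H,I}$ is an \agghtinterp, and Proposition~\ref{prop:less.ht} gives $H\less I$. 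By~(B), $\tuple{H,I}\modelsht\taug\Pi$, contradicting \aggstable ity of~$I$. Thus $\mathcal{I}$ is $\subseteq$\nobreakdash-minimal, i.e.\ an FT\nobreakdash-stable model of~$\tau\Pi$, so $\Ans{I}=\mathcal{I}$ is a clingo answer set.

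For the converse, let~$\mathcal{A}$ be a clingo answer set, i.e.\ a $\subseteq$\nobreakdash-minimal model of~$(\tau\Pi)^{\mathcal{A}}$, and let~$I$ be the unique \agginterp with $\mathcal{I}=\mathcal{A}$ (the predicates in~$\P$ are forced by~$\mathcal{A}$ and the set functions by the \agginterp conditions). Taking $H=I$ in~(B) and using $\mathcal{A}\models(\tau\Pi)^{\mathcal{A}}$ gives $I\models\taug\Pi$, so~$I$ is an \aggmodel of~$\taug\Pi$. If~$I$ were not \aggstable, some \agghtmodel~$\tuple{H,I}$ with $H\less I$ would satisfy~$\taug\Pi$; then~(B) would give $\mathcal{H}\models(\tau\Pi)^{\mathcal{I}}$ and Proposition~\ref{prop:less.ht} would give $\mathcal{H}\subset\mathcal{A}$, contradicting minimality. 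Hence~$I$ is an \aggstable model of~$\taug\Pi$ with $\Ans{I}=\mathcal{A}$, so~$\mathcal{A}$ is a fo\nobreakdash-clingo answer set. Beyond this bookkeeping, the only genuinely new work is the aggregate case of~(B).
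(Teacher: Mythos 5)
Your proposal is correct and takes essentially the same route as the paper: your bridge lemma~(B) is exactly the paper's Lemma~\ref{lem:abstrac.gringo.correspondence.ht2}, proved by the same three-step composition (Proposition~\ref{lem:grounding.ht}, pointwise equivalence of $\g{I}{\taug\Pi}$ and $\tau\Pi$ at the aggregate atoms, then Proposition~\ref{prop:infinitary.interpretations}), and your two minimality arguments via Proposition~\ref{prop:less.ht} match the paper's proof of Theorem~\ref{thm:abstrac.gringo.correspondence} almost verbatim. The ``combinatorial identification'' you defer is precisely what the paper's Lemmas~\ref{lem:abstrac.gringo.correspondence.cl} and~\ref{lem:abstrac.gringo.correspondence.ht.agg} carry out, using $\setsg_{|E/\boldX|}(\boldx)^H=[\Delta_{\tuple{H,I}}]$ to reduce HT-satisfaction of the ground comparison atom in both worlds to the justification condition in~\eqref{eq:1:lem:abstrac.gringo.correspondence.ht}, and you correctly identified both the mechanism and its role as the crux.
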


\begin{proof}[Proof sketch]
    The core of the proof consists of showing that~${\tuple{H,I} \modelsht \taug\Pi}$ iff~${\mathcal{H} \models (\tau\Pi)^{\mathcal{I}}}$ holds.
    By Proposition~\ref{lem:grounding.ht}, we get ${\langle H,I\rangle \modelsht \taug\Pi}$
    iff ${\langle H,I\rangle \modelsht \g I {\taug\Pi}}$.
    Note that~$\g I {\taug\Pi}$ is not an infinitary propositional formula, because it may contain intensional functions.
    Thus, we cannot apply Proposition~\ref{prop:infinitary.interpretations} directly.
    However, we can prove that $\langle H,I\rangle \modelsht \g I {\taug\Pi}$ iff ${\langle H,I\rangle \modelsht \tau\Pi}$ holds and use Proposition~\ref{prop:infinitary.interpretations} to prove the stated result.
    Finally, Proposition~\ref{prop:less.ht} is used to state the correspondence between stable models of~$\tau\Pi$ and \aggstable models of~$\taug\Pi$.
\end{proof}

\paragraph{The \dlv\ semantics.}

Similarly to the Abstract Gringo semantics, the \dlv\ semantics can be stated in terms of the same translation~$\tau$ to infinitary formulas, but using a different reduct~\cite{harlif19a}.
%
%
%
Let $F$ be an implication~${F_1 \to F_2}$.
Then, the \emph{FLP-reduct} $\FLP(F, \I)$ of~$F$ w.r.t. a propositional interpretation $\I$
is $F$ if $\I \models F_1$, and $\top$ otherwise.
For a conjunction of implications~$\mathcal{F}^\wedge$,
we define
$$\FLP(\mathcal{F}^\wedge, \I) \ = \ \{ \FLP(F,\I) \mid F \in \mathcal{F}\}^\wedge$$
%
A set~$\I$ of ground atoms is an \emph{FLP-stable model} of $F$ if it is a $\subseteq$\nobreakdash-minimal model of $\FLP(F, \I)$.
We say that a set~$\mathcal{A}$ of ground atoms is a \emph{dlv answer set} of a program~$\Pi$ if~$\mathcal{A}$ is an FLP\nobreakdash-stable model of~$\tau\Pi$.
If~$I$ is an agg\nobreakdash-stable model of~$\taud\Pi$, we say that $\Ans{I}$ is a \emph{fo\nobreakdash-dlv answer set} of~$\Pi$.

\begin{theorem}
    \label{thm:dlv.correspondence}
    The fo\nobreakdash-dlv answer sets of any program coincide with its dlv answer sets.
\end{theorem}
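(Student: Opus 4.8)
The plan is to mirror the proof of Theorem~\ref{thm:abstrac.gringo.correspondence}, replacing the FT\nobreakdash-reduct by the FLP\nobreakdash-reduct, the connective~$\neg$ by~$\sneg$, and the set functions~$\setsg$ by~$\setsd$. Since a fo\nobreakdash-dlv answer set is $\Ans{I}=\mathcal{I}$ for an \aggstable model~$I$ of~$\taud\Pi$, and a dlv answer set is an FLP\nobreakdash-stable model of~$\tau\Pi$, it suffices to show that~$I$ is an \aggstable model of~$\taud\Pi$ iff~$\mathcal{I}$ is FLP\nobreakdash-stable for~$\tau\Pi$. I would split both notions into a \emph{model} part and a \emph{minimality} part: $I$ is \aggstable iff $I\models\taud\Pi$ and no $H\less I$ gives $\tuple{H,I}\modelsht\taud\Pi$, while $\mathcal{I}$ is FLP\nobreakdash-stable iff $\mathcal{I}\models\tau\Pi$ and no $\mathcal{H}\subset\mathcal{I}$ satisfies $\FLP(\tau\Pi,\mathcal{I})$. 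By Proposition~\ref{prop:less.ht} the ranges of $H\less I$ and of $\mathcal{H}\subset\mathcal{I}$ coincide, so everything reduces to two claims: first, the \emph{model equivalence} $I\models\taud\Pi$ iff $\mathcal{I}\models\tau\Pi$; and second, the \emph{core claim} that, for every \agghtinterp $\tuple{H,I}$ \emph{whose there\nobreakdash-part $I$ is an \aggmodel of $\taud\Pi$}, we have $\tuple{H,I}\modelsht\taud\Pi$ iff $\mathcal{H}\models\FLP(\tau\Pi,\mathcal{I})$.

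Following the clingo case, I would first apply Proposition~\ref{lem:grounding.ht} to pass from $\taud\Pi$ to its grounding $\g{I}{\taud\Pi}$ and then argue rule by rule and instance by instance, writing the standard ground instance as $B\to A$. The engine of the argument is that, because $\sneg$ is classical, each body literal is satisfied under $\modelsht$ exactly when it holds in \emph{both} worlds: a positive atom needs $\mathcal{I}$ and $\mathcal{H}$; a literal $\sneg C$ needs $C\notin\mathcal{I}$ and $C\notin\mathcal{H}$; and, decisively, the value $\setsd_{|E/\boldX|}(\boldx)^H$ is by definition fixed by what $H$ classically satisfies, that is by $\mathcal{H}$, whereas the clingo value $\setsg_{|E/\boldX|}(\boldx)^H$ depends on $\tuple{H,I}$. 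Translating these classical conditions into the propositional encoding produced by~$\tau$ through Proposition~\ref{prop:infinitary.interpretations.cl}, I expect the satisfaction of a ground instance under $\modelsht$ to be governed by the four statements $\mathcal{I}\models B$, $\mathcal{H}\models B$, $\mathcal{I}\models A$, $\mathcal{H}\models A$ in a way that mirrors the FLP\nobreakdash-reduct. A case split then matches the two branches of $\FLP$: when $\mathcal{I}\models B$ the reduct keeps $B\to A$ and the implication under $\modelsht$ collapses to ``$\mathcal{H}\not\models B$ or $\mathcal{H}\models A$'', while when $\mathcal{I}\not\models B$ the reduct is~$\top$ and the implication holds vacuously because its antecedent already fails in the there\nobreakdash-world. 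This step plays the role that Proposition~\ref{prop:infinitary.interpretations} plays for clingo.

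I expect the kept\nobreakdash-implication branch to be the main obstacle. Unlike the FT\nobreakdash-reduct, which rewrites an unsatisfied subformula to~$\bot$ at every level, the FLP\nobreakdash-reduct only inspects the top\nobreakdash-level body of each rule, so the core biconditional genuinely \emph{fails} for an arbitrary $\tuple{H,I}$: when $\mathcal{I}\models B$, satisfaction under $\modelsht$ also demands $\mathcal{I}\models A$ through the there\nobreakdash-world condition of the implication, and this extra demand is supplied precisely by the hypothesis $I\models\taud\Pi$. This is exactly why I isolate the model equivalence and feed it into the minimality argument, rather than trying to prove a single unconditional lemma as in Theorem~\ref{thm:abstrac.gringo.correspondence}. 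The model equivalence itself is the instance $H=I$ of the same analysis, closed off using that $\mathcal{I}\models\FLP(\tau\Pi,\mathcal{I})$ iff $\mathcal{I}\models\tau\Pi$. A secondary point needing care is the aggregate\nobreakdash-set correspondence: I must check that $\setsd_{|E/\boldX|}(\boldx)^H$, defined via the $\sneg$\nobreakdash-translations of the element's literals, collects exactly the tuples that Abstract Gringo encodes over~$\mathcal{H}$, so that the comparison terms $\fsum(\setsd_{|E/\boldX|}(\boldx))<u$ evaluate consistently on both sides; this again reduces, literal by literal, to Proposition~\ref{prop:infinitary.interpretations.cl}.
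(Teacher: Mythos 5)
Your plan is correct and is essentially the paper's own proof: the paper's key step (Lemma~\ref{lem:dlv.reduct.ht}) is exactly your core claim in unconditional form---${\tuple{H,I} \modelsht \taud R}$ iff ${\Ans{I}\models\tau R}$ and ${\Ans{H}\models\FLP(\tau R,\Ans{I})}$---which is interchangeable with your conditioned version by persistence, and your minimality transfer via Proposition~\ref{prop:less.ht} and your appeal to the classical aggregate correspondence (Lemma~\ref{lem:abstrac.gringo.correspondence.cl}, needed because $\setsd_{|E/\boldX|}(\boldx)^H$ is fixed by~$H$ alone) mirror the paper's argument. The only difference is organizational: the paper routes your two-world case analysis through auxiliary $\sneg\sneg$\nobreakdash-prefixed implications ($\PPP{\cdot}$ and $\PNN{\cdot}$, Lemmas~\ref{lem:flp.reduct.aux}--\ref{lem:dlv.translations}) rather than splitting on ${\mathcal{I}\models B}$ directly, but the engine---dlv\nobreakdash-literals and bodies are satisfied under $\modelsht$ iff satisfied classically in both worlds (Lemmas~\ref{lem:dlv.literal.satisfaction} and~\ref{lem:dlv.body.satisfaction})---is identical to yours.
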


\begin{proof}[Proof sketch]
    The structure of the proof is analogous to the one of Theorem~\ref{thm:abstrac.gringo.correspondence}.
    Here, the key step of the proof consists of showing~${\tuple{H,I} \modelsht \taud\Pi}$ iff both~${\Ans{I} \models \tau\Pi}$ and~${\mathcal{H} \models \FLP(\tau\Pi,\Ans{I})}$.
    %
\end{proof}







\section{Strong Equivalence}\label{sec:strong.equivalece}

We say that two programs~$\Pi_1$ and $\Pi_2$
are {\em strongly equivalent for \clingo} if program~$\Pi_1 \cup \Delta$ and $\Pi_2 \cup \Delta$ have the same \clingo\ answer sets for any program~$\Delta$.

We assume a signature~$\sigma(\P,\S)$ where~$\P$ and~$\S$  are the sets that respectively contain all predicate and all set symbols occurring in~$\Pi_1 \cup \Pi_2$.

\begin{theorem}\label{thm:strong.equivalece.clingo}
    The following conditions are equivalent:
    \begin{itemize}
        \item $\Pi_1$ and $\Pi_2$ are strongly equivalent for \clingo;
        \item $\taug(\Pi_1)$ and $\taug(\Pi_2)$ have the same \agghtmodels.
    \end{itemize}
\end{theorem}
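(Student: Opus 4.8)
The plan is to reduce the statement to a question about \aggstable models using Theorem~\ref{thm:abstrac.gringo.correspondence}, and then settle both implications at the level of \agghtmodels. First I would record two auxiliary facts. Since an \agginterp is completely fixed once we know the ground predicate atoms it satisfies (its universe, the comparison predicates, the functions $\fcount$ and $\fsum$, and the set symbols are all determined by the remaining conditions), the map $I\mapsto\mathcal{I}$ is injective on \agginterps; hence, by Theorem~\ref{thm:abstrac.gringo.correspondence} and Definition~\ref{def:agg.stable.model}, ``$\Pi$ and $\Pi'$ have the same \clingo\ answer sets'' is equivalent to ``$\taug(\Pi)$ and $\taug(\Pi')$ have the same \aggstable\ models''. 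Second, because $\taug$ is defined rule by rule, $\taug(\Pi\cup\Delta)=\taug(\Pi)\cup\taug(\Delta)$, and an \agghtinterp is an \agghtmodel of a union iff it is an \agghtmodel of each part. Combined with persistence (if $\tuple{H,I}\modelsht F$ then $I\models F$) and Proposition~\ref{prop:less.ht}, this turns the stability check into the search for a here-world $H$ with $\mathcal{H}\subset\mathcal{I}$.

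For the direction ``same \agghtmodels $\Rightarrow$ strongly equivalent'', fix an arbitrary context $\Delta$ and work over the combined signature $\sigma'$ obtained by adjoining the predicate and set symbols of $\Delta$. I would prove a conservativity lemma: restricting an \agghtinterp of $\sigma'$ to $\sigma(\P,\S)$ yields an \agghtinterp (the conditions defining the set symbols of $\Pi_1\cup\Pi_2$ mention only literals over $\P$, so they survive restriction), and $\tuple{H,I}\modelsht\taug(\Pi_i)$ depends only on this restriction because $\taug(\Pi_i)$ is a sentence over $\sigma(\P,\S)$. Hence the hypothesis propagates from $\sigma(\P,\S)$ to $\sigma'$, so $\taug(\Pi_1)$ and $\taug(\Pi_2)$ still share their \agghtmodels over $\sigma'$; intersecting with the \agghtmodels of $\taug(\Delta)$ shows that $\taug(\Pi_1\cup\Delta)$ and $\taug(\Pi_2\cup\Delta)$ have the same \agghtmodels, hence the same \aggstable\ models, hence the same \clingo\ answer sets.

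For the converse I argue by contraposition and construct a distinguishing context. Suppose, without loss of generality, that some \agghtinterp $\tuple{H,I}$ is an \agghtmodel of $\taug(\Pi_1)$ but not of $\taug(\Pi_2)$, and write $X=\mathcal{H}$, $Y=\mathcal{I}$ with $X\subseteq Y$ by Proposition~\ref{prop:less.ht}. If $I$ is not an \aggmodel of $\taug(\Pi_2)$ (in particular whenever $X=Y$), I take $\Delta$ to be the facts for all ground atoms in $Y$ together with constraints $\bot\ruleo A$ for the remaining ground atoms; then $Y$ is the unique candidate and, using persistence, it is a \clingo\ answer set of $\Pi_1\cup\Delta$ but not of $\Pi_2\cup\Delta$. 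Otherwise $I$ is an \aggmodel of both and $X\subsetneq Y$; here I take $\Delta$ to consist of the facts for the atoms in $X$, the ground rules $A\ruleo B$ for all predicate atoms $A,B\in Y\setminus X$, and constraints $\bot\ruleo A$ for ground atoms outside $Y$ (these only bound the there-world). The facts force $X$ into every here-world and the ``tie'' rules force the atoms of $Y\setminus X$ to be all-in or all-out, so the only here-worlds $Z$ permitted by $\taug(\Delta)$ over the there-world $Y$ are $Z=X$ and $Z=Y$. Since $\tuple{X,Y}$ satisfies $\taug(\Pi_1)$ it witnesses that $Y$ is not \aggstable\ for $\taug(\Pi_1\cup\Delta)$, whereas $\tuple{X,Y}$ fails $\taug(\Pi_2)$ and no other proper here-world is available, so $Y$ is \aggstable\ for $\taug(\Pi_2\cup\Delta)$; thus $Y$ is a \clingo\ answer set of exactly one of the two programs.

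The main obstacle is this last construction: verifying that the ground program $\Delta$ admits exactly the here-worlds $X$ and $Y$ over the there-world $Y$, that $Y$ is a genuine \aggmodel of $\Pi_i\cup\Delta$, and organizing the case split cleanly (total versus proper countermodel, and whether the there-world is a classical model of the other program). Since $\Delta$ can be taken aggregate-free and over the existing predicates, no new set symbols arise, so the interaction with the intensional set functions stays mild; the real care is in pinning down the here-world lattice and in the bookkeeping built on Proposition~\ref{prop:less.ht} and persistence.
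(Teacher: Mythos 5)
Your proposal is correct and follows essentially the same route as the paper: the first direction via the observation that theories with the same \agghtmodels keep the same \aggstable models after adding~$\taug(\Delta)$, and the converse by contraposition using a context consisting of facts for the atoms of~$\mathcal{H}$ plus ``tie'' rules among the atoms of~$\mathcal{I} \setminus \mathcal{H}$, which is exactly the paper's $\Delta_I$/$\Delta_{\tuple{H,I}}$ construction (your extra constraints on atoms outside~$\mathcal{I}$ and your local case split on whether~$I$ models~$\taug(\Pi_2)$, versus the paper's global split on whether the two theories share their \aggmodels, are harmless variants). The two points you make explicit---injectivity of~${I \mapsto \Ans{I}}$ on \agginterps, needed to pass between \aggstable models and clingo answer sets, and conservativity of \agghtmodels under the signature extension induced by~$\Delta$---are left implicit in the paper but are indeed required for the argument to go through.
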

Let us consider the program formed by rule~\eqref{eq:clingo.dlv.difference.rule}.
This program has~$\{ p(1) \}$ as its unique clingo answer set.
Similarly, the program formed by the rule
\small
\begin{gather}
    \texttt{p(1)\::-\:not\:\#sum\{X\::\:q(X),\,not\:r(X)\}\,>=\,1.}
    \label{eq:clingo.dlv.difference.rule2}
\end{gather}
\normalsize
also has~$\{ p(1) \}$ as its unique clingo answer set.
However, these two programs are not strongly equivalent under the \clingo\ semantics.
To illustrate this claim, consider an \agghtinterp~$\tuple{H,I}$ with~${p^H =  r^H = r^I = \emptyset}$, and~$p^I = q^H = \{1\}$, and~${q^I = \{1, -1\}}$.
On the one hand, $\tuple{H,I}$ satisfies~\eqref{eq:difference.clingo.semantics} because its antecedent is not satisfied as we have~$\fsum(\setsg_{qr})^H = 1$.
On the other hand, $\tuple{H,I}$ does satisfy the formula
\begin{align}
    \neg\fsum(\setsg_{qr}) \geq 1
    &\to p(1)
    \label{eq:difference.clingo.semantics2}
\end{align}
obtained by applying~$\taug$ to~\eqref{eq:clingo.dlv.difference.rule2}.
Note that in the scope of negation, we only look at the value in~$I$ and we have~${\fsum(\setsg_{qr})^I = 0}$.
By Theorem~\ref{thm:strong.equivalece.clingo}, this implies that the two programs are not strongly equivalent under the \clingo\ semantics.
This assertion can be confirmed by adding context
\begin{align}
    \texttt{q(1). \quad q(-X)\::-\:p(X). \quad :-\:not\:p(1).}
    \label{eq:context}
\end{align}
When added to rule~\eqref{eq:clingo.dlv.difference.rule}, the resulting program has no clingo answer sets, but when added to rule~\eqref{eq:clingo.dlv.difference.rule2}, the resulting program has~$\{ q(1),\,q(-1),\,p(1) \}$ as its unique answer set.

Similarly,
we say that two programs~$\Pi_1$ and~$\Pi_2$
are {\em strongly equivalent for \dlv} if programs~$\Pi_1 \cup \Delta$ and $\Pi_2 \cup \Delta$ have the same \dlv\ answer sets for any program~$\Delta$.

\begin{theorem}\label{thm:strong.equivalece.dlv}
    The following conditions are equivalent.
    \begin{itemize}
        \item $\Pi_1$ and $\Pi_2$ are strongly equivalent for \dlv;
        \item $\taud(\Pi_1)$ and $\taud(\Pi_2)$ have the same \agghtmodels.
    \end{itemize}
\end{theorem}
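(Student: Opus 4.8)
The plan is to prove the two implications separately, following the template of Theorem~\ref{thm:strong.equivalece.clingo} and replacing the \clingo-specific ingredients (the connective~$\neg$, the functions~$\setsg$, the FT-reduct) by their \dlv\ counterparts (the connective~$\sneg$, the functions~$\setsd$, the FLP-reduct). For the implication from equal \agghtmodels to strong equivalence, I would first note that~$\taud$ is defined rule by rule, so~$\taud(\Pi\cup\Delta)=\taud\Pi\cup\taud\Delta$ for every program~$\Delta$, and that whether a standard \htinterp is an \agghtinterp and whether it satisfies a theory depend only on the signature and on the individual sentences. Hence the \agghtmodels of a union are exactly the \agghtinterps common to the two parts, so if $\taud(\Pi_1)$ and $\taud(\Pi_2)$ share their \agghtmodels, then so do $\taud(\Pi_1\cup\Delta)$ and $\taud(\Pi_2\cup\Delta)$ for every~$\Delta$. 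Because, by Definition~\ref{def:agg.stable.model}, the \aggstable models of a theory are determined solely by its class of \agghtmodels, the two augmented theories have the same \aggstable models, and Theorem~\ref{thm:dlv.correspondence} turns this into equality of the \dlv\ answer sets of $\Pi_1\cup\Delta$ and $\Pi_2\cup\Delta$.

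The only bookkeeping needed here is that~$\Delta$ may carry predicate and set symbols outside~$\sigma(\P,\S)$; I would dispatch this by passing to the extended signature and checking that equality of \agghtmodels is insensitive to fresh symbols. For the converse I would argue by contraposition. Assume, without loss of generality, an \agghtmodel $\tuple{H,I}$ of $\taud(\Pi_1)$ that is not an \agghtmodel of $\taud(\Pi_2)$; by Proposition~\ref{prop:less.ht} it is determined by the pair of propositional interpretations~$\mathcal{H}\subseteq\mathcal{I}$. I would construct a context~$\Delta$ making~$\Ans{I}$ a \dlv\ answer set of exactly one of $\Pi_1\cup\Delta$ and $\Pi_2\cup\Delta$, splitting into two cases according to whether the failure already occurs in the there-world.

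In the first case, $\mathcal{I}\not\models\tau\Pi_2$: I take~$\Delta$ to be the facts for the atoms of~$\mathcal{I}$. Since~$\tuple{H,I}\modelsht\taud\Pi_1$ yields~$\mathcal{I}\models\tau\Pi_1$, the facts force~$\Ans{I}$ as the $\subseteq$-minimal model of the FLP-reduct of $\tau(\Pi_1\cup\Delta)$, so it is a \dlv\ answer set of $\Pi_1\cup\Delta$; but it is not even a model of $\tau(\Pi_2\cup\Delta)$. In the second case, $\mathcal{I}\models\tau\Pi_2$ but~$\tuple{H,I}\not\modelsht\taud\Pi_2$, which by the relation established in the proof of Theorem~\ref{thm:dlv.correspondence} forces~$\mathcal{H}\not\models\FLP(\tau\Pi_2,\mathcal{I})$ while~$\mathcal{H}\models\FLP(\tau\Pi_1,\mathcal{I})$; necessarily~$\mathcal{H}\subsetneq\mathcal{I}$. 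Here I take~$\Delta$ to be the facts for~$\mathcal{H}$ together with the negation-free ``clique'' of implications~$\{\,a\ruleo b\,\}$ ranging over all pairs of distinct atoms~$a,b\in\mathcal{I}\setminus\mathcal{H}$. This context leaves~$\mathcal{H}$ and~$\mathcal{I}$ unchanged as here- and there-worlds, so~$\tuple{H,I}$ remains an \agghtmodel of $\taud(\Pi_i\cup\Delta)$, and it arranges that the only proper submodel of the FLP-reduct available below~$\mathcal{I}$ is~$\mathcal{H}$ itself. Thus~$\Ans{I}$ is FLP-stable for $\Pi_2\cup\Delta$ (because~$\mathcal{H}$ fails the reduct) but not for $\Pi_1\cup\Delta$ (because~$\mathcal{H}$ passes it), distinguishing the two programs.

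The hard part will be this second case and its \dlv-specific subtleties. First, I must verify that adding~$\Delta$ does not perturb the intensional set values: because under~$\taud$ the here-world set~$\setsd_{|E/\boldX|}(\boldx)^H$ is fixed by~$H$ alone (through the connective~$\sneg$), I have to check that the clique---whose bodies are false in the here-world and true in the there-world---changes neither~$\mathcal{H}$ nor~$\mathcal{I}$ and hence neither the here- nor the there-values of every~$\setsd$. Second, the minimality argument must be run on the grounded theory~$\gr{I}{\taud(\Pi_i\cup\Delta)}$, which contains intensional functions and is therefore not propositional; so, exactly as in the proof of Theorem~\ref{thm:abstrac.gringo.correspondence}, I cannot apply the propositional reduct machinery directly and must instead route the argument through Theorem~\ref{thm:dlv.correspondence} and the FLP-reduct relation, taking care that the replacement of the FT-reduct by the FLP-reduct preserves the equivalence between~$\mathcal{H}$ being a proper submodel of the reduct and~$\tuple{H,I}$ being an \agghtmodel of $\taud\Pi_i$. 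Confirming that this equivalence survives in the presence of the aggregate guards, so that the clique genuinely isolates~$\mathcal{H}$ as the sole candidate witness, is the delicate step.
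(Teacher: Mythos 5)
Your proposal is correct, and its first half (same \agghtmodels imply strong equivalence) is exactly the paper's argument: decompose ${\taud(\Pi\cup\Delta)=\taud\Pi\cup\taud\Delta}$, observe that \aggstable models are determined by the class of \agghtmodels (Lemmas~\ref{lem:same.htmodels.same.stable.models} and~\ref{lem:same.htmodels.same.stable.models.program}), and convert via Theorem~\ref{thm:dlv.correspondence}; the fresh-symbol bookkeeping you mention is a point the paper silently elides. For the converse, your context is also the paper's: your case split (${\mathcal{I}\not\models\tau\Pi_2}$ versus ${\mathcal{I}\models\tau\Pi_2}$) mirrors the paper's split on whether the classical \aggmodels already differ (Lemmas~\ref{lem:strong.equivalence.only.if.cl} and~\ref{lem:strong.equivalence.only.if.ht}), and your ``facts for $\mathcal{H}$ plus a clique of implications over $\mathcal{I}\setminus\mathcal{H}$'' is literally the paper's~$\Delta_{\tuple{H,I}}$ --- indeed your version reads as the evident correction of the paper's definition, whose facts are stated relative to~$I$ but are used in the proof as facts true in~$H$. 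Where you genuinely diverge is the execution of case~2: the paper never touches the FLP\nobreakdash-reduct there --- it proves Lemma~\ref{lem:strong.equivalence.only.if.ht} once, uniformly in ${x\in\{\mathit{cli},\mathit{dlv}\}}$, entirely at the level of first-order \htinterps, showing that any~$\tuple{J,I}$ with ${J\less I}$ satisfying the augmented theory would give ${H\lessP{\P\emptyset} J\lessP{\P\emptyset} I}$ (via Lemma~\ref{lem:agginterp.less}) and then violate some clique rule, and it invokes Theorem~\ref{thm:dlv.correspondence} only at the very end; you instead descend immediately to propositional interpretations and argue $\subseteq$\nobreakdash-minimality of the FLP\nobreakdash-reduct, using Lemma~\ref{lem:dlv.reduct.ht} to translate ${\tuple{H,I}\modelsht\taud\Pi_i}$ into ${\mathcal{H}\models\FLP(\tau\Pi_i,\mathcal{I})}$. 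Both routes work: the step you flag as delicate --- that the grounded theory is not propositional, so the reduct machinery cannot be applied directly --- is discharged precisely by Lemma~\ref{lem:dlv.reduct.ht} (built on Lemmas~\ref{lem:flp.reduct.aux} and~\ref{lem:dlv.translations}), and your worry that the clique might perturb the $\setsd$ values is vacuous in this framework, since the set functions are interpreted by the \agghtinterp itself rather than by the program, and~$\Delta$ introduces no new predicate or set symbols (also note your statement that $\tuple{H,I}$ remains an \agghtmodel of $\taud(\Pi_i\cup\Delta)$ holds only for ${i=1}$, which is all your argument needs). The paper's FO\nobreakdash-level route buys uniformity --- one pair of lemmas serves Theorems~\ref{thm:strong.equivalece.clingo}, \ref{thm:strong.equivalece.dlv} and~\ref{thm:strong.equivalece.croos} --- while your route is \dlv\nobreakdash-specific but makes the FLP minimality mechanics (only~$\mathcal{H}$ survives strictly below~$\mathcal{I}$) fully explicit.
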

Though programs containing rules~\eqref{eq:clingo.dlv.difference.rule} and~\eqref{eq:clingo.dlv.difference.rule2} are not strongly equivalent for \clingo, they are strongly equivalent for \dlv.
Applying~$\taud$ to rule~\eqref{eq:clingo.dlv.difference.rule2} yields formula
\begin{align}
    \sneg\fsum(\setsd_{qr}) \geq  1
    &\to p(1)
    \label{eq:difference.dlv.semantics1}
\end{align}
and any \agghtinterp~$\tuple{H,I}$ satisfies the antecedent of~\eqref{eq:difference.dlv.semantics1}
iff~${H \not\models \fsum(\setsd_{qr}) \geq 1}$
and~${I \not\models \fsum(\setsd_{qr}) \geq 1}$
iff~${H \models \fsum(\setsd_{qr}) < 1}$
and~${I \models \fsum(\setsd_{qr}) < 1}$
iff~${\tuple{H,I}}$ satisfies~$\fsum(\setsd_{qr}) < 1$, which is the antecedent of~\eqref{eq:difference.dlv.semantics}.
By Theorem~\ref{thm:strong.equivalece.dlv}, this implies that the two programs are strongly equivalent for \dlv.

We can also use these translations to establish strong equivalence results across the two semantics.

\begin{theorem}\label{thm:strong.equivalece.croos}
    The following conditions are equivalent.
    \begin{itemize}
        \item $\taug(\Pi_1)$ and $\taud(\Pi_2)$ have the same \agghtmodels.
        \item the clingo answer sets of~$\Pi_1 \cup \Delta$ coincide with the dlv answer sets of~$\Pi_2 \cup \Delta$ for every set~$\Delta$ of rules such that~$\taug(\Delta)$ and $\taud(\Delta)$ have the same \agghtmodels.
    \end{itemize}
\end{theorem}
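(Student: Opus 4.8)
The plan is to route everything through the two correspondence theorems and the observation that \aggstable\ models are completely determined by the \agghtmodels. First I would record three facts. By Theorem~\ref{thm:abstrac.gringo.correspondence} and Theorem~\ref{thm:dlv.correspondence}, the clingo answer sets of $\Pi_1\cup\Delta$ are exactly the sets $\Ans{I}$ for the \aggstable\ models $I$ of $\taug(\Pi_1\cup\Delta)$, and the dlv answer sets of $\Pi_2\cup\Delta$ are exactly the $\Ans{I}$ for the \aggstable\ models $I$ of $\taud(\Pi_2\cup\Delta)$. Since $\tau^x$ is defined rule by rule, $\tau^x(\Pi\cup\Delta)=\tau^x(\Pi)\cup\tau^x(\Delta)$, and an \agghtinterp\ satisfies a theory iff it satisfies each of its sentences; hence the \agghtmodels\ of a union are the intersection of the \agghtmodels\ of the parts. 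Next, for an \agginterp\ $I$ the pair $\tuple{I,I}$ is an \agghtinterp\ and $\tuple{I,I}\modelsht F$ iff $I\models F$, so $I$ is an \aggmodel\ of $\Gamma$ iff $\tuple{I,I}$ is an \agghtmodel\ of $\Gamma$; combined with Definition~\ref{def:agg.stable.model}, whether $I$ is \aggstable\ for $\Gamma$ depends only on the set of \agghtmodels\ of $\Gamma$. Finally, $I\mapsto\Ans{I}$ is injective on \agginterps, because the universe is fixed, comparisons are fixed by the chosen order, and the set-valued functions are determined by the predicate part; so $\Ans{I}$ is an answer set exactly when $I$ is \aggstable.

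For the implication from the first condition to the second, assume $\taug(\Pi_1)$ and $\taud(\Pi_2)$ have the same \agghtmodels\ and let $\Delta$ satisfy that $\taug(\Delta)$ and $\taud(\Delta)$ have the same \agghtmodels. Using the union/intersection fact, the \agghtmodels\ of $\taug(\Pi_1\cup\Delta)$ are the intersection of those of $\taug(\Pi_1)$ and $\taug(\Delta)$, which by the two assumptions equals the intersection of those of $\taud(\Pi_2)$ and $\taud(\Delta)$, i.e.\ the \agghtmodels\ of $\taud(\Pi_2\cup\Delta)$. The two theories then have the same \aggstable\ models, and by injectivity the clingo answer sets of $\Pi_1\cup\Delta$ coincide with the dlv answer sets of $\Pi_2\cup\Delta$.

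For the converse I would argue by contraposition: assuming some \agghtinterp\ $\tuple{H,I}$ separates $\taug(\Pi_1)$ from $\taud(\Pi_2)$, I construct a compatible $\Delta$ on which the answer sets disagree. The crucial preliminary is that aggregate-free rules are compatible: for an ordinary atom or comparison $A$, the condition $H\lesseq I$ forces $p^H\subseteq p^I$, so both $\tuple{H,I}\modelsht\neg A$ and $\tuple{H,I}\modelsht\sneg A$ reduce to $I\not\models A$; hence a rule built only from atoms, comparisons, and single default negation translates under $\taug$ and $\taud$ to two formulas with the same \agghtmodels, and the construction uses only such rules. By persistence ($\tuple{H,I}\modelsht F$ implies $I\models F$), assume without loss of generality that $\tuple{H,I}$ is an \agghtmodel\ of $\taug(\Pi_1)$ but not of $\taud(\Pi_2)$; then $I\models\taug(\Pi_1)$. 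If already $I\not\models\taud(\Pi_2)$, take $\Delta$ to be the facts $A$ for $A\in\Ans{I}$, which force $I$ to be the only here-world below itself, making $\Ans{I}$ a clingo answer set of $\Pi_1\cup\Delta$ but not a dlv answer set of $\Pi_2\cup\Delta$. Otherwise $I\models\taud(\Pi_2)$ and hence $H\less I$ strictly, and I take $\Delta$ to consist of the facts $A$ for $A\in\Ans{H}$, the rules $A\ruleo B$ for all $A,B\in\Ans{I}\setminus\Ans{H}$, and the constraints $\bot\ruleo\Not A$ for $A\in\Ans{I}$; these pin the only proper here-world below $I$ satisfying $\Delta$ to $H$, so for $x\in\{\cli,\mathit{dlv}\}$, $I$ is \aggstable\ for $\tau^x(\Pi)\cup\tau^x(\Delta)$ iff $I\models\tau^x(\Pi)$ and $\tuple{H,I}\not\models\tau^x(\Pi)$. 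Thus $\Ans{I}$ is a dlv answer set of $\Pi_2\cup\Delta$ but not a clingo answer set of $\Pi_1\cup\Delta$. The case of a model of $\taud(\Pi_2)$ but not $\taug(\Pi_1)$ is handled by exchanging roles.

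The main obstacle is this converse direction, and within it the context construction together with its two verifications: that each $\Delta$ is compatible in the exact sense required by the statement, so that the same $\Delta$ may legitimately be added on both sides while being translated by the different operators $\taug$ and $\taud$; and that adding $\Delta$ isolates precisely the here-world $H$ among the proper predecessors of $I$, turning a single separating \agghtinterp\ into a genuine disagreement between a clingo and a dlv answer set. The bookkeeping that propositional answer sets and FO \aggstable\ models match---the injectivity of $\Ans{\cdot}$ and Propositions~\ref{prop:infinitary.interpretations.cl}--\ref{prop:less.ht}---must be threaded through so that ``$I$ is not \aggstable'' really yields ``$\Ans{I}$ is not an answer set'' rather than merely some larger or smaller set failing.
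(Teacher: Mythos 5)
Your proposal is correct and takes essentially the same route as the paper: the forward direction is the paper's argument via $\tau^x(\Pi\cup\Delta)=\tau^x(\Pi)\cup\tau^x(\Delta)$ and the fact that \agghtmodels determine \aggstable models (Lemma~\ref{lem:same.htmodels.same.stable.models} plus the correspondence Theorems~\ref{thm:abstrac.gringo.correspondence} and~\ref{thm:dlv.correspondence}), and your converse reconstructs exactly the paper's context from Lemmas~\ref{lem:strong.equivalence.only.if.cl}--\ref{lem:strong.equivalence.only.if} --- facts for the atoms of~$\Ans{H}$ plus all pairwise rules among~$\Ans{I}\setminus\Ans{H}$ isolating~$H$ as the only proper here-world below~$I$ --- with your case split on whether ${I\models\taud\Pi_2}$ playing the role of the paper's split on agreement of \aggmodels, and your extra constraints ${\bot\ruleo\Not A}$ being harmless but superfluous (the paper's~$\Delta$ is negation-free, which also makes the compatibility ${\taug(\Delta)=\taud(\Delta)}$ immediate rather than requiring Proposition~\ref{prop:ht.facts}). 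The one small repair: restrict your facts and constraints to atoms~$p(\boldt)$ with~${p\in\P}$, since~$\Ans{I}$ also contains ground comparisons, which are not legal rule heads --- immaterial to the argument because~$H$ and~$I$ agree on all comparison atoms, so~$\Ans{I}\setminus\Ans{H}$ contains none of them.
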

\noindent
In particular, note that~$\taug(\Delta)$ and $\taud(\Delta)$ are equivalent for any set of rules without aggregates nor double negation.
The restriction on the \aggmodels of~$\taug(\Delta)$ and $\taud(\Delta)$ is necessary because the same added rules may have different behavior under the two semantics.
As an example, consider program~$\Pi_1$ formed by rule~\eqref{eq:clingo.dlv.difference.rule} plus constraint
\begin{align}
    \texttt{:-\:q(X),\:r(X).}
    \label{eq:context2}
\end{align}
and program~$\Pi_2$ formed by rule~\eqref{eq:clingo.dlv.difference.rule2} plus constraint~\eqref{eq:context2}.
Recall that~$\taug\eqref{eq:clingo.dlv.difference.rule}$ is sentence~$\eqref{eq:difference.clingo.semantics}$ and~$\taud\eqref{eq:clingo.dlv.difference.rule2}$ is sentence~$\eqref{eq:difference.dlv.semantics1}$.
As we discussed above, any \agghtinterp satisfies the antecedent of~\eqref{eq:difference.dlv.semantics1} iff it satisfies~${\fsum(\setsd_{qr}) < }1$.
Hence, it is enough to show~${(\setsg_{qr})^H = (\setsd_{qr})^H}$ for every \agghtinterp~$\tuple{H,I}$ that satisfies~${\forall X \neg(q(X) \wedge r(X))}$.
Every such~$\tuple{H,I}$, satisfies that~${H \models q(c)}$ implies~${H \not\models r(c)}$ for every object constant~$c$ of general sort.
Hence~$\tuple{H,I}$ satisfies~$q(c) \wedge \neg r(c)$ iff~$H$ satisfies~$q(c) \wedge \sneg r(c)$ for every object constant~$c$ of general sort.
This implies~${(\setsg_{qr})^H = (\setsd_{qr})^H}$.
It is well\nobreakdash-known that for programs where all aggregate literals are positive, the \clingo\ and \dlv\ semantics coincide~\cite{ferraris11a,harlif19a}.
As illustrated by this example, Theorem~\ref{thm:strong.equivalece.croos} enables us to prove this correspondence for some programs with non-positive aggregates.\footnote{Recall that an aggregate literal is called \emph{positive} if it is not in the scope of negation and negation does not occur within its scope~\cite{harlif19a}.}


\section{Strong Equivalence using Classical Logic}\label{sec:strong.equivalece.classical}

In this section, we show how an additional
syntactic transformation~$\gamma$ allows us to replace the logic of Here\nobreakdash-and-There by classical FO theory.
This also allows us to remove the non\nobreakdash-standard negation~$\sneg$ and replace the semantic condition that characterizes \agginterps in favor of axiom schemata.
This is a generalization of the transformation by~\citet{fanlif23a} and it is similar to the one used by~\citet{barlee19a} to define the SM operator for FO formulas with intensional functions.

We define a signature~$\hat\sigma$ that is obtained from the signature~$\sigma$ by adding, for every predicate symbol~$p$ other than comparison symbols~\eqref{rel:1}, a new predicate symbol~$\hat{p}$ of the same arity and sorts; and for every function symbol~$\sets^x_{|E/\boldX|}$ with~$x \in \{ \mathit{cli}, \mathit{dlv} \}$, a new function symbol~$\hat{\sets}^x_{|E/\boldX|}$.



For any expression~$E$ of signature~$\sigma$, by~$\hat{E}$ we denote
the expression of~$\hat\sigma$
obtained from~$E$ by replacing every occurrence of every
predicate symbol~$p$ by $\hat{p}$ and every occurrence of function symbol~$\sets^x_{|E/\boldX|}$ by~$\hat{\sets}^x_{|E/\boldX|}$.
The translation~$\gamma$, which relates the logic of here-and-there
to classical logic, maps formulas over~$\sigma$ to formulas
over~$\hat{\sigma}$.
It is defined recursively:
\begin{itemize}
\item $\gamma F=F \wedge \hat{F}$ if~$F$ is atomic,
\item $\gamma(\neg F)=\neg \hat{F}$,
\item $\gamma(\sneg F)=\neg \hat{F} \wedge \neg \NN{F} $,
\item $\gamma(F \otimes G)=\gamma F \otimes \gamma G$ with~$\otimes \in \{ \wedge, \vee \}$.
\item $\gamma(F\to G)=(\gamma F \to \gamma G)\land (\hat{F}\to \hat{G})$,
\item $\gamma(\forall X\,F)=\forall X\,\gamma F$,
\item $\gamma(\exists X\,F)=\exists X\,\gamma F$.
\end{itemize}
where~$\NN{F}$ is the result of replacing all occurrences of~$\sneg$ by~$\neg$ in~$F$.
To apply~$\gamma$ to a set of formulas means to apply~$\gamma$ to each of its members.
Note that~$\gamma \Gamma$ is always a standard FO theory (without the~$\sneg$ connective) over the signature~$\hat\sigma$.

For any \htinterp~$\tuple{H,I}$ of~$\sigma$,
$I^H$ stands for the interpretation of~$\hat\sigma$ that has the same domain as~$I$, interprets symbols not in~$\P \cup \F$ in the same way as~$I$, interprets the other function symbol as
${f^{I^H} =f^H}$ and~${\hat{f}^{I^H} =f^I}$, and other
predicate constants as follows:
\begin{align*}
    I^H\models p({\bf d}^*) &\hbox{ iff } H \models p({\bf d}^*);
    &\hspace*{5pt}
    I^H\models \hat{p}({\bf d}^*)&\hbox{ iff }I\models p({\bf d}^*).
\end{align*}

\begin{proposition}\label{prop:prima.transformation.ht}
    ${\tuple{H,I} \modelsht \Gamma}$ iff ${I^H\models \gamma \Gamma}$.
\end{proposition}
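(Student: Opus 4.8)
The plan is to prove the biconditional by structural induction on the formula~$F$, the extension to a theory~$\Gamma$ being immediate since both~${\tuple{H,I}\modelsht\Gamma}$ and~${I^H\models\gamma\Gamma}$ reduce to the conjunction over the members of~$\Gamma$. Before the main induction I would isolate two ``projection'' lemmas that make precise how~$I^H$ encodes the two worlds of~$\tuple{H,I}$. At the term level, for every term~$t$ over~$\sigma$ one has $t^{I^H}=t^H$ and $\hat{t}^{\,I^H}=t^I$: this is read directly off the definition of~$I^H$, which sets $f^{I^H}=f^H$ and $\hat{f}^{\,I^H}=f^I$ on the intensional function symbols and interprets the others (shared by~$H$ and~$I$ because $H\lesseq I$) as~$I$ does. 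Lifting this to atoms and pushing it through all connectives and quantifiers yields
\[
I^H\models \hat F \iff I\models F
\qquad\text{and}\qquad
I^H\models F \iff H\models F ,
\]
for every~$F$ over~$\sigma$, where the right-hand sides use classical satisfaction with~$\sneg$ read as classical negation. Both are routine inductions; the one point to record is that~$\sneg$ and~$\neg$ behave identically under classical satisfaction, so~$\NN{F}$ and~$F$ are interchangeable there, giving also $I^H\models\NN{F}\iff H\models F$.

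I would also note the standard persistence property ${\tuple{H,I}\modelsht F \Rightarrow I\models F}$, proved by an easy induction straight from the clauses defining~$\modelsht$ (the atom and equality clauses demand the $I$-condition, the implication clause has $I\models F\to G$ built in, and the~$\sneg$ clause demands $I\not\models F$). This is exactly what reconciles the simplified rule $\gamma(\neg F)=\neg\hat F$ with the definition of~$\modelsht$ applied to the abbreviation ${\neg F = F\to\bot}$.

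With these in hand, the main induction is a case analysis on the shape of~$F$. For atomic~$F$, $\gamma F=F\wedge\hat F$, and the two projection lemmas give $I^H\models\gamma F$ iff ($H\models F$ and $I\models F$), which is precisely the~$\modelsht$-clause for atoms and for equality. The connectives $\wedge,\vee$ and the quantifiers $\forall,\exists$ pass straight through the induction hypothesis, using that~$I^H$ and~$I$ share the domain over which the HT-quantifiers range. The three interesting cases are the negations and the implication: for~$\neg F$ I combine $I^H\models\neg\hat F\iff I\not\models F$ with persistence to match $\tuple{H,I}\modelsht\neg F\iff I\not\models F$; for~$\sneg F$ I use both projection lemmas to turn $\gamma(\sneg F)=\neg\hat F\wedge\neg\NN{F}$ into the pair of conditions ``$I\not\models F$ and $H\not\models F$'', which is exactly the~$\sneg$-clause of~$\modelsht$; and for $F\to G$ the conjunct $\gamma F\to\gamma G$ reproduces, via the induction hypothesis, the disjunction $\tuple{H,I}\not\modelsht F$ or $\tuple{H,I}\modelsht G$, while the conjunct $\hat F\to\hat G$ reproduces, via the projection lemma, the condition $I\models F\to G$, so together they give the~$\modelsht$-clause for implication.

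The step I expect to be the main obstacle is the~$\sneg$ case, since it is the only place where the two halves of the translation genuinely interact: I must check that the conjunction $\neg\hat F\wedge\neg\NN{F}$ captures the two separate demands $I\not\models F$ and $H\not\models F$ imposed by the~$\sneg$-clause, which hinges on having both projection lemmas stated correctly and on the interchangeability of~$\sneg$ and~$\neg$ under classical satisfaction. A secondary care point is the bookkeeping in the term-evaluation step, where the asymmetric definition of~$I^H$ (namely $f^{I^H}=f^H$ versus $\hat{f}^{\,I^H}=f^I$ on the intensional set-function symbols) must be applied consistently so that $t^{I^H}=t^H$ and $\hat{t}^{\,I^H}=t^I$ hold uniformly.
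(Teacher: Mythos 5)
Your proposal is correct and follows essentially the same route as the paper: the same pair of projection lemmas ($t^{I^H}=t^H$ and $\hat{t}^{\,I^H}=t^I$ at the term level, lifted to $I^H\models\hat F\Leftrightarrow I\models F$ and $I^H\models\NN{F}\Leftrightarrow H\models F$), the same use of persistence in the $\neg$ case, and the same case analysis for the atomic, $\neg$, $\sneg$, and implication cases. Your formulation $I^H\models F\Leftrightarrow H\models F$ with $\sneg$ read classically is just a harmless restatement of the paper's Lemma~\ref{lem:nonprima.transformation}, justified by the interchangeability of $\sneg$ and $\neg$ under classical satisfaction, which you correctly record.
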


The following set of formulas characterizes which interpretations of the signature~$\hat\sigma$ correspond to \htinterps.
By~$\HT$ we denote the set of all formulas of the form~${\forall\boldX (p(\boldX) \to \hat{p}(\boldX))}$ for every predicate symbol~$p \in \P$.
%
%
By $\AGG$ we denote the set of all sentences of the form
\begin{align}
\forall \boldX \vartuple \big(\pmemF{\vartuple}{\hat{\sets}^x_{|E/\boldX|}(\boldX)}
        &\leftrightarrow
        \exists \boldY \hat{F}^{\mathit{x}}\big)
\label{eq:agg_element.clingo.there}
\\
\forall \boldX \vartuple \big(\pmemF{\vartuple}{\setsg_{|E/\boldX|}(\boldX)}
        &\leftrightarrow
        \exists \boldY \gamma (F^{\mathit{cli}})\big)
\label{eq:agg_element.clingo.here}
\\
\forall \boldX \vartuple \big(\pmemF{\vartuple}{\setsd_{|E/\boldX|}(\boldX)}
&\leftrightarrow
\exists \boldY \NN{F^{\mathit{dlv}}}\big)
\label{eq:agg_element.dlv.here}
\end{align}
for every~$E/\boldX$ in~$\S$ with~$E$ of the form of~\eqref{eq:agel:2} and where
\begin{align*}
F^{\mathit{cli}} \ &\text{ is } \vartuple = \ftuple(t_1,\dots,t_k) \wedge  \taug(l_1)\wedge\cdots\wedge \taug(l_m)
\\
F^{\mathit{dlv}} \ &\text{ is } \vartuple = \ftuple(t_1,\dots,t_k) \wedge  \taud(l_1)\wedge\cdots\wedge \taud(l_m)
\end{align*}

\begin{proposition}\label{prop:prima.agg.interpretation}
    An interpretation of the signature~$\hat\sigma$ satisfies~$\HT$ and~$\AGG$ iff it can be represented in the form $I^H$ for some \agghtinterp~$\tuple{H,I}$.
\end{proposition}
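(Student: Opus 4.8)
The plan is to prove both implications by setting up an explicit correspondence between interpretations $J$ of $\hat\sigma$ and pairs $\tuple{H,I}$, and then verifying that $\HT$ encodes the predicate part of the condition $H \lesseq I$ while the three families of axioms in $\AGG$ encode the set\nobreakdash-valued function conditions in the definitions of \agginterp and \agghtinterp. Throughout I assume the standing convention that we consider only interpretations of $\hat\sigma$ whose reducts to the fixed symbols (the sort domains, the object constants naming ground terms, the comparison predicates, the membership predicate $\in$, and the functions $\ftuple,\fcount,\fsum$) are standard; these requirements cannot be forced by first\nobreakdash-order axioms, so they must be part of the setting, and both directions preserve them verbatim.

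For the direction from $J$ to $\tuple{H,I}$, given such a $J$ I would define two interpretations of $\sigma$ sharing $J$'s domains and its interpretation of every fixed symbol, and set $p^H = \{\mathbf{d} : J \models p(\mathbf{d}^*)\}$ and $p^I = \{\mathbf{d} : J \models \hat{p}(\mathbf{d}^*)\}$ for every $p \in \P$, together with $(\sets^x)^H = (\sets^x)^J$ and $(\sets^x)^I = (\hat{\sets}^x)^J$ for each set symbol and each $x \in \{\cli,\mathit{dlv}\}$. By construction $J = I^H$. That $\tuple{H,I}$ is a genuine \htinterp, i.e.\ $H \lesseq I$, is exactly what $\HT$ supplies: $J \models \forall\boldX(p(\boldX) \to \hat{p}(\boldX))$ says $p^H \subseteq p^I$, while the comparison predicates and the functions $\ftuple,\fcount,\fsum$ are fixed and hence interpreted identically in $H$ and $I$, and these are the only symbols outside $\P \cup \F$. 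Standardness of $H$ and $I$ is inherited from $J$.

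The heart of the argument, used identically in both directions, is three evaluation lemmas that track which symbols carry the ``here'' values and which the ``there'' values in $I^H$. First, a \emph{hat lemma}: for any formula $G$ over $\sigma$, $I^H \models \hat{G}$ iff $I \models G$, since $\hat{G}$ contains only hatted symbols, set to the $I$\nobreakdash-values in $I^H$, plus fixed symbols. Applied to $\hat{F}^{\mathit{cli}}$ and $\hat{F}^{\mathit{dlv}}$, this turns axiom~\eqref{eq:agg_element.clingo.there} into precisely the \agginterp condition on $I$, via the identity $(\sets^x)^I = (\hat{\sets}^x)^J$. Second, the pointwise reading of Proposition~\ref{prop:prima.transformation.ht}, obtained by substituting names for the free variables $\boldX,\vartuple,\boldY$: $\tuple{H,I} \modelsht F^{\mathit{cli}}$ iff $I^H \models \gamma(F^{\mathit{cli}})$. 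Since $\gamma$ commutes with $\exists\boldY$, axiom~\eqref{eq:agg_element.clingo.here} together with $(\setsg)^J = (\setsg)^H$ is exactly the \agghtinterp condition on $(\setsg)^H$. Third, an \emph{$\NN{\cdot}$\nobreakdash-lemma}: for any $G$ over $\sigma$, $I^H \models \NN{G}$ iff $H \models G$, because $\NN{G}$ is $G$ with every $\sneg$ rewritten to $\neg$ (the two coincide in a single classical interpretation) and uses only unhatted symbols, which in $I^H$ carry the $H$\nobreakdash-values. Applied to $F^{\mathit{dlv}}$, this matches axiom~\eqref{eq:agg_element.dlv.here} with the \agghtinterp condition on $(\setsd)^H$, whose definition invokes plain classical satisfaction in $H$. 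Reading these three equivalences from right to left gives the converse: if $\tuple{H,I}$ is an \agghtinterp then $I^H$ satisfies all of \eqref{eq:agg_element.clingo.there}--\eqref{eq:agg_element.dlv.here}, and $H \lesseq I$ yields $I^H \models \HT$.

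The main obstacle is the careful bookkeeping in these three lemmas, in particular the asymmetry between the two here\nobreakdash-world conditions: the \clingo\ set function in $H$ is characterised through the full $\gamma$\nobreakdash-translation and thus depends on both worlds via Proposition~\ref{prop:prima.transformation.ht}, whereas the \dlv\ set function in $H$ is characterised through the purely classical rewriting $\NN{\cdot}$ and depends on $H$ alone, reflecting the different treatment of $\sneg$ under the two semantics. Getting the induction right for the hat and $\NN{\cdot}$ lemmas when $\sneg$ occurs (as it does inside $F^{\mathit{dlv}}$ and $\hat{F}^{\mathit{dlv}}$), and confirming that the free\nobreakdash-variable version of Proposition~\ref{prop:prima.transformation.ht} is legitimate, are the only genuinely delicate points; the remaining verifications are routine unfolding of the definitions.
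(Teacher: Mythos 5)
Your proposal is correct and follows essentially the same route as the paper: your three evaluation lemmas are exactly the paper's Lemma~\ref{lem:prima.transformation.cl} (the hat lemma), Proposition~\ref{prop:prima.transformation.ht} instantiated with names for free variables, and Lemma~\ref{lem:nonprima.transformation} (the $\NN{\cdot}$ lemma), and your matching of the three $\AGG$ axiom families to the three set-function conditions mirrors the paper's Lemmas~\ref{lem:prima.agg-interp.cl}--\ref{lem:prima.agg-interp.dlv}, while your explicit construction of $\tuple{H,I}$ from $J$ is the paper's Lemma~\ref{lem:prima.interpretation}. Your remark that standardness must be a standing assumption (rather than something forced by the axioms) is a legitimate refinement the paper leaves implicit, surfacing only in the restriction to standard interpretations in Theorem~\ref{thm:strong.equivalece.classical}.
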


We are ready to state the main result of this section showing that we can use classical FO logic to reason about strong equivalence under the \clingo\ and \dlv\ semantics.

\begin{theorem}\label{thm:strong.equivalece.classical}
    Finite programs~$\Pi_1$ and~$\Pi_2$ are strongly equivalent under the \clingo\ semantics iff all standard interpretations of~$\hat\sigma$ satisfy the sentence
    \begin{align*}
        \bigwedge \HT \wedge \bigwedge \AGG \to (F_1 \leftrightarrow F_2)
    \end{align*}
    where~$F_i$ is the conjunction of all sentences in~$\gamma\taug\Pi_i$.
    The same holds if we replace~\clingo\ and~$\taug$ by~\dlv\ and~$\taud$.
\end{theorem}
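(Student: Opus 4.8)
The plan is to chain Theorem~\ref{thm:strong.equivalece.clingo} with Propositions~\ref{prop:prima.transformation.ht} and~\ref{prop:prima.agg.interpretation}, thereby reducing the semantic ``same \agghtmodels'' condition to a single classical first-order entailment over $\hat\sigma$. I would present the \clingo\ case in full; the \dlv\ case is word-for-word identical with $\taug$ replaced by $\taud$ (using Theorem~\ref{thm:strong.equivalece.dlv} in place of Theorem~\ref{thm:strong.equivalece.clingo}, and noting that Propositions~\ref{prop:prima.transformation.ht} and~\ref{prop:prima.agg.interpretation} already treat both values of $x$).

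First I would invoke Theorem~\ref{thm:strong.equivalece.clingo}: $\Pi_1$ and $\Pi_2$ are strongly equivalent for \clingo\ iff $\taug(\Pi_1)$ and $\taug(\Pi_2)$ have the same \agghtmodels, i.e.\ iff for every \agghtinterp~$\tuple{H,I}$ we have $\tuple{H,I}\modelsht\taug(\Pi_1)$ exactly when $\tuple{H,I}\modelsht\taug(\Pi_2)$. Next I would rewrite each side with Proposition~\ref{prop:prima.transformation.ht}, which gives $\tuple{H,I}\modelsht\taug(\Pi_i)$ iff $I^H\models\gamma\,\taug(\Pi_i)$. Since $\gamma\,\taug(\Pi_i)$ is just the set of $\gamma$-images of the sentences of the theory $\taug(\Pi_i)$, satisfying this set is the same as satisfying its conjunction, so $I^H\models\gamma\,\taug(\Pi_i)$ iff $I^H\models F_i$. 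This is precisely where the finiteness hypothesis is used: because $\Pi_i$ is finite, $\gamma\,\taug(\Pi_i)$ is a finite set and the conjunction $F_i$ is a genuine (single) first-order sentence, so the displayed formula in the statement is well defined. At this point the strong-equivalence condition has become: for every \agghtinterp~$\tuple{H,I}$, $I^H\models F_1\leftrightarrow F_2$.

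Then I would switch the quantification from \agghtinterps to interpretations of $\hat\sigma$ using Proposition~\ref{prop:prima.agg.interpretation}: the interpretations of the form $I^H$ are exactly the standard interpretations of $\hat\sigma$ satisfying $\HT$ and $\AGG$. Hence the condition is equivalent to: every standard interpretation $J$ of $\hat\sigma$ with $J\models\HT\wedge\AGG$ satisfies $J\models F_1\leftrightarrow F_2$. Finally I would unfold the implication in the displayed sentence: a standard $J$ satisfies $\bigwedge\HT\wedge\bigwedge\AGG\to(F_1\leftrightarrow F_2)$ iff, whenever $J\models\HT\wedge\AGG$, also $J\models F_1\leftrightarrow F_2$. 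Thus ``all standard interpretations of $\hat\sigma$ satisfy the displayed sentence'' coincides with the previous condition, closing the equivalence.

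The hard part will be the bookkeeping that aligns ``standard interpretation of $\hat\sigma$'' with the representation delivered by Proposition~\ref{prop:prima.agg.interpretation}, which is phrased over interpretations of $\hat\sigma$. I would check that the standard-interpretation conditions (the fixed universes of the general, tuple, and set sorts, ground terms interpreted as themselves, the comparison symbols interpreted by the chosen total order, and the fixed readings of $\ftuple$, $\in$, $\fcount$, and $\fsum$) are common to $\sigma$ and $\hat\sigma$, are preserved by the $I^H$ construction since on all non-duplicated symbols $I^H$ agrees with the standard interpretation $I$, and are exactly the conditions tacitly in force in Proposition~\ref{prop:prima.agg.interpretation}. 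With that identification secured, the quantification over \agghtinterps and the quantification over standard $\hat\sigma$-interpretations satisfying $\HT\wedge\AGG$ coincide, and the remaining steps are a direct substitution of the three earlier results.
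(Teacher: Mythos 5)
Your proposal is correct and takes essentially the same approach as the paper: both proofs chain Theorem~\ref{thm:strong.equivalece.clingo} (resp.~Theorem~\ref{thm:strong.equivalece.dlv}) with Propositions~\ref{prop:prima.transformation.ht} and~\ref{prop:prima.agg.interpretation}, using finiteness of $\Pi_1$ and~$\Pi_2$ only to ensure the conjunctions $F_1$, $F_2$, $\bigwedge\HT$, and $\bigwedge\AGG$ are well-defined sentences---the paper simply traverses the same chain of equivalences in the opposite direction, starting from the classical side. The standardness bookkeeping you flag at the end is exactly the point the paper handles implicitly by quantifying over standard models of $\HT\cup\AGG$ when invoking Proposition~\ref{prop:prima.agg.interpretation}, so your treatment is, if anything, slightly more careful on that detail.
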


\section{Discussion and Conclusions}

In this paper, we provided a characterization of the semantics of logic programs with aggregates which bypasses grounding.
We focus on the semantics for recursive aggregates used by ASP solvers \clingo\ and \dlv.
Our characterization reflects the intuition that aggregates are functions that apply to sets, usually missing in most formal characterizations of aggregates, which treat them as monolithic constructs.
To achieve that, we translate logic programs with aggregates into First-Order sentences with intensional functions, establishing a connection between these two extensions of logic programs.
We also show how this characterization can be used to study the strong equivalence of programs with aggregates and variables under either semantics.
Finally, we show how to reduce the task of checking strong equivalence to reasoning in classical First-Order logic, which serves as a foundation for automating this procedure.
We also axiomatize the meaning of the symbols used to represent sets.
The axiomatization of the symbols representing aggregate operations ($\mathtt{sum}$ and~$\mathtt{count}$) developed by~\citet{fahali22a} for non\nobreakdash-recursive aggregates also applies to recursive aggregates because these function symbols stay non\nobreakdash-intensional.
Immediate future work includes the integration of this characterization of aggregates with the formalization of arithmetics used by the verification tool \anthem~\cite{falilusc20a,fahalilite23a} and the implementation of a new verification tool that can accommodate programs with aggregates.


\section*{Acknowledgements}
This research is partially supported by NSF CAREER award 2338635.
Any opinions, findings, and conclusions or recommendations expressed in this material are those of the authors and do not necessarily reflect the views of the National Science Foundation.

\bibliography{krr.bib, procs.bib}

\section{Proof of Results}
\appendix
\subsection{Some Results on Here-and-There Logic}\label{sec:results_ht}

The following results show that some of the usual properties of the logic of Here-and-There are preserved in the extension introduced here.
\begin{proposition}\label{prop:persistence}
The following properties hold:
\begin{itemize}
  \item $\tuple{I,I} \modelsht F$ iff~$I \models F$.
  \item If~$\tuple{H,I} \modelsht F$, then~$I \models F$.
  \item $\tuple{H,I} \modelsht \neg F$ iff~$I \not\models F$.
  \item $\tuple{H,I} \modelsht \neg\neg F$ iff~$I \models F$.
\end{itemize}
\end{proposition}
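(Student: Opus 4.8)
The plan is to establish the four items in the order listed, proving the first two independently by structural induction on $F$ (formulas over $\sigma^I$) and then obtaining the last two as corollaries. First I would prove $\tuple{I,I} \modelsht F$ iff $I \models F$ by induction. The atomic and equality cases collapse immediately, since e.g. $\tuple{I,I} \modelsht p(\boldt)$ asks for $I \models p(\boldt)$ twice. The case of the new connective $\sneg$ is in fact the cleanest, because its $\modelsht$ clause is stated directly in terms of classical satisfaction: $\tuple{I,I} \modelsht \sneg F$ iff $I \not\models F$ (both worlds being $I$), which is exactly the classical condition $I \models \sneg F$. The connectives $\wedge, \vee$ and the quantifiers are routine uses of the induction hypothesis. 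The only case needing a short computation is $F \to G$: here $\tuple{I,I} \modelsht F \to G$ holds iff $I \models F \to G$ \emph{and} ($\tuple{I,I} \not\modelsht F$ or $\tuple{I,I} \modelsht G$); the induction hypothesis rewrites the second conjunct as $I \not\models F$ or $I \models G$, i.e. as $I \models F \to G$ again, so the whole reduces to $I \models F \to G$.

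Next I would prove persistence, that $\tuple{H,I} \modelsht F$ implies $I \models F$, by a second structural induction. Here the atomic, equality, $\sneg$, and implication cases are all immediate, because the corresponding $\modelsht$ clauses each carry the matching classical ``there-world'' requirement explicitly (namely $I \models p(\boldt)$, $t_1^I = t_2^I$, $I \not\models F$, and $I \models F \to G$, respectively); the $\wedge, \vee, \forall, \exists$ cases apply the induction hypothesis directly. Note that intensional functions enter only at the atomic and equality base cases, so they do not disturb the propositional structure of either induction.

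With persistence available, the last two items are short derivations. For $\tuple{H,I} \modelsht \neg F$ iff $I \not\models F$, I would unfold $\neg F$ as $F \to \bot$: the implication clause yields $I \models F \to \bot$ (equivalently $I \not\models F$) together with $\tuple{H,I} \not\modelsht F$, using that $\tuple{H,I} \modelsht \bot$ never holds. The forward direction is then immediate, and the backward direction uses the contrapositive of persistence to supply $\tuple{H,I} \not\modelsht F$ from $I \not\models F$. Finally, applying this third item to $\neg F$ gives $\tuple{H,I} \modelsht \neg\neg F$ iff $I \not\models \neg F$, and $I \not\models \neg F$ iff $I \models F$ classically, which is the fourth item.

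I do not expect a genuine obstacle here, since these are extensions of standard Here-and-There facts; the only real subtlety is bookkeeping. The crux is to keep the two satisfaction relations ($\modelsht$ and classical $\models$) cleanly separated while exploiting that the $\modelsht$ clauses for $\to$ and $\sneg$ already embed the classical ``there'' condition. This is exactly what makes persistence immediate for those cases and what makes the third item fall out once persistence is in place; correspondingly, the order of the four proofs (1 and 2 first, then 3, then 4) must be respected to avoid circularity.
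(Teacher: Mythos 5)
Your proof is correct and takes essentially the same route as the paper's: items 1 and 2 by structural induction (with the atomic, $\sneg$, and implication cases of persistence immediate because those $\modelsht$ clauses carry the there-world condition explicitly), item 3 by unfolding $\neg F$ as $F \to \bot$ and using the contrapositive of item 2 for the backward direction, and item 4 as a direct consequence of item 3. The only difference is one of detail: you spell out the full induction for item 1, where the paper simply notes it is immediate when $H = I$.
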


\begin{proof}
  \emph{Item~1} is immmediate when~$H=I$. \emph{Item~2.} If~$F$ is an atomic sentence or a sentence of the forms~$\sneg F_1$ or~$F_1 \to F_2$, the result follows from the definition of~$\modelsht$.
  The remaining cases are proved by induction on the size of~$F$.
  \emph{Item~3.} $\tuple{H,I} \modelsht \neg F$ iff~$\tuple{H,I} \modelsht F \to\bot$ iff~$I \models F \to \bot$ and $\tuple{H,I} \not\modelsht F$ iff~$I \not\models F$ (the last equivalece is a consequence of Item~2).
  \emph{Item~4} is an immmediate consequence of Item~3.
\end{proof}

The second item of Proposition~\ref{prop:persistence} shows that the persistence property of the logic of Here-and-There is preserved in the extension introduced here.

The following result sheds some light on the behavior of the new negation connective.

\begin{proposition}
  \label{prop:ht:facts:abbr:neg}
  The following properties hold:
  \begin{itemize}
  \item $\tuple{H,I} \modelsht \sneg F$ iff~$I \models \sneg F$ and~$H \models \sneg F$.
  \item $\tuple{H,I} \modelsht \sneg\sneg F$ iff~$I \models F$ and~$H \models F$.
  \item $\tuple{H,I} \modelsht \sneg\sneg F$ implies~$\tuple{H,I} \not\modelsht \sneg F$.

  \item $\tuple{H,I} \modelsht \sneg\sneg p(\boldt)$ iff~$\tuple{H,I} \models p(\boldt)$.
  \end{itemize}
\end{proposition}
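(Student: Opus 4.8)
The plan is to derive all four items directly from the defining clause ``$\tuple{H,I} \modelsht \sneg F$ iff $I \not\models F$ and $H \not\models F$'', together with the stipulation that $\sneg$ is classical negation in each world, i.e.\ $J \models \sneg F$ iff $J \not\models F$ for $J \in \{H,I\}$. No induction is needed; each item reduces to a short chain of equivalences, and the proof can be organized so that later items are built from earlier ones.

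First I would prove Item~1 by unfolding the definition: $\tuple{H,I} \modelsht \sneg F$ holds iff $I \not\models F$ and $H \not\models F$, and by the classical reading of $\sneg$ these two conditions are exactly $I \models \sneg F$ and $H \models \sneg F$. Item~2 then follows by instantiating Item~1 at the formula $\sneg F$, which gives $\tuple{H,I} \modelsht \sneg\sneg F$ iff $I \models \sneg\sneg F$ and $H \models \sneg\sneg F$; since $\sneg$ is classical negation, double-negation elimination holds in each world, so $J \models \sneg\sneg F$ iff $J \models F$, yielding the stated condition $I \models F$ and $H \models F$. Equivalently, one can unfold the outer $\sneg$ to obtain $I \not\models \sneg F$ and $H \not\models \sneg F$ and apply classical double negation directly.

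For Item~3 I would use Item~2: if $\tuple{H,I} \modelsht \sneg\sneg F$ then in particular $I \models F$, so $I \not\models F$ fails; but the defining clause for $\sneg F$ requires $I \not\models F$, hence $\tuple{H,I} \not\modelsht \sneg F$. For Item~4 I would combine Item~2 specialized to $F = p(\boldt)$ with the satisfaction clause for atoms, namely that $\tuple{H,I} \modelsht p(\boldt)$ iff $I \models p(\boldt)$ and $H \models p(\boldt)$; the two characterizations coincide, giving $\tuple{H,I} \modelsht \sneg\sneg p(\boldt)$ iff $\tuple{H,I} \modelsht p(\boldt)$.

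There is no substantive obstacle here, as every step is definitional. The one point worth flagging is the contrast with the intuitionistic negation $\neg$ of Proposition~\ref{prop:persistence}, whose truth at $\tuple{H,I}$ depends only on the there-world $I$; the new connective $\sneg$ instead constrains both worlds symmetrically. This is precisely why $\sneg\sneg F$ recovers truth of $F$ in \emph{both} $H$ and $I$ (Item~2) rather than merely in $I$, and why Item~3 --- the incompatibility of $\sneg\sneg F$ with $\sneg F$ --- holds, whereas for $\neg$ one instead has $\tuple{H,I} \modelsht \neg\neg F$ iff $I \models F$.
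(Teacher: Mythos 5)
Your proposal is correct and follows essentially the same route as the paper's proof: each item is obtained by unfolding the defining clause for $\sneg$ (and the classical condition $J \models \sneg F$ iff $J \not\models F$), with Item~3 derived from Item~2 and Item~4 from Item~2 plus the atom clause, exactly as the authors do. Your derivation of Item~2 via Item~1 instantiated at $\sneg F$ is a harmless cosmetic variant of the paper's direct unfolding, which you also note as an alternative.
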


\begin{proof}
\noindent\emph{Item~1}:
$\tuple{H,I} \modelsht \sneg F$
iff $I\not\models F$ and $H \not\models F$ (by definition)
iff $I\models \sneg F$ and $H \models \sneg F$ (by definition).
\\[3pt]
\noindent\emph{Item~2}:
$\tuple{H,I} \modelsht \sneg\sneg F$
iff $I \not\models \sneg F$ and $H \not\models \sneg F$ (by definition)
iff $I \models F$ and $H \models F$.
\\[3pt]
\noindent\emph{Item~3}:
$\tuple{H,I} \modelsht \sneg\sneg F$ implies $I \models F$ and $H \models F$ (Item~2) and, thus, $\tuple{H,I} \not\modelsht \sneg F$.
\\[3pt]
\noindent\emph{Item~4}:
${\tuple{H,I} \modelsht \sneg\sneg p(\boldt)}$
iff~$I \models p(\boldt)$ and~$H \models p(\boldt)$ (by Item~2)
iff~$\tuple{H,I} \models p(\boldt)$ (by definition).
\end{proof}


\begin{proposition}\label{prop:ht.facts}
  The following properties hold if~$\boldt$ does not contain intensional symbols.
  \begin{itemize}
  \item $\tuple{H,I} \modelsht p(\boldt)$ iff~$H \models p(\boldt)$,

  \item $\tuple{H,I} \modelsht \sneg p(\boldt)$ iff~$\tuple{H,I} \modelsht \neg p(\boldt)$ iff~$I \not\models p(\boldt)$.

\end{itemize}
\end{proposition}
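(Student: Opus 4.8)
The plan is to reduce everything to two observations that both follow from the definition of $H \lesseq I$. First, since $\boldt$ contains no intensional function symbols, the tuple of terms evaluates identically in the two worlds, i.e. $\boldt^H = \boldt^I$. Second, for \emph{every} predicate symbol $p$ we have $p^H \subseteq p^I$: if $p \in \P$ this is the inclusion required by $\lesseq$, and if $p \notin \P$ it is the equality $p^H = p^I$. The term-coincidence claim is established by a routine induction on the structure of $\boldt$: the base case uses that $H$ and $I$ share the same universe, so every name $d^*$ and every non-intensional object constant is interpreted identically, and the inductive step uses the clause $f^H = f^I$ of $\lesseq$, which applies to every function symbol occurring in $\boldt$ precisely because none of them is intensional.

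For the first bullet I would argue as follows. By the satisfaction clause for atoms, $\tuple{H,I} \modelsht p(\boldt)$ holds iff $I \models p(\boldt)$ and $H \models p(\boldt)$, so the left-to-right direction is immediate. For the converse, assume $H \models p(\boldt)$, that is $\boldt^H \in p^H$. Using $\boldt^H = \boldt^I$ together with $p^H \subseteq p^I$, I obtain $\boldt^I = \boldt^H \in p^H \subseteq p^I$, hence $I \models p(\boldt)$; combining the two gives $\tuple{H,I} \modelsht p(\boldt)$. This argument also records the contrapositive fact that $I \not\models p(\boldt)$ implies $H \not\models p(\boldt)$, which is exactly what the second bullet will need.

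For the second bullet the chain has two links. The equivalence ``$\tuple{H,I} \modelsht \neg p(\boldt)$ iff $I \not\models p(\boldt)$'' is just the instance of the third item of Proposition~\ref{prop:persistence} with $F = p(\boldt)$, so no new work is required there. For the remaining equivalence, the definition of $\sneg$ gives $\tuple{H,I} \modelsht \sneg p(\boldt)$ iff both $I \not\models p(\boldt)$ and $H \not\models p(\boldt)$; but the contrapositive noted above shows that $I \not\models p(\boldt)$ already forces $H \not\models p(\boldt)$, so the conjunction collapses to the single condition $I \not\models p(\boldt)$. Chaining the two links yields $\tuple{H,I} \modelsht \sneg p(\boldt)$ iff $\tuple{H,I} \modelsht \neg p(\boldt)$ iff $I \not\models p(\boldt)$.

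No step is a genuine obstacle; the only point demanding care is the role of the hypothesis that $\boldt$ is free of intensional symbols. It is exactly this hypothesis that licenses $\boldt^H = \boldt^I$ and therefore lets the inclusion $p^H \subseteq p^I$ transfer membership from $H$ to $I$. Without it, an intensional function in $\boldt$ could take different values in $H$ and $I$, and the collapse underlying both bullets would break down — which is precisely why the general atom clause of $\modelsht$ must quantify over both worlds rather than over $H$ alone.
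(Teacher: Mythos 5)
Your proof is correct and follows essentially the same route as the paper's: both arguments rest on the two facts that $\boldt^H = \boldt^I$ (since $\boldt$ is free of intensional function symbols) and $p^H \subseteq p^I$, using them to collapse the two-world atom clause to the $H$-side for the first bullet and the $\sneg$ clause to the $I$-side for the second. Your version merely makes explicit what the paper leaves implicit — the induction behind $\boldt^H = \boldt^I$, the case split showing $p^H \subseteq p^I$ holds for all predicates, and the appeal to Proposition~\ref{prop:persistence} for the $\neg$ link — so there is nothing to correct.
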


\begin{proof}
  \noindent\emph{Item~1}: ${\tuple{H,I} \modelsht p(\boldt)}$
  iff ${H \models p(\boldt)}$ and ${I \models p(\boldt)}$
  iff ${\boldt^H \in p^H}$ and ${\boldt^I \in p^I}$
  iff ${\boldt^H \in p^H}$ (because ${p^H \subseteq p^I}$ and~${\boldt^H = \boldt^I}$)
  iff ${H \models p(\boldt)}$.
  \\[3pt]
  \noindent\emph{Item~2}: $\tuple{H,I} \modelsht \sneg p(\boldt)$
  iff $H \not\models p(\boldt)$ and $I \not\models p(\boldt)$
  iff $\boldt^H \notin p^H$ and $\boldt^I \notin p^I$
  iff $\boldt^I \notin p^I$ ($p^H \subseteq p^I$ and~$\boldt^H = \boldt^I$)
  iff $I \not\models p(\boldt)$.
\end{proof}

The first item of the Proposition~\ref{prop:ht.facts} means that, when a theory is standard and has no intensional functions, our satisfaction relation is equivalent to the standard satisfaction relation in QEL~\cite{peaval08a}.
The third item of the Proposition~\ref{prop:ht.facts} implies that~${\tuple{H,I} \modelsht \sneg\sneg p(\boldt)}$ iff~${\tuple{H,I} \modelsht \neg\neg p(\boldt)}$ does not hold even when~$\boldt$ does not contain intensional symbols.
%
%
This behavior is consistent with the way a straightforward generalization of the FLP\nobreakdash-reduct~\cite{fapfle11a} treats double negation~\cite{harlif19a}.
%


\subsection{Proof of Section Correspondence with Clingo and DLV}

\subsection{Grounding}

\begin{lemma}\label{lem:grounding.cl}
    An interpretation $I$ satisfies a sentence~$F$
    over~$\sigma^I$ iff $I$
    satisfies $\g IF$.
\end{lemma}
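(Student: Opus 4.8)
The plan is to prove the equivalence by structural induction on $F$, exploiting the fact that on the diagonal $\tuple{I,I}$ the here-and-there satisfaction relation collapses to classical satisfaction. Recall that the right-hand side $I \models \g I F$ is shorthand for $\tuple{I,I} \modelsht \g I F$, and that by Proposition~\ref{prop:persistence} (item~1) we have $\tuple{I,I} \modelsht G$ iff $I \models G$ for every FO sentence $G$. Thus the goal $I \models F$ iff $I \models \g I F$ amounts to comparing the classical truth value of $F$ under $I$ with that of the infinitary formula $\g I F$ under $\tuple{I,I}$, and the clauses for the connectives will line up on both sides.

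For the base cases I would split the atomic formulas according to whether they contain intensional symbols. If $p(\boldt)$ (or an equality $t_1 = t_2$) contains intensional symbols, then $\g I {p(\boldt)}$ equals $p(\boldt)$ itself and nothing remains beyond unfolding the definition of $\models$ on an atom. If it does not, grounding freezes it: $\g I {p(\boldt)}$ is $\top$ when $I \models p(\boldt)$ and $\bot$ otherwise, so $I \models \g I {p(\boldt)}$ holds exactly when $I \models p(\boldt)$. This freezing of non-intensional atoms to their $I$-value is the conceptual core of the lemma, and the equality atoms are handled identically. The case $F = \bot$ is immediate.

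The inductive step then proceeds connective by connective. For $\wedge$, $\vee$, $\exists$ and $\forall$ the grounding distributes over the connective (turning quantifiers into the infinitary $\{\cdot\}^\wedge$ and $\{\cdot\}^\vee$ indexed by domain elements), so these cases follow directly from the induction hypothesis together with the matching satisfaction clauses, using that $\exists X\,F(X)$ is satisfied by $I$ iff $I \models F(d^*)$ for some domain element $d$, which aligns exactly with satisfaction of the infinitary disjunction (and dually for $\forall$). For $\sneg$ I would use the diagonal collapse: $I \models \sneg F$ iff $I \not\models F$, and by the induction hypothesis this is iff $I \not\models \g I F$, which by the $\sneg$-clause of $\modelsht$ taken at $H=I$ is iff $\tuple{I,I} \modelsht \sneg \g I F$, i.e.\ iff $I \models \g I {\sneg F}$. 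Similarly $\tuple{I,I} \modelsht F_1 \to F_2$ reduces, via persistence, to the classical condition $I \not\models F_1$ or $I \models F_2$, so the implication case also closes from the induction hypothesis.

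The step I expect to require the most care is the bookkeeping around the two satisfaction relations: the left-hand side $I \models F$ is classical FO satisfaction, whereas the right-hand side $I \models \g I F$ is satisfaction of an infinitary formula defined through $\tuple{I,I} \modelsht$. The potential gap is that the $\modelsht$-clauses for $\sneg$ and $\to$ are a priori stronger than their classical counterparts, but Proposition~\ref{prop:persistence} guarantees that they coincide on the diagonal $H=I$, which is precisely what makes the induction go through. Note that this lemma is the non-modal base case on which the full here-and-there grounding result, Proposition~\ref{lem:grounding.ht}, is subsequently built, so I would prove it directly by this induction rather than deriving it from that proposition.
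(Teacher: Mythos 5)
Your proposal is correct and takes essentially the same approach as the paper's proof: an induction on the structure of~$F$ with the same case split---atomic sentences containing intensional symbols left untouched, non-intensional ones frozen to $\top$ or $\bot$, grounding distributing over $\sneg$ and the binary connectives, and quantifiers handled through the infinitary conjunctions and disjunctions indexed by domain elements. Your explicit bookkeeping of the two satisfaction relations via the diagonal collapse $\tuple{I,I}\modelsht G$ iff $I\models G$ (Proposition~\ref{prop:persistence}) is a detail the paper's terser proof leaves implicit, so it is a refinement of, not a departure from, the same argument.
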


\begin{proof}
By induction on the size of~$F$.
\\[5pt]
\emph{Case 1}: $F$ is an atomic sentence that contains intensional symbols. Then, $\g IF = F$ and the result is trivial.
\\[5pt]
\emph{Case 2}: $F$ is an atomic sentence that does not contain intensional symbols. Then, $\g IF = \top$ if $I \models F$ and $\g IF = \bot$ otherwise. The result follows immediately.
\\[5pt]
\emph{Case 3}: $F$ is of the form~$\sneg G$. Then, $\g IF = \sneg \g IG $ and the result follows by induction hypothesis.
\\[5pt]
\emph{Case 4}: $F$ is $\forall X G(X)$ with $X$ a variable of sort~$s$.
Then, $\g IF = \{ \g{I}{G(d^*)} \mid d^* \in |I|^s \}^\wedge$ and
\\\phantom{iff}~${\tuple{H,I} \modelsht F}$
\\iff~${\tuple{H,I} \modelsht G(d^*)}$ for each~$d\in|I|^s$ 
\\iff~${\tuple{H,I} \modelsht \g{I}{G(d^*)}}$ for each~$d\in|I|^s$ (induction)
\\iff~$\tuple{H,I} \modelsht \g IF$.
\\[5pt]
The case where~$F$ is $\exists X G(X)$ is analogous to Case~2. The remaining cases where~$F$ is~$G_1 \wedge G_2$, $G_1 \vee G_2$ or $F_1 \to F_2$ follow immediately by induction.
\end{proof}

\begin{proof}[Proof of Proposition~\ref{lem:grounding.ht}]
    By induction on~$F$ similar to Lemma~\ref{lem:grounding.cl}.
    \\[5pt]
    \emph{Case 1}: $F$ is an implication of the form~${G_1 \to G_2}$.
    Then, $\g IF$ is the implication~${\gr I{G_1} \to \g I{G_2}}$ and
    \\\phantom{iff}~${\tuple{H,I} \modelsht F}$
    \\iff~$I \models F$ and either~$\tuple{H,I} \not\modelsht G_1$ or~$\tuple{H,I} \modelsht G_2$
    \\iff~$I \models \g IF$ (Lemma~\ref{lem:grounding.cl}) and
    \\\phantom{iff} either~$\tuple{H,I} \not\modelsht \gr I{G_1}$
      or~$\tuple{H,I} \modelsht \g I{G_2}$
    \\iff~$\tuple{H,I} \modelsht \g IF$.
    \\[3pt]
    \emph{Case 2}: $F$ is of the form~$\sneg G$. Then, $\g IF = \sneg \g IG$ and $\tuple{H,I} \models \sneg G$
    \\iff $I \not\models G$ and~$H \not\models G$
    \\iff $I \not\models \g IG$ and~$H \not\models \g IG$ \hfill (Lemma~\ref{lem:grounding.cl})
    \\iff $\tuple{H,I} \models \sneg \g IG$
    \\[3pt]
    The other cases follow by induction as in Lemma~\ref{lem:grounding.cl}.
\end{proof}
\subsection{FT-reduct}

\begin{proof}[Proof of Proposition~\ref{prop:infinitary.interpretations}]
    We proceed by induction on the rank $r$ of $F$.
    For a formula $F$ of rank $r+1$, assume that, for all formulas $G$ of lesser rank than $F$ occurring in $F$, $\tuple{H,I} \modelsht G$ iff $\mathcal{H} \models G^{\mathcal{I}}$.
    \\[5pt]
    \emph{Base Case:} $r=0$, $F$ is a ground atomic formula.
        Then,
        ${\tuple{H,I} \modelsht F}$
        \\iff~${H \models F}$ and~${I\models F}$
        \\iff ${H\models F}$ and~${F^{\mathcal{I}} = F}$
        \\iff ${\mathcal{H} \models F^{\mathcal{I}}}$
    \\[5pt]
    \emph{Induction Step:}
    \\
    \emph{Case 1}: Formula $F$ of rank $r+1$ has form $\HF^\land$.
    Then,
        ${\tuple{H,I} \modelsht F}$
        \\iff ${\tuple{H,I} \modelsht G}$ for every formula $G$ in~$\HF$ \hfill(by definition)
        \\iff $\mathcal{H} \models G^{\mathcal{I}}$ for every formula $G$  in~$\HF$ \hfill(by induction)
        \\iff ${\mathcal{H} \models \{G^{\mathcal{I}} |\ G\in\HF\}^\land}$
        \\iff ${\mathcal{H} \models F^{\mathcal{I}}}$
    \\[3pt]
    \emph{Case 2}: Formula $F$ of rank $r+1$ has form $\HF^\lor$.
    Then,
        ${\tuple{H,I} \modelsht F}$
        \\iff ${\tuple{H,I} \modelsht G}$ for some formula $G$ in~$\HF$ \hfill(by definition)
        \\iff $\mathcal{H} \models G^{\mathcal{I}}$ for this certain formula $G$  in~$\HF$ \hfill(by induction)
        \\iff ${\mathcal{H} \models \{G^{\mathcal{I}} |\ G\in\HF\}^\lor}$
        \\iff ${\mathcal{H} \models F^{\mathcal{I}}}$
    \\[3pt]
    \emph{Case 3}: Formula $F$ of rank $r+1$ has form $G_1 \rar G_2$.
    Then,
        ${\tuple{H,I} \modelsht F}$
        \\iff ${I \models G_1\to G_2}$ and $\langle H, I\rangle \not\modelsht G_1$ or $\langle H, I\rangle \modelsht G_2$ \hfill(by definition)
        \\iff ${\tuple{I,I} \modelsht G_1\to G_2}$ and
        \\\phantom{iff} $\langle H, I\rangle \not\modelsht G_1$ or $\langle H, I\rangle \modelsht G_2$ \hfill(by definition)
        \\iff ${\mathcal{I} \models G_1^{\mathcal{I}}\to G_2^{\mathcal{I}}}$ and $\mathcal{H} \not\models G_1^{\mathcal{I}}$ or $\mathcal{H} \models G_2^{\mathcal{I}}$ \hfill(by induction)
        \\iff ${\mathcal{I} \models G_1\to G_2}$ and $\mathcal{H} \not\models G_1^{\mathcal{I}}$ or $\mathcal{H} \models G_2^{\mathcal{I}}$
        \\iff ${\mathcal{I} \models G_1\to G_2}$ and ${\mathcal{H} \models G_1^{\mathcal{I}} \to G_2^{\mathcal{I}}}$
        \\iff $F^{\mathcal{I}} = G_1^{\mathcal{I}} \to G_2^{\mathcal{I}}$
        and ${\mathcal{H} \models G_1^{\mathcal{I}} \to G_2^{\mathcal{I}}}$
        \\iff ${\mathcal{H} \models F^{\mathcal{I}}}$.
\end{proof}

\subsubsection{The $\tau$ translation.}

The $\tau$ translation transforms a logic program into an infinitary propositional formula~\cite{gehakalisc15a}.
For any ground atom~$A$, it is defined as follows:
\begin{itemize}
    \item $\tau(A) \text{ is } A$,
    \item $\tau(\text{not } A) \text{ is } \neg A$, and
    \item $\tau(\text{not not } A) \text{ is } \neg\neg A$.
\end{itemize}
 For a comparison symbol~$\prec$ and ground terms~$t_1$ and~$t_2$, it is defined as follows:
\begin{itemize}
    \item $\tau(t_1 \prec t_2) \text{ is } \top$ if the relation~$\prec$ holds between~$t_1$ and~$t_2$ and $\bot$ otherwise.
\end{itemize}
For an aggregate element~$E$ of the form of~\eqref{eq:agel:2} with~$\boldY$ the list of local variables occurring on it, $\Psi_{E}$ denotes the set of tuples~$\boldy$ of ground program terms of the same length as~$\boldY$.
We say that a subset~$\Delta$ of~$\Psi_{E}$ \emph{justifies} aggregate atom~$\mathtt{op}\{ E \} \prec u$ if the relation~$\prec$ holds between~$\hat{\mathtt{op}}[\Delta]$ and~$u$ where~$[\Delta] = \{ \boldt^\boldY_\boldy \mid \boldy \in \Delta \}$ and~$\boldt$ is the list of terms~$t_1, \dots, t_k$ in~$E$.
For an aggregate atom~$A$ of the form of~$\mathtt{op}\{ E \} \prec u$ with global vriables~$\boldX$, $\tau(A)$ it is defined as the infinitary formula
\begin{gather}
    \bigwedge_{\Delta \in \chi}
    \left(
        \bigwedge_{\boldy \in \Delta}
        \boldl^{\boldX\boldY}_{\boldx\boldy}
        \to
        \bigvee_{\boldy \in \Psi_{E^\boldX_\boldx} \setminus \Delta}
        \boldl^{\boldX\boldY}_{\boldx\boldy}
    \right)
    \label{eq:1:lem:abstrac.gringo.correspondence.ht}
\end{gather}
where~$\chi$ is the set of subsets~$\Delta$ of~$\Psi_{E}$ that do not justify aggregate atom~$\mathtt{op}\{ E^\boldX_\boldx \} \prec u$, and~$\boldl$ is the list~$l_1,\dotsc,l_m$ of literals in~$E$.
We omit the parentheses and write~$\tau F$ instead of~$\tau(F)$ when clear.
For a rule~$R$ of the form of~\eqref{rule:4} with global variables~$\boldZ$, $\tau R$ is the infinitary conjunction of all formulas of the form
\begin{align}
    \tau (B_1)^\boldZ_\boldz \wedge \dotsc \wedge \tau (B_n)^\boldZ_\boldz \to \tau \mathit{Head}^\boldZ_\boldz 
\end{align}
with~$\boldz$ being a list of ground program terms of the same length as~$\boldZ$.
For a program~$\Pi$, $\tau\Pi = \{ \tau R \mid R \in \Pi \}^\wedge$.

\subsubsection{Correspondence with \clingo.}

For any rule~$R$ without aggregates, it is not difficult to see that $\tau R = \g I {\taug R}$ for any standard interpretation~$I$.
For rules with aggregates $\tau R$ and~$\g I {\taug R}$ only differ in the translation of aggregates.
The following two results show the relation between~$\tau A$ and~$\g I {\taug A}$ for any aggregate atom~$A$.

\begin{lemma}\label{lem:abstrac.gringo.correspondence.cl}
    Let~$I$ be an \agginterp{}, $\mathtt{op}$ be an operation name and~$x \in \{\mathit{cli}, \mathit{dlv} \}$.
    Then, $I$ satisfies $\mathtt{op}(\sets^x_{|E/\boldX|}(\boldx)) \prec u$ iff~$I$ satisfies~\eqref{eq:1:lem:abstrac.gringo.correspondence.ht}.
\end{lemma}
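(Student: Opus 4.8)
The plan is to reduce both sides of the biconditional to a single combinatorial condition on a distinguished subset of $\Psi_E$ and then match that condition with the definition of $\chi$. Concretely, I would first define $\Delta^*$ to be the set $\{ \boldy \in \Psi_E \mid I \models \boldl^{\boldX\boldY}_{\boldx\boldy} \}$ of local instantiations made true by $I$. The preliminary observation is that, for the \emph{single} interpretation $I$, both $\neg$ and $\sneg$ collapse to classical negation (using the clause $I \models \sneg F$ iff $I \not\models F$), so $I \models \tau^{\cli}(l_i)^{\boldX\boldY}_{\boldx\boldy}$ and $I \models \tau^{\dlv}(l_i)^{\boldX\boldY}_{\boldx\boldy}$ are equivalent and each coincides with the classical truth in $I$ of the ground literal $(l_i)^{\boldX\boldY}_{\boldx\boldy}$. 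Consequently, by the defining condition of an \agginterp, $\sets^x_{|E/\boldX|}(\boldx)^I = [\Delta^*]$ for both values of $x$.

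With this in hand I would dispatch the left-hand side. Since $I$ is standard, it interprets $\fcount$ and $\fsum$ as $\widehat{\mathtt{count}}$ and $\widehat{\mathtt{sum}}$ applied to the set named by their argument, and it interprets the ground guard $u$ as itself; hence $I$ satisfies $\mathtt{op}(\sets^x_{|E/\boldX|}(\boldx)) \prec u$ iff the relation $\prec$ holds between $\widehat{\mathtt{op}}([\Delta^*])$ and $u$, i.e. iff $\Delta^*$ \emph{justifies} the aggregate atom $\mathtt{op}\{E^\boldX_\boldx\} \prec u$, i.e. iff $\Delta^* \notin \chi$.

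It then remains to show that $I$ satisfies \eqref{eq:1:lem:abstrac.gringo.correspondence.ht} iff $\Delta^* \notin \chi$, which is the heart of the argument and where I would be most careful. For one direction I argue by contrapositive: if $\Delta^* \in \chi$ then the conjunct indexed by $\Delta^*$ occurs in \eqref{eq:1:lem:abstrac.gringo.correspondence.ht}, its antecedent is satisfied by $I$ (every $\boldy \in \Delta^*$ makes $\boldl^{\boldX\boldY}_{\boldx\boldy}$ true), while its consequent is falsified (no $\boldy \in \Psi_E \setminus \Delta^*$ does), so $I$ fails the formula. For the converse, assume $\Delta^* \notin \chi$ and take any $\Delta \in \chi$; then $\Delta \neq \Delta^*$, and I split into cases: if $\Delta \not\subseteq \Delta^*$ there is some $\boldy \in \Delta$ with $I \not\models \boldl^{\boldX\boldY}_{\boldx\boldy}$, falsifying the antecedent; otherwise $\Delta \subsetneq \Delta^*$ and some $\boldy \in \Delta^* \setminus \Delta$ witnesses the consequent. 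Either way the implication holds, so $I$ satisfies the formula. Chaining the three equivalences yields the lemma.

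I expect the main obstacle to be the bookkeeping in the identity $\sets^x_{|E/\boldX|}(\boldx)^I = [\Delta^*]$: one must check that passing from local instantiations $\boldy$ to term tuples $\boldt^{\boldX\boldY}_{\boldx\boldy}$ is harmless because both the aggregate function and the operation $[\cdot]$ act on \emph{sets} of tuples, so collisions where distinct $\boldy$ produce the same tuple are absorbed identically on both sides, and that $\Psi_{E^\boldX_\boldx} = \Psi_E$ since substituting the global variables leaves the local variables $\boldY$ untouched. Once this identity and the $x$-independence of negation in a single interpretation are pinned down, the two reductions and the case analysis are routine.
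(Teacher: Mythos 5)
Your proposal is correct and follows essentially the same route as the paper's own proof: both define the distinguished set $\Delta_I = \{\boldy \in \Psi_E \mid I \models \boldl^{\boldX\boldY}_{\boldx\boldy}\}$, establish $\sets^x_{|E/\boldX|}(\boldx)^I = [\Delta_I]$, and chain the equivalences through the notion of a set \emph{justifying} the aggregate atom. Your explicit case analysis ($\Delta \not\subseteq \Delta^*$ versus $\Delta \subsetneq \Delta^*$) merely spells out what the paper compresses into the assertion that $I$ satisfies \eqref{eq:1:lem:abstrac.gringo.correspondence.ht} iff $F_I$ is not a conjunctive term of it, and your observation that $\neg$ and $\sneg$ coincide under a single interpretation matches the paper's silent use of the $x$-independence built into the definition of an \agginterp.
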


\begin{proof}
    Let~$\boldY$ be the list of variables occurring in~$E$ that do not occur in~$\boldX$.
    Let~$\Delta_I = \{ \, \boldy \in \Psi_{E} \mid I \models \boldl^{\boldX\boldY}_{\boldx\boldy} \, \}$ and~$F_I$ be the formula
    \begin{small}
        \begin{gather*}
            \bigwedge_{\boldy \in \Delta_I}
            \boldl^{\boldX\boldY}_{\boldx\boldy}
            \to
            \bigvee_{\boldy \in \Psi_E \setminus \Delta_I}
            \boldl^{\boldX\boldY}_{\boldx\boldy}
        \end{gather*}
    \end{small}
    Then, $I \not\models F_I$ and
    $$\sets_{|E/\boldX|}(\boldx)^I  = \{ \boldt^{\boldX\boldY}_{\boldx\boldy} \mid \boldy \in \Delta_I \} = [\Delta_I].$$
    Consequently, we have
    \begin{align*}
        I \models \eqref{eq:1:lem:abstrac.gringo.correspondence.ht} &\text{ iff } F_I \text{ is not a conjunctive term of } \eqref{eq:1:lem:abstrac.gringo.correspondence.ht}
        \\
        &\text{ iff } \Delta_I \text{ justifies } \mathtt{op}(E^\boldX_\boldx) \prec u
        \\
        &\text{ iff } \hat{\mathtt{op}}[\Delta_I] \prec u
        \\
        &\text{ iff } \hat{\mathtt{op}}(\sets_{|E/\boldX|}(\boldx)^I) \prec u
        \\
        &\text{ iff } I \models \mathtt{op}(\sets_{|E/\boldX|}(\boldx)) \prec u
        \qedhere
    \end{align*}
    \let\qed\relax
\end{proof}

\begin{lemma}\label{lem:abstrac.gringo.correspondence.ht.agg}
    Let~$\mathtt{op}$ be an operation name.
    Then, an \agghtinterp~$\tuple{H,I}$ satisfies ${\mathtt{op}(\setsg_{|E/\boldX|}(\boldx)) \prec u}$ iff~$\tuple{H,I}$ satisfies~\eqref{eq:1:lem:abstrac.gringo.correspondence.ht}.
\end{lemma}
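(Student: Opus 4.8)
The plan is to mirror the proof of Lemma~\ref{lem:abstrac.gringo.correspondence.cl}, but now to track both worlds of the \agghtinterp simultaneously. Let $\boldY$ be the list of variables occurring in $E$ that are not in $\boldX$, and set $\Delta_I = \{\,\boldy \in \Psi_E \mid I \models \boldl^{\boldX\boldY}_{\boldx\boldy}\,\}$ and $\Delta_H = \{\,\boldy \in \Psi_E \mid \tuple{H,I} \modelsht \boldl^{\boldX\boldY}_{\boldx\boldy}\,\}$, where $\boldl^{\boldX\boldY}_{\boldx\boldy}$ abbreviates the conjunction $\taug(l_1)^{\boldX\boldY}_{\boldx\boldy} \wedge \dots \wedge \taug(l_m)^{\boldX\boldY}_{\boldx\boldy}$. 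By the defining conditions of an \agginterp and an \agghtinterp we then have $(\setsg_{|E/\boldX|}(\boldx))^I = [\Delta_I]$ and $(\setsg_{|E/\boldX|}(\boldx))^H = [\Delta_H]$, and the persistence property (Proposition~\ref{prop:persistence}, item~2) gives $\Delta_H \subseteq \Delta_I$. Since $\mathtt{op}(\setsg_{|E/\boldX|}(\boldx)) \prec u$ is a comparison atom and $u$ is ground, the HT-clause for atoms reduces the left-hand side to the conjunction of $I \models \mathtt{op}(\setsg_{|E/\boldX|}(\boldx)) \prec u$ and $H \models \mathtt{op}(\setsg_{|E/\boldX|}(\boldx)) \prec u$; by the reasoning already used in Lemma~\ref{lem:abstrac.gringo.correspondence.cl} these hold iff $\Delta_I$ justifies and $\Delta_H$ justifies $\mathtt{op}\{E^\boldX_\boldx\} \prec u$, that is, iff neither $\Delta_I$ nor $\Delta_H$ lies in $\chi$.

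Next I would analyze the right-hand side conjunct by conjunct. Write $F_\Delta$ for the implication indexed by $\Delta \in \chi$ in~\eqref{eq:1:lem:abstrac.gringo.correspondence.ht}, with antecedent $A_\Delta = \bigwedge_{\boldy \in \Delta}\boldl^{\boldX\boldY}_{\boldx\boldy}$ and consequent $B_\Delta = \bigvee_{\boldy \in \Psi_E \setminus \Delta}\boldl^{\boldX\boldY}_{\boldx\boldy}$. Unfolding the definitions of $\Delta_I$ and $\Delta_H$ gives at once $I \models A_\Delta$ iff $\Delta \subseteq \Delta_I$, $I \models B_\Delta$ iff $\Delta_I \not\subseteq \Delta$, and likewise $\tuple{H,I}\modelsht A_\Delta$ iff $\Delta \subseteq \Delta_H$ and $\tuple{H,I}\modelsht B_\Delta$ iff $\Delta_H \not\subseteq \Delta$. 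Applying the HT-clause for implication---which requires both $I \models F_\Delta$ and the disjunction ($\tuple{H,I}\not\modelsht A_\Delta$ or $\tuple{H,I}\modelsht B_\Delta$)---the first requirement becomes $\Delta \neq \Delta_I$ and the second becomes $\Delta \neq \Delta_H$. Hence $\tuple{H,I}\modelsht F_\Delta$ iff $\Delta \neq \Delta_I$ and $\Delta \neq \Delta_H$, and therefore $\tuple{H,I}$ HT-satisfies the whole conjunction~\eqref{eq:1:lem:abstrac.gringo.correspondence.ht} iff $\Delta_I \notin \chi$ and $\Delta_H \notin \chi$, which is precisely the condition obtained for the left-hand side.

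Combining the two reductions yields the stated equivalence. The step I expect to be the main obstacle is the computation of $\tuple{H,I}\modelsht F_\Delta$ in the second paragraph: one has to be disciplined about the asymmetric HT-semantics of implication---the ``there'' check $I \models F_\Delta$ together with the here-or-there disjunction---and verify that each of the four membership conditions collapses cleanly to the set-comparisons $\Delta \subseteq \Delta_I$, $\Delta_I \not\subseteq \Delta$, and so on. A secondary point to handle with care is the harmless identification, under $\modelsht$ on standard interpretations, of the literals $\boldl^{\boldX\boldY}_{\boldx\boldy}$ as they occur in the propositional formula~\eqref{eq:1:lem:abstrac.gringo.correspondence.ht} with the $\taug$-translations used to define $\Delta_H$; these agree on atoms and negated atoms, and ground comparisons evaluate identically in both worlds, so $\Delta_H$ and $\Delta_I$ are well defined independently of which translation is meant.
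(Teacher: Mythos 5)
Your proof is correct, and it uses the same central device as the paper --- the set $\Delta_H$ (the paper's $\Delta_{\tuple{H,I}}$) of instances~$\boldy$ whose literal conjunctions are ht\nobreakdash-satisfied, together with the identification $\setsg_{|E/\boldX|}(\boldx)^H = [\Delta_H]$ and the observation that satisfaction of the aggregate atom amounts to $\Delta \notin \chi$ --- but it is organized differently. The paper splits on whether $I \models \mathtt{op}(\setsg_{|E/\boldX|}(\boldx)) \prec u$: the negative case is dispatched through Lemma~\ref{lem:abstrac.gringo.correspondence.cl} and persistence, and only in the positive case is the here\nobreakdash-world computation carried out, via the single ``bad'' conjunct $F_{\tuple{H,I}}$ that $\tuple{H,I}$ falsifies. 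You instead avoid the case split entirely by computing the ht\nobreakdash-satisfaction of an arbitrary conjunct $F_\Delta$ of~\eqref{eq:1:lem:abstrac.gringo.correspondence.ht} in closed form: the there\nobreakdash-check gives $\Delta \neq \Delta_I$ and the here\nobreakdash-or\nobreakdash-there disjunction gives $\Delta \neq \Delta_H$, so both sides of the lemma collapse uniformly to ``$\Delta_I \notin \chi$ and $\Delta_H \notin \chi$.'' I verified the four membership reductions ($I \models A_\Delta$ iff $\Delta \subseteq \Delta_I$, $I \models B_\Delta$ iff $\Delta_I \not\subseteq \Delta$, and their ht counterparts with $\Delta_H$), and they are exactly right given the asymmetric ht clause for implication. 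Your version buys a symmetric, case\nobreakdash-free invariant and makes the there\nobreakdash-world reasoning of Lemma~\ref{lem:abstrac.gringo.correspondence.cl} an instance of the same computation rather than a cited black box; the paper's version is shorter where it can lean on that lemma, and its Case~1 makes the persistence argument explicit. Your closing remark on identifying the literals of~\eqref{eq:1:lem:abstrac.gringo.correspondence.ht} with their $\taug$\nobreakdash-translations (agreement on ground basic literals, world\nobreakdash-independent evaluation of ground comparisons) addresses a notational gap that the paper's own proof silently elides, which is a point in your favor; the inclusion $\Delta_H \subseteq \Delta_I$ you derive from persistence is true but never actually needed.
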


\begin{proof}
    Let us denote~$\mathtt{op}(\setsg_{|E/\boldX|}(\boldx)) \prec u$ as~$A$ in the following.
    \\[10pt]
    \emph{Case 1}: ${I \not\models A}$.
    By Lemma~\ref{lem:abstrac.gringo.correspondence.cl}, it follows that~${I \not\models  \eqref{eq:1:lem:abstrac.gringo.correspondence.ht}}$.
    Therefore, ${\tuple{H,I} \not\modelsht  A}$ and~${\tuple{H,I} \not\modelsht  \eqref{eq:1:lem:abstrac.gringo.correspondence.ht}}$.
    \\[10pt]
    \emph{Case 2}: ${I \models A}$.
    By Lemma~\ref{lem:abstrac.gringo.correspondence.cl}, it follows that~${I \models  \eqref{eq:1:lem:abstrac.gringo.correspondence.ht}}$.
    %
    %
    Let
    $$\Delta_{\tuple{H,I}} = \{ \, \boldy \in \Psi_{E} \mid \tuple{H,I} \modelsht \boldl^{\boldX\boldY}_{\boldx\boldy} \, \}$$
     and~$F_{\tuple{H,I}}$ be the formula
    \begin{small}
        \begin{gather*}
            \bigwedge_{\boldy \in \Delta_{\tuple{H,I}}}
            \boldl^{\boldX\boldY}_{\boldx\boldy}
            \to
            \bigvee_{\boldy \in \Psi_E \setminus \Delta_{\tuple{H,I}}}
            \boldl^{\boldX\boldY}_{\boldx\boldy}
        \end{gather*}
    \end{small}
    Then, $\tuple{H,I} \not\models F_{\tuple{H,I}}$ and
    $$\sets_{|E/\boldX|}(\boldx)^H  = \{ \boldt^{\boldX\boldY}_{\boldx\boldy} \mid \boldy \in \Delta_{\tuple{H,I}} \} = [\Delta_{\tuple{H,I}}].$$
    Consequently, we have
    \begin{align*}
        \tuple{H,I} \models \eqref{eq:1:lem:abstrac.gringo.correspondence.ht} &\text{ iff }I \models \eqref{eq:1:lem:abstrac.gringo.correspondence.ht} \text{ and}
        \\
        &\text{ \phantom{iff} }  F_{\tuple{H,I}} \text{ is not a conjunctive term of } \eqref{eq:1:lem:abstrac.gringo.correspondence.ht}
        \\
        &\text{ iff } \Delta_{\tuple{H,I}} \text{ justifies }  \mathtt{op}(\sets_{|E/\boldX|}(\boldx)) \prec u
        \\
        &\text{ iff } \hat{\mathtt{op}}([\Delta_{\tuple{H,I}}]) \prec u
        \\
        &\text{ iff } \hat{\mathtt{op}}(\setsg_{|E/\boldX|}(\boldx)^H) \prec u
        \\
        &\text{ iff } H \models \mathtt{op}(\setsg_{|E/\boldX|}(\boldx)) \prec u
        \\
        &\text{ iff } H \models A
        \\
        &\text{ iff } \tuple{H,I} \models A \qedhere
    \end{align*}
    \let\qed\relax
\end{proof}

\begin{lemma}\label{lem:abstrac.gringo.correspondence.ht}
    Let~$\Pi$ be a program, $\P$ be the set of all predicate symbols in~$\sigma$ other than comparisons, $\F$ be the set of all function symbols corresponding set symbols.
    Then,
    $${\tuple{H,I} \modelsht\tau\Pi} \quad\text{ iff }\quad {\tuple{H,I} \modelsht\gr{I}{\taug\Pi}}$$
    for every~\agghtinterp~$\tuple{H,I}$.
\end{lemma}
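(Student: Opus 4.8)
The plan is to exploit that both $\tau\Pi$ and $\gr{I}{\taug\Pi}$ are infinitary conjunctions indexed by the rules of $\Pi$, so the claim reduces to a per-rule statement: for every rule $R$ of $\Pi$, $\tuple{H,I}\modelsht\tau R$ iff $\tuple{H,I}\modelsht\gr{I}{\taug R}$. Fixing an $R$ of form~\eqref{rule:4} with global variables $\boldZ$, on one side $\tau R$ is by definition the infinitary conjunction, over all lists $\boldz$ of ground program terms of the length of $\boldZ$, of $\tau(B_1)^\boldZ_\boldz\wedge\dots\wedge\tau(B_n)^\boldZ_\boldz\to\tau\head^\boldZ_\boldz$. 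On the other side $\taug R$ is the universal closure over $\boldZ$ (variables of the general sort) of an implication; grounding with respect to $I$ replaces each such $\forall$ by an infinitary conjunction ranging over $|I|^{\sortsuper}$, which is exactly the set of all ground terms of the general sort, and by definition $\gr{I}{\cdot}$ is pushed through $\wedge$ and $\to$. Hence $\gr{I}{\taug R}$ is the conjunction over the same index set $\boldz$ of $\gr{I}{(\taug_\boldZ B_1)^\boldZ_\boldz}\wedge\dots\to\gr{I}{(\taug_\boldZ\head)^\boldZ_\boldz}$. Aligning the two conjunctions instance by instance, it suffices to prove that the $\tau$-component and the $\gr{I}{\cdot}$-component of each literal (and of the head) are ht-equivalent under $\tuple{H,I}$.

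For the head (an atom or $\bot$) and for each basic body literal the two components are in fact syntactically identical, which is the content of the observation, stated just above the lemma, that $\tau R=\gr{I}{\taug R}$ for aggregate-free rules: an atom $p(\boldt^\boldZ_\boldz)$ is preserved by grounding because $p\in\P$ is intensional; a ground comparison collapses to $\top$ or $\bot$ on both sides, since $I$ interprets the comparison predicates by the fixed total order; and the negated cases follow because $\taug_\boldZ(\Not A)=\neg\taug_\boldZ A$ grounds to $\neg\gr{I}{\taug_\boldZ A}$, matching $\tau(\Not A)=\neg\tau A$.

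The only genuinely new components are the aggregate body literals. For a positive aggregate atom $A$, its $\tau$-component is~\eqref{eq:1:lem:abstrac.gringo.correspondence.ht}, whereas $\gr{I}{\taug_\boldZ A}$ is the atom $\mathtt{op}(\setsg_{|E/\boldX|}(\boldx))\prec u$; grounding leaves this atom intact precisely because its argument contains the intensional function $\setsg_{|E/\boldX|}$. Their ht-equivalence under $\tuple{H,I}$ is exactly Lemma~\ref{lem:abstrac.gringo.correspondence.ht.agg}. For a negated aggregate literal $\Not A$ both components are negations, of~\eqref{eq:1:lem:abstrac.gringo.correspondence.ht} and of $\mathtt{op}(\setsg_{|E/\boldX|}(\boldx))\prec u$ respectively; by item~3 of Proposition~\ref{prop:persistence}, ht-satisfaction of $\neg F$ depends only on whether $I\models F$, so I would reduce the equivalence to the condition that $I$ satisfies~\eqref{eq:1:lem:abstrac.gringo.correspondence.ht} iff $I$ satisfies $\mathtt{op}(\setsg_{|E/\boldX|}(\boldx))\prec u$, which is Lemma~\ref{lem:abstrac.gringo.correspondence.cl}. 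The doubly negated case is handled identically using item~4 of Proposition~\ref{prop:persistence}.

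The hard part is not any individual step but the bookkeeping of the first paragraph: confirming that grounding the universal closure of $\taug R$ yields precisely the index set of ground instances $\boldz$ used in the definition of $\tau R$, and that the intensional aggregate term survives grounding so that Lemmas~\ref{lem:abstrac.gringo.correspondence.cl} and~\ref{lem:abstrac.gringo.correspondence.ht.agg} apply verbatim to each component. Once these alignments are in place the per-rule equivalence follows componentwise, and conjoining over all rules gives the statement.
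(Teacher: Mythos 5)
Your proposal is correct and follows essentially the same route as the paper: the paper's proof likewise observes that $\tau\Pi$ is obtained from $\gr{I}{\taug\Pi}$ by replacing each atom $\mathit{op}(\setsg_{|E/\boldX|}(\boldx))\prec u$ with the corresponding formula~\eqref{eq:1:lem:abstrac.gringo.correspondence.ht}, and then invokes Lemma~\ref{lem:abstrac.gringo.correspondence.ht.agg} (with Lemma~\ref{lem:abstrac.gringo.correspondence.cl} available for the classical side). Your explicit per-rule index alignment and the separate treatment of negated aggregates via Proposition~\ref{prop:persistence} and Lemma~\ref{lem:abstrac.gringo.correspondence.cl} merely spell out bookkeeping the paper leaves implicit (the replacement under negation is licensed because Lemma~\ref{lem:abstrac.gringo.correspondence.ht.agg} also applies to $\tuple{I,I}$), so no substantive difference.
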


\begin{proof}
    Recall that comparisons are not intensional in the definition of the stable models of a program, that is, they do not belong to~$\P$.
    Then, it is easy to see that~$\tau\Pi$ can be obtained from~$\gr{\Int}{\taug\Pi}$
    by replacing each occurrence of~${\mathit{op}(\setsg_{|E/\boldX|}(\boldx)) \prec u}$, where~$\mathit{op}$ is an operation name,
    by its corresponding formula of the form of~\eqref{eq:1:lem:abstrac.gringo.correspondence.ht}:
    %
    %
    Hence, it is enough to show that
    \begin{small}
    \begin{gather*}
        \tuple{H,I} \modelsht \mathit{op}(\setsg_{|E/\boldX|}(\boldx) \prec u)
        \quad\text{iff}\quad
        \tuple{H,I} \modelsht\eqref{eq:1:lem:abstrac.gringo.correspondence.ht}
    \end{gather*}
    \end{small}
    This follows from Lemma~\ref{lem:abstrac.gringo.correspondence.ht.agg}.
\end{proof}

\begin{lemma}\label{lem:abstrac.gringo.correspondence.ht2b}
    Let~$\Pi$ be a program, $\P$ be the set of all predicate symbols in~$\sigma$ other than comparisons, $\F$ be the set of all function symbols corresponding set symbols.
    Then,
    $${\tuple{H,I} \modelsht \tau\Pi} \quad\text{ iff }\quad {\tuple{H,I} \modelsht \taug\Pi}$$
    for every~\agghtinterp~$\tuple{H,I}$.
\end{lemma}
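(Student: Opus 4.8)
The plan is to chain together the two results already established just above: Lemma~\ref{lem:abstrac.gringo.correspondence.ht}, which connects $\tau\Pi$ to the grounding $\gr{I}{\taug\Pi}$, and Proposition~\ref{lem:grounding.ht}, which states that grounding with respect to $I$ preserves ht-satisfaction. Fix an arbitrary \agghtinterp~$\tuple{H,I}$; I would prove the two equivalences $\tuple{H,I} \modelsht \tau\Pi$ iff $\tuple{H,I} \modelsht \gr{I}{\taug\Pi}$ and $\tuple{H,I} \modelsht \gr{I}{\taug\Pi}$ iff $\tuple{H,I} \modelsht \taug\Pi$, and then compose them.

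First I would invoke Lemma~\ref{lem:abstrac.gringo.correspondence.ht} directly, which gives $\tuple{H,I} \modelsht \tau\Pi$ iff $\tuple{H,I} \modelsht \gr{I}{\taug\Pi}$. This is where all the genuine content lives: it matches the infinitary translation $\tau$ of each aggregate atom against the grounding of its functional counterpart $\mathit{op}(\setsg_{|E/\boldX|}(\boldx)) \prec u$, relying in turn on Lemmas~\ref{lem:abstrac.gringo.correspondence.cl} and~\ref{lem:abstrac.gringo.correspondence.ht.agg}.

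Second I would eliminate the grounding. By definition $\gr{I}{\taug\Pi}$ is the conjunction $\{ \gr{I}{F} \mid F \in \taug\Pi \}^\wedge$, so $\tuple{H,I} \modelsht \gr{I}{\taug\Pi}$ holds exactly when $\tuple{H,I} \modelsht \gr{I}{F}$ for every sentence $F \in \taug\Pi$. Applying Proposition~\ref{lem:grounding.ht} to each such $F$ replaces $\tuple{H,I} \modelsht \gr{I}{F}$ by $\tuple{H,I} \modelsht F$, and hence the whole conjunction is ht-satisfied iff $\tuple{H,I} \modelsht \taug\Pi$. Combining this with the first equivalence yields the claim.

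I do not expect any genuine obstacle in this particular statement: the content-bearing steps—the treatment of aggregate atoms and of grounding—have already been discharged in the preceding lemma and proposition, so the argument here is essentially a bookkeeping composition. The only point requiring minor care is that Proposition~\ref{lem:grounding.ht} is phrased for a single sentence whereas $\taug\Pi$ is a theory; this is handled cleanly by the componentwise reading of $\gr{I}{\cdot}$ on theories noted above, after which the equivalence propagates through the conjunction.
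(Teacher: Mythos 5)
Your proposal is correct and matches the paper's proof exactly: the paper's argument reads, in full, ``Directly by Proposition~\ref{lem:grounding.ht} and Lemma~\ref{lem:abstrac.gringo.correspondence.ht},'' which is precisely the composition you describe. Your explicit handling of the sentence-versus-theory point via the componentwise definition of $\gr{I}{\cdot}$ on theories is a sound piece of bookkeeping that the paper leaves implicit.
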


\begin{proof}
    Directly by Proposition~\ref{lem:grounding.ht} and Lemma~\ref{lem:abstrac.gringo.correspondence.ht}.
\end{proof}

\begin{lemma}\label{lem:abstrac.gringo.correspondence.ht2}
    Let~$\Pi$ be a program, $\P$ be the set of all predicate symbols in~$\sigma$ other than comparisons, $\F$ be the set of all function symbols corresponding set symbols.
    Then,
    $${\Ans{H} \models(\tau\Pi)^{\Ans{I}}} \quad\text{ iff }\quad {\tuple{H,I} \modelsht \taug\Pi}$$
    for every~\agghtinterp~$\tuple{H,I}$.
\end{lemma}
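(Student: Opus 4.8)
The plan is to derive this equivalence by chaining two results that are already in place, so that the substantive work has in fact been carried out in the preceding lemmas. First I would observe that $\tau\Pi$ is a \emph{standard} infinitary \emph{propositional} formula over~$\sigma^p$: the aggregate atoms occurring in~$\Pi$ have been unfolded into formulas of the form of~\eqref{eq:1:lem:abstrac.gringo.correspondence.ht}, which mention only the basic literals~$\boldl$ and therefore contain no intensional function symbols, while the translation~$\tau$ renders each occurrence of \emph{not} by the connective~$\neg$ rather than~$\sneg$. This is exactly the hypothesis needed to invoke Proposition~\ref{prop:infinitary.interpretations}.

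Applying Proposition~\ref{prop:infinitary.interpretations} with $F = \tau\Pi$ then gives
$$
\tuple{H,I} \modelsht \tau\Pi \quad\text{ iff }\quad \mathcal{H} \models (\tau\Pi)^{\mathcal{I}}.
$$
Next I would appeal to Lemma~\ref{lem:abstrac.gringo.correspondence.ht2b}, which supplies
$$
\tuple{H,I} \modelsht \tau\Pi \quad\text{ iff }\quad \tuple{H,I} \modelsht \taug\Pi.
$$
Combining these two equivalences through their shared left-hand side yields that $\mathcal{H} \models (\tau\Pi)^{\mathcal{I}}$ holds iff $\tuple{H,I} \modelsht \taug\Pi$ holds. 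Since $\Ans{H} = \mathcal{H}$ and $\Ans{I} = \mathcal{I}$ by the definition of the $\mathcal{I}$ notation, this is precisely the claimed statement, for every \agghtinterp~$\tuple{H,I}$.

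The only point that demands care—and hence the main obstacle, though a minor one—is the verification in the first step that $\tau\Pi$ genuinely lies within the scope of Proposition~\ref{prop:infinitary.interpretations}. Concretely, one must confirm that no intensional function symbol survives the unfolding of the aggregates and that $\tau$ introduces no~$\sneg$, so that the FT\nobreakdash-reduct $(\tau\Pi)^{\mathcal{I}}$ is well defined and the proposition applies verbatim; everything after that is a formal composition of equivalences.
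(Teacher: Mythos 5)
Your proposal is correct and follows essentially the same route as the paper, whose proof reads in full: ``Since~$\tau\Pi$ is an infinitary formula of~$\sigma^p$, the result follows by Lemma~\ref{lem:abstrac.gringo.correspondence.ht2b} and Proposition~\ref{prop:infinitary.interpretations}.'' Your additional check that the unfolded aggregate formulas of the form~\eqref{eq:1:lem:abstrac.gringo.correspondence.ht} contain no intensional function symbols and that~$\tau$ introduces no~$\sneg$ merely makes explicit what the paper leaves implicit in asserting that $\tau\Pi$ lies in~$\sigma^p$.
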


\begin{proof}
    Since~$\tau\Pi$ is an infinitary formula of~$\sigma^p$, the result follows by Lemma~\ref{lem:abstrac.gringo.correspondence.ht2b} and Proposition~\ref{prop:infinitary.interpretations}.
\end{proof}


\begin{lemma}\label{lem:agginterp.less}
    Let~$\tuple{H,I}$ be an \agghtinterp.
    Then,~$H \less I$ iff $H \lessP{\P\emptyset} I$.
\end{lemma}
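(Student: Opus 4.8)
The plan is to observe that the two strict orders $\less$ and $\lessP{\P\emptyset}$ impose identical universe and predicate conditions—same universe for each sort, $p^H \subseteq p^I$ for $p \in \P$, and $p^H = p^I$ for $p \notin \P$—together with $H \neq I$, and that the only place they could possibly diverge is in their treatment of the intensional set-function symbols $\setsg_{|E/\boldX|}$ and $\setsd_{|E/\boldX|}$. Since $H$ and $I$ are standard interpretations, every remaining function symbol (the object constants together with $\ftuple$, $\fcount$ and $\fsum$) is interpreted identically in $H$ and $I$ by the definition of a standard interpretation, so their agreement is automatic for \emph{both} orders. Thus the lemma reduces to the assertion that a set-function symbol can never be the sole witness of $H \neq I$: on an \agghtinterp the set functions are completely determined by the predicate part, so $H$ and $I$ agree on all set functions whenever they agree on all predicates.

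First I would dispatch the easy inclusion. Suppose $H \lessP{\P\emptyset} I$. Then $H$ and $I$ satisfy the common universe and predicate conditions and $H \neq I$; the function requirement of $\lesseq$—equality of every symbol outside the intensional set functions—holds automatically because those symbols are fixed by standardness. Hence $H \lesseq I$, and with $H \neq I$ we obtain $H \less I$.

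The core of the argument is the structural claim $(\star)$: for an \agghtinterp $\tuple{H,I}$, if $p^H = p^I$ for every predicate symbol $p$, then $\setsg_{|E/\boldX|}(\boldx)^H = \setsg_{|E/\boldX|}(\boldx)^I$ and $\setsd_{|E/\boldX|}(\boldx)^H = \setsd_{|E/\boldX|}(\boldx)^I$ for every set symbol $E/\boldX$ and every tuple $\boldx$. Granting $(\star)$ and using that the non-set functions are fixed by standardness, predicate agreement forces $H = I$; equivalently, $H \neq I$ forces $\mathcal{H} \neq \mathcal{I}$. Consequently the set functions never create a strict inequality that is not already visible at the predicate level, and $H \less I$ and $H \lessP{\P\emptyset} I$ hold under exactly the same circumstances, namely $\mathcal{H} \subseteq \mathcal{I}$ with $\mathcal{H} \neq \mathcal{I}$.

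The hard part will be establishing $(\star)$, and it is here that the two semantics must be treated separately. For the \clingo\ symbol, $\setsg_{|E/\boldX|}(\boldx)^H$ is defined through $\tuple{H,I} \modelsht \taug(l_1)^{\boldX\boldY}_{\boldx\boldy} \wedge \dots \wedge \taug(l_m)^{\boldX\boldY}_{\boldx\boldy}$, whereas $\setsg_{|E/\boldX|}(\boldx)^I$ is defined through $I \models$ of the same formula, i.e. $\tuple{I,I} \modelsht$ of it. Since each $l_i$ is a basic literal, $\taug(l_i)$ contains no intensional function symbol, so I would prove by induction on $\taug(l_i)$ that, whenever $p^H = p^I$ for all $p$, one has $\tuple{H,I} \modelsht \taug(l_i)^{\boldX\boldY}_{\boldx\boldy}$ iff $\tuple{I,I} \modelsht \taug(l_i)^{\boldX\boldY}_{\boldx\boldy}$: the atomic and comparison cases are immediate from the satisfaction clauses, and the $\neg$ and $\neg\neg$ cases follow from Proposition~\ref{prop:persistence}, which evaluates negated formulas solely in the there-world. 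For the \dlv\ symbol, $\setsd_{|E/\boldX|}(\boldx)^H$ is instead defined by the \emph{classical} satisfaction $H \models \taud(l_i)^{\boldX\boldY}_{\boldx\boldy}$ and $\setsd_{|E/\boldX|}(\boldx)^I$ by $I \models$ of the same formula; because $\taud(l_i)$ again mentions no intensional function and uses $\sneg$ (classical negation) for default negation, classical satisfaction depends only on the predicate extensions, so $p^H = p^I$ delivers the required equality directly. Combining the two cases yields $(\star)$ and completes the proof.
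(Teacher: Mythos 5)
Your proof is correct and takes essentially the same route as the paper: the paper's left-to-right direction argues directly that a difference in an intensional set-function value forces a difference in some predicate extension, which is exactly the contrapositive of your claim $(\star)$, and its right-to-left direction is the same easy observation about the predicate conditions. The only difference is level of detail---you spell out, via the induction over $\taug(l_i)$ and $\taud(l_i)$ using Proposition~\ref{prop:persistence} and the classical reading of $\sneg$, the step that the paper compresses into a single sentence (``the two defining sets must be different, so $p^H \neq p^I$ for some predicate'').
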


\begin{proof}
    \emph{Right-to-left}.
    $H \lessP{\P\emptyset} I$ means that there is a predicate symbol~$p$ such that~$p^H \subset p^I$ and, thus,~$H \less I$ also holds.
    \emph{Lert-to-right}.
    $H \less I$ means that one of the following holds
    \begin{itemize}
        \item $p^H \subset p^I$ for some intensional predicate symbol~$p$; or
        \item $f^H \neq f^I$ for some intensional function symbol~$f$.
    \end{itemize}
    The first immediately implies that~${H \lessP{\P\emptyset} I}$ also holds.
    For the latter, $f$ must be of the form~$\sets^x_{|E/\boldX|}$ for some aggregate element~$E$.
    Therefore, the set of the set of all tuples of form~$\tuple{(t_1)^{\boldX\boldY}_{\boldx\boldy},\dots,(t_k)^{\boldX\boldY}_{\boldx\boldy}}$ such that~$I$ satisfies~$(l_1)^{\boldX\boldY}_{\boldx\boldy} \wedge \dots \wedge (l_m)^{\boldX\boldY}_{\boldx\boldy}$ and the set of all tuples of form~$\tuple{(t_1)^{\boldX\boldY}_{\boldx\boldy},\dots,(t_k)^{\boldX\boldY}_{\boldx\boldy}}$ such that~$\tuple{H,I}$ or~$H$ satisfies~$(l_1)^{\boldX\boldY}_{\boldx\boldy} \wedge \dots \wedge (l_m)^{\boldX\boldY}_{\boldx\boldy}$ must be different.
    This means that~${p^H \neq p^I}$ for some predicate symbols~$p$ and, thus, ${p^H \subset p^I}$ and~${H \lessP{\P\emptyset} I}$ follow.
\end{proof}

\begin{proof}[Proposition~\ref{prop:less.ht}]
    By Lemma~\ref{lem:agginterp.less}, it follows that~${H \less I}$ iff~${H \lessP{\P\emptyset} I}$.
    Then, the result follows because the latter holds iff~$\mathcal{H} \subset \mathcal{I}$.
\end{proof}

\begin{proof}[Proof of Theorem~\ref{thm:abstrac.gringo.correspondence}]
Assume that~$\Ans{I}$ is a fo\nobreakdash-clingo answer set of~$\Pi$.
By definition, there is a $I$ is an \aggstable model of~$\taug \Pi$.
In its turn, this implies that~$I$ is an \aggmodel of~$\taug \Pi$
and there is no \agghtmodel~$\tuple{H,I}$ of~$\taug \Pi$ with~$H \less I$.
By Lemma~\ref{lem:abstrac.gringo.correspondence.ht2},
it follows that~$\Ans{I}$ is a model of~$(\tau\Pi)^{\Ans{I}}$.
Suppose, for the sake of contradiction, that there is~${\Ans{H} \subset \Ans{I}}$ such that~$\Ans{H} \models (\tau\Pi)^{\Ans{I}}$.
Let~$\tuple{H,I}$ be the \agginterp with~$\Ans{H}$ and~$\Ans{I}$ the set of ground atoms of~$\sigma^p$ satisfied by~$H$ and~$I$, respectively.
%
%
Then, by Proposition~\ref{prop:less.ht} and Lemma~\ref{lem:abstrac.gringo.correspondence.ht2},
it follows~${H \less I}$ and~${\tuple{H,I} \modelsht \taug \Pi}$.
This is a contradiction because there is no \agghtmodel~$\tuple{H,I}$ of~$\taug \Pi$ with~$H \less I$.
\\[5pt]
Conversely, assume that~$\Ans{I}$ is a clingo answer set of~$\Pi$.
By definition, $\Ans{I}$ is a model of~$\tau\Pi$ and there is no model~$\Ans{H}$ of~$(\tau\Pi)^{\Ans{I}}$ with~$\Ans{H} \subset \Ans{I}$.
By Lemma~\ref{lem:abstrac.gringo.correspondence.ht2}, the former implies that there is an \aggmodel~$I$ of~$(\taug\Pi)^{\Ans{I}}$.
Suppose, for the sake of contradiction, that there is some \agghtmodel~$\tuple{H,I}$ of~$\taug\Pi$ with~$H \less I$.
By Lemma~\ref{lem:abstrac.gringo.correspondence.ht2} and Proposition~\ref{prop:less.ht}, this implies that~$\Ans{H}$ satisfies~$(\taug\Pi)^{\Ans{I}}$ with~~$\Ans{H} \subset \Ans{I}$, which is a contradiction.
\end{proof}
\subsubsection{Correspondence with \dlv.}

A \emph{dlv\nobreakdash-literal} is either an atomic formula, a truth constant ($\top$ or $\bot$) or an expression of the forms~$\sneg A$, $\sneg\sneg A$ with~$A$ an atomic formula.
A \emph{dlv\nobreakdash-implication} is an implication of the form~${F_1 \to F_2}$ where~$F_1$ is a conjunction of dlv\nobreakdash-literals and~$F_2$ is either an atomic formula or the truth constant~$\bot$.

For any formula~$F$, by~$\NN{F}$ we denote the result of replacing all occurrences of~$\sneg$ by~$\neg$ in~$F$.
If~$F$ is an infinitary propositional formula, then~$\NN{F}$ is standard.

\begin{lemma}
    \label{lem:n.satisfaction}
    Let~$F$ be an infinitary propositional formula.
    Then, $I \models F$ iff~$\Ans{I} \models \NN{F}$.
\end{lemma}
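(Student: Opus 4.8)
The plan is to reduce the claim to Proposition~\ref{prop:infinitary.interpretations.cl} together with the observation that, once we evaluate against a single set of atoms, the two negation symbols~$\sneg$ and~$\neg$ become indistinguishable. Note first that~$\NN{F}$ is a standard infinitary propositional formula, so both sides of the asserted equivalence are well defined and the satisfaction relation for propositional interpretations applies to each.

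First I would invoke Proposition~\ref{prop:infinitary.interpretations.cl}, which gives~$I \models F$ iff~$\Ans{I} \models F$. It therefore suffices to prove the purely propositional statement that~$\Ans{I} \models F$ iff~$\Ans{I} \models \NN{F}$ for every propositional interpretation~$\Ans{I}$ and every infinitary propositional formula~$F$. This isolates the only real content of the lemma, namely that replacing~$\sneg$ by~$\neg$ is harmless at the propositional level.

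The latter statement I would establish by induction on the rank of~$F$. The base case is immediate since~$\NN{A}=A$ for an atom~$A$, and the cases~$\HF^\land$, $\HF^\lor$ and~$F_1 \to F_2$ follow directly from the induction hypothesis because~$\NN{\cdot}$ commutes with these connectives and propositional satisfaction is defined compositionally for them. The only case using anything specific is~$F = \sneg G$: here~$\NN{\sneg G}=\neg\NN{G}$, and by the definition of propositional satisfaction we have~$\Ans{I} \models \sneg G$ iff~$\Ans{I} \not\models G$, while (recalling that~$\neg H$ abbreviates~$H \to \bot$, and that~$\Ans{I} \not\models \bot$) we have~$\Ans{I} \models \neg\NN{G}$ iff~$\Ans{I} \not\models \NN{G}$. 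Applying the induction hypothesis to~$G$ identifies these two conditions and closes the equivalence.

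The main subtlety, and the step I would emphasize, is precisely this last case: the clause~$\Ans{I}\models\sneg G$ iff~$\Ans{I}\not\models G$ coincides exactly with the clause for~$\neg$ once satisfaction is taken against a single propositional interpretation. There is no genuine obstacle here, because the here-and-there distinction that actually separates~$\sneg$ from~$\neg$ (as witnessed by Propositions~\ref{prop:persistence} and~\ref{prop:ht:facts:abbr:neg}) collapses as soon as the two worlds coincide, which is exactly the situation Proposition~\ref{prop:infinitary.interpretations.cl} has already reduced us to.
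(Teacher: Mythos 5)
Your proof is correct and takes essentially the same route as the paper's: the paper likewise observes that $\Ans{I} \models \sneg G$ iff $\Ans{I} \models \neg G$ under propositional satisfaction, concludes $\Ans{I} \models F$ iff $\Ans{I} \models \NN{F}$ by induction, and then bridges $I$ and $\Ans{I}$ via the correspondence for infinitary propositional formulas. The only difference is bookkeeping: the paper invokes Proposition~\ref{prop:infinitary.interpretations} on the standard formula $\NN{F}$, whereas you apply Proposition~\ref{prop:infinitary.interpretations.cl} directly to $F$ (which is arguably the cleaner citation, since that proposition already covers formulas containing $\sneg$), and your explicit rank induction with the $F = \sneg G$ case just fills in what the paper's one-line ``by induction'' leaves implicit.
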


\begin{proof}
    By definition, ${\Ans{I} \models \neg G}$ iff~${\Ans{I} \models \sneg G}$.
    Then, by induction, it follows that~${\Ans{I} \models F}$ iff~${\Ans{I} \models \NN{F}}$.
    Finally, since~$\NN{F}$ is standard, by Proposition~\ref{prop:infinitary.interpretations}, we get that the latter holds iff~${I \models F}$.
\end{proof}

Given an implication~$F$ of the form~${F_1 \to F_2}$,
by~${\PPP{F}}$ we denote the implication~${\sneg\sneg F_1 \to F_2}$ and by~${\PNN{F}}$ we denote the implication~${\sneg\sneg\NN{F_1} \to F_2}$.
Note that~${I \models F}$ iff~${I \models \PPP{F}}$ iff~${I \models \PNN{F}}$.
For an \htinterp these equivalences do not hold, but we have the following interesting relationship with the FLP\nobreakdash-reduct.

\begin{lemma}
    \label{lem:flp.reduct.aux}
    Let~$\tuple{H,I}$ be an \htinterp and~$F$ be an infinitary propositional formula of the form~${F_1 \to F_2}$
    with~$F_2$ an atomic formula or a truth constant.
    Then, ${\tuple{H,I} \modelsht \PPP{F}}$ iff~${\Ans{I} \models F}$ and~${\Ans{H} \models \FLP(F,\Ans{I})}$.
\end{lemma}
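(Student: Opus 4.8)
The plan is to unfold the definition of $\modelsht$ for the implication $\PPP{F} = \sneg\sneg F_1 \to F_2$ and to match the two resulting conditions with the two conjuncts on the right-hand side. By the clause for $\to$, the relation $\tuple{H,I} \modelsht \sneg\sneg F_1 \to F_2$ holds iff (i)~$I \models \sneg\sneg F_1 \to F_2$ and (ii)~either $\tuple{H,I} \not\modelsht \sneg\sneg F_1$ or $\tuple{H,I} \modelsht F_2$. For (i), I would use that $\sneg$ is classical negation under plain satisfaction, so that $I \models \sneg\sneg F_1 \to F_2$ iff $I \models F$ (this is precisely the note preceding the lemma), and then apply Proposition~\ref{prop:infinitary.interpretations.cl} to rewrite it as $\Ans{I} \models F$. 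This already isolates the first conjunct of the right-hand side.

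Next I would rewrite the ingredients of (ii) as purely propositional conditions. By Proposition~\ref{prop:ht:facts:abbr:neg}, $\tuple{H,I} \modelsht \sneg\sneg F_1$ iff $I \models F_1$ and $H \models F_1$, which by Proposition~\ref{prop:infinitary.interpretations.cl} becomes $\Ans{I} \models F_1$ and $\Ans{H} \models F_1$; and since $F_2$ is atomic or a truth constant, $\tuple{H,I} \modelsht F_2$ iff $\Ans{I} \models F_2$ and $\Ans{H} \models F_2$ (the atomic clause of $\modelsht$, equivalently Proposition~\ref{prop:infinitary.interpretations}). The heart of the argument is then a case split on whether $\Ans{I} \models F_1$, mirroring the definition of the FLP-reduct. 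If $\Ans{I} \not\models F_1$, then $\FLP(F,\Ans{I})=\top$, so the second conjunct on the right holds vacuously, while correspondingly $\tuple{H,I} \not\modelsht \sneg\sneg F_1$, so (ii) holds; the two sides agree. If $\Ans{I} \models F_1$, then $\FLP(F,\Ans{I})=F$, and I would argue under the standing assumption that the first conjunct $\Ans{I} \models F$ holds (which forces $\Ans{I} \models F_2$); in that case $\tuple{H,I} \not\modelsht \sneg\sneg F_1$ reduces to $\Ans{H} \not\models F_1$ and $\tuple{H,I} \modelsht F_2$ reduces to $\Ans{H} \models F_2$, so that (ii) becomes $\Ans{H} \not\models F_1$ or $\Ans{H} \models F_2$, which is exactly $\Ans{H} \models \FLP(F,\Ans{I})$.

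Combining these, condition (i) is equivalent to $\Ans{I} \models F$, and under (i) condition (ii) is equivalent to $\Ans{H} \models \FLP(F,\Ans{I})$, yielding the claimed biconditional. The main obstacle I anticipate is the bookkeeping in the case $\Ans{I} \models F_1$: the equivalence of (ii) with $\Ans{H} \models \FLP(F,\Ans{I})$ genuinely requires the first conjunct $\Ans{I} \models F$ as a standing hypothesis, since if $\Ans{I} \models F_1$ but $\Ans{I} \not\models F_2$ the two conditions come apart. Thus the proof must factor out the first conjunct before establishing the second, rather than proving each conjunct in isolation. The underlying conceptual point is that the antecedent $\sneg\sneg F_1$ inspects both the there-world and the here-world, which is exactly what allows the here-world reduct to be guarded by the there-world test $\Ans{I} \models F_1$ built into the FLP-reduct.
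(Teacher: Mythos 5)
Your proof is correct and follows essentially the same route as the paper's: both unfold the implication clause of $\modelsht$, use Proposition~\ref{prop:ht:facts:abbr:neg} to characterize $\sneg\sneg F_1$, reduce to propositional satisfaction via Propositions~\ref{prop:infinitary.interpretations.cl}/\ref{prop:infinitary.interpretations}, and split on whether $\Ans{I} \models F_1$ exactly as the FLP-reduct definition dictates. The only difference is organizational: you establish the biconditional in one chain with $\Ans{I} \models F$ factored out as a standing hypothesis (a subtlety the paper handles implicitly by arguing the two directions separately), which is a slightly cleaner packaging of the same argument.
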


\begin{proof}
    %
    \emph{Left-to-right.}
    Assume~${\tuple{H,I} \modelsht \PPP{F}}$.
    By Proposition~\ref{prop:persistence}, we get~${I \models {F}}$ and, thus~${I \models \PPP{F}}$.
    By Proposition~\ref{prop:infinitary.interpretations}, this implies~${\Ans{I} \models F}$.
    \emph{Case~1.} ${\Ans{I} \not\models F_1}$.
    Then, ${\Ans{I} \models F}$ and ${\FLP(F,\Ans{I}) = \top}$, and the results immediately holds.
    \emph{Case~2.} ${\Ans{I} \models F_1}$.
    Then, ${\FLP(F,\Ans{I}) = F}$.
    If~${\Ans{H} \not\models F_1}$, then the result follows immediately.
    Otherwise, ${\Ans{H} \models F_1}$, and this implies~${\tuple{H,I} \modelsht \sneg\sneg F_1}$ (by Propositions~\ref{prop:infinitary.interpretations} and~\ref{prop:ht:facts:abbr:neg}, and fact~${\Ans{I} \models F_1}$).
    Since~${\tuple{H,I} \modelsht \PPP{F}}$, this implies that~${\tuple{H,I} \modelsht F_2}$.
    Hence, $F_2$ is not~$\bot$ and~${\Ans{H} \models F_2}$.
    This means that~${\Ans{H} \models \FLP(F,\Ans{I})}$.
    \\[5pt]
    \emph{Right-to-left.}
    Assume~${\Ans{I} \models F}$ and~${\Ans{H} \models \FLP(F,\Ans{I})}$.
    By Proposition~\ref{prop:infinitary.interpretations}, we get~${I \models {F}}$ and, thus,~${I \models \PPP{F}}$.
    We proceed by cases.
    \emph{Case~1.}
    ${\Ans{I} \not\models F_1}$.
    By Proposition~\ref{prop:infinitary.interpretations}, this implies~${I \not\models \sneg\sneg F_1}$ and, by Proposition~\ref{prop:persistence}, it follows~${\tuple{H,I} \not\modelsht \sneg\sneg{F_1}}$.
    Hence, ${\tuple{H,I} \modelsht \PPP{F}}$.
    \emph{Case~2.}
    ${\Ans{I} \models F_1}$.
    Then, ${I \models \sneg\sneg{F_1}}$ (by Proposition~\ref{prop:infinitary.interpretations}) and~${\FLP(F,\Ans{I}) = F}$ (by definition).
    %
    %
    The latter implies~${\Ans{H} \models F}$.
    If~${\tuple{H,I} \not\models \sneg\sneg{F_1}}$, then the result follows because~${I \models \PPP{F}}$.
    Otherwise, ${\tuple{H,I} \modelsht \sneg\sneg{F_1}}$ and this implies~${\Ans{H} \models F_1}$ (by Proposition~\ref{prop:ht:facts:abbr:neg}).
    Since~${\Ans{H} \models F}$, this implies that~${\Ans{H} \models F_2}$.
    Then~$F_2$ is not~$\bot$ and~${\tuple{H,I} \modelsht F_2}$.
    Since~${I \models F}$, this implies~${\tuple{H,I} \modelsht F}$.
\end{proof}

\begin{lemma}
    \label{lem:dlv.literal.satisfaction}
    Let~$\tuple{H,I}$ be an \htinterp and~$L$ be a dlv\nobreakdash-literal.
    Then, ${\tuple{H,I} \modelsht L}$ iff~${I \models L}$ and~${H \models L}$.
\end{lemma}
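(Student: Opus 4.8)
The plan is to proceed by a case analysis on the syntactic form of the dlv\nobreakdash-literal $L$, which by definition is an atomic formula, a truth constant ($\top$ or $\bot$), or one of $\sneg A$, $\sneg\sneg A$ for an atomic formula $A$. In each case I would reduce $\tuple{H,I} \modelsht L$ to a condition phrased purely in terms of classical satisfaction in the two worlds $H$ and $I$, and then recognize that condition as ``$I \models L$ and $H \models L$.''

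For the atomic case the equivalence is immediate from the definition of $\modelsht$. For a predicate atom $p(\boldt)$ (this includes comparisons other than equality, which are predicate atoms in $\sigma$), we have $\tuple{H,I} \modelsht p(\boldt)$ iff $I \models p(\boldt)$ and $H \models p(\boldt)$; and for an equality $t_1 = t_2$ we have $\tuple{H,I} \modelsht t_1 = t_2$ iff $t_1^I = t_2^I$ and $t_1^H = t_2^H$, i.e.\ iff $I \models t_1 = t_2$ and $H \models t_1 = t_2$. The truth constants are trivial, since each of the three satisfaction statements is uniformly true (for $\top$) or uniformly false (for $\bot$).

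For $L = \sneg A$ I would invoke Item~1 of Proposition~\ref{prop:ht:facts:abbr:neg}, which states exactly that $\tuple{H,I} \modelsht \sneg A$ iff $I \models \sneg A$ and $H \models \sneg A$ (equivalently, unfolding the semantics of $\sneg$, iff $I \not\models A$ and $H \not\models A$). For $L = \sneg\sneg A$ I would combine Item~2 of the same proposition, which gives $\tuple{H,I} \modelsht \sneg\sneg A$ iff $I \models A$ and $H \models A$, with the classical collapse of $\sneg$ within a single interpretation: since $J \models \sneg F$ iff $J \not\models F$, we have $J \models \sneg\sneg A$ iff $J \models A$ for each $J \in \{H, I\}$. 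Chaining these equivalences yields $\tuple{H,I} \modelsht \sneg\sneg A$ iff $I \models \sneg\sneg A$ and $H \models \sneg\sneg A$, as required.

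There is essentially no hard step: every case is a direct unfolding of the $\modelsht$ definition or a citation of Proposition~\ref{prop:ht:facts:abbr:neg}. The only point requiring a moment's care is the $\sneg\sneg A$ case, where one must not conflate here\nobreakdash-and\nobreakdash-there double negation under the standard connective $\neg$ (which would restore only the value at $I$) with double negation under $\sneg$. Proposition~\ref{prop:ht:facts:abbr:neg} makes precise that $\sneg\sneg$ restores the classical value in \emph{both} worlds, and it is exactly this symmetry in $H$ and $I$ that makes the biconditional go through.
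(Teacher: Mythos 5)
Your proof is correct and follows essentially the same route as the paper's: the paper also argues by cases, settling atomic formulas and truth constants directly from the definition of $\modelsht$ and dispatching the $\sneg A$ and $\sneg\sneg A$ cases to Proposition~\ref{prop:ht:facts:abbr:neg}. Your only addition is to make explicit the classical collapse $J \models \sneg\sneg A$ iff $J \models A$ for $J \in \{H,I\}$ needed to rephrase Item~2 of that proposition as ``$I \models L$ and $H \models L$,'' a step the paper leaves implicit.
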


\begin{proof}
    If~$L$ is an atomic formula or a truth constant, the result holds by definition.
    Otherwise, it follows by Proposition~\ref{prop:ht:facts:abbr:neg}.
\end{proof}

\begin{lemma}
    \label{lem:dlv.body.satisfaction}
    Let~$\tuple{H,I}$ be an \htinterp and~$F$ be a conjunction of dlv\nobreakdash-literals.
    Then, ${\tuple{H,I} \modelsht F}$ iff~${I \models F}$ and~${H \models F}$.
\end{lemma}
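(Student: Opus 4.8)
The plan is to reduce the claim about a conjunction to the single-literal case already settled by Lemma~\ref{lem:dlv.literal.satisfaction}. Since~$F$ is a conjunction of dlv-literals, I would write it as~$\HF^\land$ where every member of~$\HF$ is a dlv-literal. The clause defining~$\modelsht$ on infinitary conjunctions then gives directly that~$\tuple{H,I} \modelsht F$ holds iff~$\tuple{H,I} \modelsht L$ for every~$L \in \HF$. No nesting has to be unwound, because each conjunct is already a literal, so a single application of this clause is enough and no induction on rank is needed.

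Next I would apply Lemma~\ref{lem:dlv.literal.satisfaction} to each conjunct separately: for every~$L \in \HF$, we have~$\tuple{H,I} \modelsht L$ iff~$I \models L$ and~$H \models L$. Combining this over all~$L$ shows that~$\tuple{H,I} \modelsht F$ holds iff, for every~$L \in \HF$, both~$I \models L$ and~$H \models L$. Regrouping the conjunction then yields the equivalent statement that~$I \models L$ for every~$L \in \HF$ and, separately,~$H \models L$ for every~$L \in \HF$.

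Finally I would recall that~$I \models F$ abbreviates~$\tuple{I,I} \modelsht F$, so that the same conjunction clause gives~$I \models F$ iff~$I \models L$ for every~$L \in \HF$, and analogously for~$H$. Substituting these two characterizations back closes the chain of equivalences into~$\tuple{H,I} \modelsht F$ iff~$I \models F$ and~$H \models F$. There is no genuine obstacle here; the whole content sits in Lemma~\ref{lem:dlv.literal.satisfaction}, and the only point requiring a little care is to read~$F$ as a possibly infinitary conjunction and to unfold~$I \models F$ through the abbreviation~$\tuple{I,I} \modelsht F$ rather than as a distinct propositional satisfaction relation.
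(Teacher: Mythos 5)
Your proposal is correct and matches the paper's proof essentially step for step: decompose the conjunction via the satisfaction clause for $\modelsht$, apply Lemma~\ref{lem:dlv.literal.satisfaction} to each conjunct, and regroup the quantification over conjuncts. The only (harmless) difference is that you phrase~$F$ as a possibly infinitary conjunction~$\HF^\land$, whereas the paper writes it as a finite conjunction~$L_1 \land \ldots \land L_n$; the argument is identical in either reading.
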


\begin{proof}
    Let $F = L_1 \land \ldots \land L_n$.
    Then, ${\tuple{H,I} \modelsht F}$ iff (by definition) ${\tuple{H,I} \modelsht L_i}$ for all~$1 \leq i \leq n$ iff (Lemma~\ref{lem:dlv.literal.satisfaction}) ${H \models L_i}$ and~${I \models L_i}$ for all~$1 \leq i \leq n$
    iff ${H \models L_i}$ for all~$1 \leq i \leq n$ and~${I \models L_i}$ for all~$1 \leq i \leq n$
    iff (by definition) ${H \models F}$ and~${I \models F}$.
\end{proof}

\begin{lemma}
    \label{lem:dlv.implication.rewriting}
    Let~$\tuple{H,I}$ be an \htinterp and~${F}$ be a dlv\nobreakdash-implication.
    Then, ${\tuple{H,I} \modelsht F}$ iff~${\tuple{H,I} \modelsht \PNN{F}}$.
\end{lemma}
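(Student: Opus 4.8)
The plan is to unfold both sides with the clause for implication in the definition of $\modelsht$ and match the resulting conditions term by term. Write $F$ as $F_1 \to F_2$, where $F_1$ is a conjunction of dlv\nobreakdash-literals and $F_2$ is atomic or $\bot$, so that $\PNN{F}$ is $\sneg\sneg\NN{F_1} \to F_2$. By the clause for $\to$, $\tuple{H,I} \modelsht F$ holds iff $I \models F$ and either $\tuple{H,I} \not\modelsht F_1$ or $\tuple{H,I} \modelsht F_2$; likewise $\tuple{H,I} \modelsht \PNN{F}$ holds iff $I \models \PNN{F}$ and either $\tuple{H,I} \not\modelsht \sneg\sneg\NN{F_1}$ or $\tuple{H,I} \modelsht F_2$. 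Since the consequent $F_2$ is the same on both sides, it suffices to prove two equivalences: the classical part $I \models F$ iff $I \models \PNN{F}$, and the here\nobreakdash-and\nobreakdash-there part $\tuple{H,I} \modelsht F_1$ iff $\tuple{H,I} \modelsht \sneg\sneg\NN{F_1}$.

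The classical part is exactly the observation recorded just before Lemma~\ref{lem:flp.reduct.aux}, namely that $I \models F$ iff $I \models \PNN{F}$. It holds because under the classical relation $\models$ both $\neg$ and $\sneg$ denote classical negation, so $\NN{F_1}$ is equivalent to $F_1$ and the leading $\sneg\sneg$ cancels. For the here\nobreakdash-and\nobreakdash-there part I would exploit the syntactic shape of $F_1$: since it is a conjunction of dlv\nobreakdash-literals, Lemma~\ref{lem:dlv.body.satisfaction} gives $\tuple{H,I} \modelsht F_1$ iff $I \models F_1$ and $H \models F_1$. On the other side, Item~2 of Proposition~\ref{prop:ht:facts:abbr:neg} gives $\tuple{H,I} \modelsht \sneg\sneg\NN{F_1}$ iff $I \models \NN{F_1}$ and $H \models \NN{F_1}$. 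Because $\NN{F_1}$ and $F_1$ agree under the classical $\models$ in both $I$ and $H$ (replacing $\sneg$ by $\neg$ preserves classical satisfaction), these two conditions coincide, yielding $\tuple{H,I} \modelsht F_1$ iff $\tuple{H,I} \modelsht \sneg\sneg\NN{F_1}$.

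Combining the two equivalences, the unfolded satisfaction conditions for $\tuple{H,I} \modelsht F$ and $\tuple{H,I} \modelsht \PNN{F}$ match componentwise, which proves the lemma. I do not expect a genuine obstacle here; the only step needing care is the legitimate invocation of Lemma~\ref{lem:dlv.body.satisfaction}, that is, using that the antecedent of a dlv\nobreakdash-implication is by definition a conjunction of dlv\nobreakdash-literals, so that the prefix $\sneg\sneg$ acts as a persistent double negation whose here\nobreakdash-and\nobreakdash-there meaning (truth in both $H$ and $I$) is precisely plain $\modelsht$\nobreakdash-satisfaction of that conjunction.
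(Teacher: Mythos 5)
Your proof is correct and takes essentially the same approach as the paper's: both decompose ${\tuple{H,I} \modelsht F}$ versus ${\tuple{H,I} \modelsht \PNN{F}}$ into the classical conjunct ${I \models F}$ iff ${I \models \PNN{F}}$ and the here-and-there conjunct ${\tuple{H,I} \modelsht F_1}$ iff ${\tuple{H,I} \modelsht \sneg\sneg\NN{F_1}}$, with the latter established exactly as in the paper via Lemma~\ref{lem:dlv.body.satisfaction} and Item~2 of Proposition~\ref{prop:ht:facts:abbr:neg}. The only cosmetic difference is that where you argue directly that replacing $\sneg$ by $\neg$ preserves classical satisfaction, the paper cites Lemma~\ref{lem:n.satisfaction} for the same fact.
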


\begin{proof}
    Let~$F$ be of the form~${F_1 \to F_2}$.
    Note that ${I \models F_1}$ iff~${I \models \sneg\sneg \NN{F_1}}$.
    Hence, ${I \models F}$ iff~${I \models \PNN{F}}$.
    Therefore,
    ${\tuple{H,I} \modelsht F_1 \to F_2}$
    iff~${\tuple{H,I} \not\modelsht F_1}$ or~${\tuple{H,I} \models F_2}$
    iff (Lemma~\ref{lem:dlv.body.satisfaction}) ${I\not\models F_1}$ or~${H \not\models F_1}$ or~${\tuple{H,I} \models F_2}$
    iff (Lemma~\ref{lem:n.satisfaction}) ${I\not\models \NN{F_1}}$ or~${H \not\models \NN{F_1}}$ or~${\tuple{H,I} \models F_2}$
    iff (Proposition~\ref{prop:ht:facts:abbr:neg}) ${\tuple{H,I} \not\modelsht \sneg\sneg \NN{F_1}}$ or~${\tuple{H,I} \models F_2}$
    iff~${\tuple{H,I} \modelsht \sneg\sneg \NN{F_1} \to F_2}$.
\end{proof}

\begin{lemma}
    \label{lem:dlv.translations}
    Let~$R$ be a rule and~$\tuple{H,I}$ be an \agghtinterp. Then,
    $${\tuple{H,I} \modelsht \PPP{\tau R} \quad\text{ iff }\quad\tuple{H,I} \modelsht \PNN{R'}}$$
    where~$R'$ is~$\gr{I}{\taud R}$.
\end{lemma}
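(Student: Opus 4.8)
The plan is to prove the equivalence conjunct-by-conjunct over the ground instances $\boldz$ of the global variables $\boldZ$ of $R$. Recall that $\tau R$ is the infinitary conjunction of the implications $C_\boldz \to \mathit{Head}_\boldz$, where $C_\boldz = \tau(B_1)^\boldZ_\boldz \wedge \dots \wedge \tau(B_n)^\boldZ_\boldz$ and $\mathit{Head}_\boldz = \tau\mathit{Head}^\boldZ_\boldz$, while $R' = \gr{I}{\taud R}$ is the conjunction of the implications $D_\boldz \to \mathit{Head}'_\boldz$, where $D_\boldz = \gr{I}{(\taud_\boldZ B_1 \wedge \dots \wedge \taud_\boldZ B_n)^\boldZ_\boldz}$ and $\mathit{Head}'_\boldz = \gr{I}{(\taud_\boldZ \mathit{Head})^\boldZ_\boldz}$. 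Here $\boldz$ ranges over tuples of ground terms of the general sort, which by the definition of a standard interpretation is exactly $|I|^{\sortsuper}$, so the two index sets coincide and the conjuncts line up. Extending $\PPP{\cdot}$ and $\PNN{\cdot}$ conjunct-wise, and noting that the head of each rule is an atom or $\bot$ whose clingo and grounded dlv translations are syntactically identical (so $\mathit{Head}_\boldz = \mathit{Head}'_\boldz$), it suffices to show, for each $\boldz$,
\[\tuple{H,I} \modelsht \sneg\sneg C_\boldz \to \mathit{Head}_\boldz \quad\text{iff}\quad \tuple{H,I} \modelsht \sneg\sneg \NN{D_\boldz} \to \mathit{Head}_\boldz.\]

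Unwinding the satisfaction clause for $\to$ and using Proposition~\ref{prop:ht:facts:abbr:neg} (so that $\tuple{H,I} \modelsht \sneg\sneg F$ iff $I \models F$ and $H \models F$, and classically $I \models \sneg\sneg F$ iff $I \models F$), the displayed equivalence follows once we establish the two worldwise claims
\begin{align*}
\text{(A)}\quad I \models C_\boldz \iff I \models \NN{D_\boldz},
\qquad
\text{(B)}\quad H \models C_\boldz \iff H \models \NN{D_\boldz}.
\end{align*}
Indeed, given (A) and (B) the classical condition $I\models C_\boldz \to \mathit{Head}_\boldz$ and the here-world condition (that either $\tuple{H,I}\not\modelsht\sneg\sneg C_\boldz$ or $\tuple{H,I}\modelsht\mathit{Head}_\boldz$) are both preserved when $C_\boldz$ is replaced by $\NN{D_\boldz}$, since $\mathit{Head}_\boldz=\mathit{Head}'_\boldz$.

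Both $C_\boldz$ and $\NN{D_\boldz}$ are standard infinitary formulas obtained by conjoining the translations of the individual body literals, so (A) and (B) reduce to a literal-by-literal comparison. For a basic literal the two sides agree: a positive atom translates to itself on both sides; a comparison, being non-intensional, evaluates to the same truth constant under $\tau$ and under $\NN\circ\gr{I}{\cdot}$; and a negated basic literal $\Not A$ gives $\neg\tau A$ on the clingo side and $\NN{\gr{I}{\sneg\,\taud_\boldZ A}}=\neg\,\gr{I}{\taud_\boldZ A}$ on the dlv side, which coincide once the atom case is known (and similarly for $\Not\Not A$). The only substantive case is an aggregate literal $\mathtt{op}\{E\}\prec u$, for which $\tau$ produces the propositional formula~$\Phi$, namely~\eqref{eq:1:lem:abstrac.gringo.correspondence.ht}, while $\NN\circ\gr{I}{\cdot}$ applied to its dlv translation produces the single atom $\mathtt{op}(\setsd_{|E/\boldX|}(\boldx)) \prec u$ (intensional, hence untouched by grounding, and $\sneg$-free). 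For the there-world, (A) on this literal is exactly Lemma~\ref{lem:abstrac.gringo.correspondence.cl} with $x=\mathit{dlv}$ applied to the \aggmodel $I$. For the here-world I would establish the analogue of that lemma with $H$ in place of $I$: the defining condition of an \agghtinterp gives $\setsd_{|E/\boldX|}(\boldx)^H=[\Delta_H]$ with $\Delta_H=\{\,\boldy\mid H\models\boldl^{\boldX\boldY}_{\boldx\boldy}\,\}$, using that $H$ is classical so $H\models\taud(l_i)$ iff $H\models l_i$; since the computation in the proof of Lemma~\ref{lem:abstrac.gringo.correspondence.cl} only uses this defining equation together with $H$ interpreting $\widehat{\mathtt{op}}$ as a standard interpretation, the same chain of equivalences yields $H\models\Phi$ iff $H\models\mathtt{op}(\setsd_{|E/\boldX|}(\boldx))\prec u$, which is (B). The negated and doubly negated aggregate literals follow by prefixing $\neg$ (resp.\ $\neg\neg$) to both sides.

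The main obstacle is precisely this here-world aggregate case~(B): unlike the there-world, $H$ is \emph{not} an \aggmodel, so Lemma~\ref{lem:abstrac.gringo.correspondence.cl} cannot be cited verbatim. The crux is to recognize that the here-world value $\setsd^H$ is, by the definition of an \agghtinterp, fixed by classical $H$-satisfaction of the $\taud$-translated literals (in contrast to the clingo value $\setsg^H$, which is fixed by ht-satisfaction), so that the purely classical argument of Lemma~\ref{lem:abstrac.gringo.correspondence.cl} still applies with $H$ playing the role of the interpretation. Assembling (A) and (B) over all body literals gives the worldwise equivalences of $C_\boldz$ and $\NN{D_\boldz}$, and conjoining the resulting per-instance implication equivalences over all $\boldz$ completes the proof.
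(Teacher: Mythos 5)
Your proof is correct and takes essentially the same route as the paper's: split $\PPP{\tau R}$ and $\PNN{R'}$ into grounded implications of the forms ${\sneg\sneg F_1 \to F_2}$ and ${\sneg\sneg F_1' \to F_2}$, use Proposition~\ref{prop:ht:facts:abbr:neg} to reduce the $\sneg\sneg$ antecedents to classical satisfaction at $I$ and at $H$ separately, and discharge the only substantive difference---the aggregate literals---via Lemma~\ref{lem:abstrac.gringo.correspondence.cl}. Your explicit handling of the here-world aggregate case (B) is in fact more careful than the paper, which cites Lemma~\ref{lem:abstrac.gringo.correspondence.cl} verbatim for $H$ even though $H$ is not an \agginterp; your observation that the \agghtinterp condition fixes $\setsd_{|E/\boldX|}(\boldx)^H$ by classical $H$-satisfaction of the $\taud$-translated literals, so the lemma's purely classical computation goes through with $H$ in place of $I$, supplies exactly the justification the paper leaves implicit.
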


\begin{proof}
    We can see that~$\PPP{\tau R}$ and~$\PNN{R'}$ are of the form of~${\sneg\sneg F_1 \to F_2}$ and~${\sneg\sneg F_1' \to F_2}$, respectively, with~$F_1$ and~$F_1'$ differing only in the translation of aggregates, with the former containing formula~\eqref{eq:1:lem:abstrac.gringo.correspondence.ht} where the latter contains an atom of the form~$\mathtt{op}(\setsd_{|E/\boldX|}(\boldx)) \prec u$.
    By Proposition~\ref{prop:ht:facts:abbr:neg},
    it follows that~${\tuple{H,I} \modelsht \sneg\sneg F_1}$ iff~$I \models F_1$ and~$H \models F_1$, and~${\tuple{H,I} \modelsht \sneg\sneg F_1'}$ iff~$I \models F_1'$ and~$H \models F_1'$.
    Finally, by Lemma~\ref{lem:abstrac.gringo.correspondence.cl}, we get~$I \models F_1$ iff~$I \models F_1'$ and~$H \models F_1$ iff~$H \models F_1'$.
\end{proof}

\begin{lemma}\label{lem:dlv.translations.cl}
    Let~$R$ be a rule and~$I$ be an \agginterp. Then,
    $${I \models \tau R \quad\text{ iff }\quad I \models \taud R}$$
\end{lemma}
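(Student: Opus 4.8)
The plan is to prove both directions at once by passing to the infinitary level and then comparing $\tau R$ with $\g{I}{\taud R}$ leaf by leaf: the two formulas share the same propositional skeleton, and the only two places where they can possibly disagree become harmless once the here-world coincides with the there-world.

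First I would apply the grounding lemma for plain satisfaction: by Lemma~\ref{lem:grounding.cl}, $I \models \taud R$ iff $I \models \g{I}{\taud R}$. This reduces the claim to the equivalence $I \models \tau R$ iff $I \models \g{I}{\taud R}$ between two infinitary formulas over $\sigma$. Inspecting the definitions of $\tau$ and of $\taud$ followed by grounding, these two formulas agree everywhere except at two kinds of leaves: (i) at an aggregate atom, $\tau R$ carries the infinitary formula~\eqref{eq:1:lem:abstrac.gringo.correspondence.ht}, whereas $\g{I}{\taud R}$ carries an atom $\mathtt{op}(\setsd_{|E/\boldX|}(\boldx)) \prec u$; and (ii) at a negative literal, $\tau R$ uses $\neg$ while $\g{I}{\taud R}$ uses $\sneg$ (once or twice, and, for a singly negated aggregate, over $\setsg$ rather than $\setsd$).

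Next I would discharge these two kinds of leaves at the single interpretation $I$, i.e.\ under $\tuple{I,I}$. For the aggregate leaves, Lemma~\ref{lem:abstrac.gringo.correspondence.cl}, which is stated uniformly for $x \in \{\cli,\mathit{dlv}\}$, gives $I \models \mathtt{op}(\setsd_{|E/\boldX|}(\boldx)) \prec u$ iff $I \models \eqref{eq:1:lem:abstrac.gringo.correspondence.ht}$, and likewise for $\setsg$; so whichever set symbol appears, the aggregate leaf has the same truth value at $I$ as~\eqref{eq:1:lem:abstrac.gringo.correspondence.ht}. For the negative leaves, the key point is that at $H=I$ the two negations collapse to one and the same classical negation: by item~3 of Proposition~\ref{prop:persistence}, $\tuple{I,I} \modelsht \neg F$ iff $I \not\models F$, and by the definition of $\sneg$ under $\modelsht$ (equivalently item~1 of Proposition~\ref{prop:ht:facts:abbr:neg}), $\tuple{I,I} \modelsht \sneg F$ iff $I \not\models F$; the doubly negated case is handled the same way through item~4 of Proposition~\ref{prop:persistence} and item~2 of Proposition~\ref{prop:ht:facts:abbr:neg}, both of which reduce to $I \models F$. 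A routine induction on the shared formula structure, using these base cases together with the fact that $\tuple{I,I} \modelsht$ treats $\wedge,\vee,\to$ and the quantifiers in the usual classical way, then yields $I \models \tau R$ iff $I \models \g{I}{\taud R}$; combining with the grounding step closes the argument.

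I expect the only real obstacle to be organizational rather than mathematical: one must check that $\tau R$ and $\g{I}{\taud R}$ really do share a skeleton and align the two aggregate encodings. In particular one has to notice the small asymmetry in the definition of $\taud$, namely that a singly negated aggregate is translated through $\setsg$ while a doubly negated one is translated through $\setsd$, but this causes no trouble since Lemma~\ref{lem:abstrac.gringo.correspondence.cl} makes both set symbols agree with~\eqref{eq:1:lem:abstrac.gringo.correspondence.ht} at the single interpretation $I$. A slightly shorter alternative would be to instantiate Lemma~\ref{lem:abstrac.gringo.correspondence.ht2b} at $H=I$ for the one-rule program $\{R\}$ (observing that $\tuple{I,I}$ is an \agghtinterp whenever $I$ is an \agginterp) to obtain $I \models \tau R$ iff $I \models \taug R$, and then pass from $\taug R$ to $\taud R$ by exactly the same leaf-by-leaf comparison of negations and set symbols.
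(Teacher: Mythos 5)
Your proof is correct, but it takes a genuinely different route from the paper's. The paper obtains this lemma as a one-line specialization of the HT-level Lemma~\ref{lem:dlv.translations}: writing $R'$ for $\gr{I}{\taud R}$, it chains $I \models \tau R$ iff $\tuple{I,I} \modelsht \PPP{\tau R}$ iff (Lemma~\ref{lem:dlv.translations}) $\tuple{I,I} \modelsht \PNN{R'}$ iff $I \models R'$ iff (Lemma~\ref{lem:grounding.cl}) $I \models \taud R$, so the entire leaf-by-leaf comparison is inherited from the double-negation wrapper machinery ($\PPP{\cdot}$, $\PNN{\cdot}$, $\NN{\cdot}$) that was built for the FLP-reduct argument. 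You instead stay entirely at the classical level: after the same grounding step you compare $\tau R$ and $\gr{I}{\taud R}$ directly, discharging the aggregate leaves with Lemma~\ref{lem:abstrac.gringo.correspondence.cl} --- correctly exploiting that it is stated for both $x = \cli$ and $x = \mathit{dlv}$, which is exactly what absorbs the asymmetry you spotted, namely that $\taud$ routes $\Not A$ through $\taug$ and hence through $\setsg$ while routing $\Not\Not A$ through $\setsd$ --- and collapsing $\neg$ and $\sneg$ at the total interpretation $\tuple{I,I}$ via Propositions~\ref{prop:persistence} and~\ref{prop:ht:facts:abbr:neg}. Mathematically both arguments rest on the same two pillars, grounding plus Lemma~\ref{lem:abstrac.gringo.correspondence.cl}, since the paper's Lemma~\ref{lem:dlv.translations} is itself proved by that same leaf comparison; the paper's version buys brevity by reusing an already-proved HT lemma, while yours buys self-containedness and avoids the $\sneg\sneg$ wrappers entirely, at the cost of redoing a (classical, hence easier) instance of the induction. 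Your closing alternative is also sound: $\tuple{I,I}$ is indeed an \agghtinterp whenever $I$ is an \agginterp, because at $H=I$ the here-world conditions on $\setsg_{|E/\boldX|}$ and $\setsd_{|E/\boldX|}$ reduce to the \agginterp conditions for $x=\cli$ and $x=\mathit{dlv}$ respectively, so instantiating Lemma~\ref{lem:abstrac.gringo.correspondence.ht2b} at $H=I$ yields $I \models \tau R$ iff $I \models \taug R$, leaving only the $\taug$-versus-$\taud$ comparison at the single interpretation~$I$.
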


\begin{proof}
    Let~$R'$ be the result of~$\gr{I}{\taud R}$.
    Then, it follows that
    ${I \models \tau R}$
    iff~${I \modelsht \PPP{\tau R}}$
    iff~${\tuple{I,I} \modelsht \PPP{\tau R}}$
    iff (Lemma~\ref{lem:dlv.translations})~${\tuple{I,I} \modelsht \PNN{R'}}$
    iff~${I \modelsht \PNN{R'}}$
    iff~${I \modelsht R'}$
    iff (Proposition~\ref{lem:grounding.cl})~${I \models \taud R}$.%
\end{proof}

\begin{lemma}\label{lem:dlv.reduct.ht}
    Let~$R$ be a rule and~$\tuple{H,I}$ be an \agghtinterp.
    Then, the following two conditions are equivalent
    \begin{itemize}
        \item ${\Ans{I}\models \tau R}$ and ${\Ans{H} \models \FLP(\tau R,\Ans{I})}$, and
        \item ${\tuple{H,I} \modelsht \taud R}$.
    \end{itemize}
\end{lemma}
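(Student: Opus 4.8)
The plan is to establish a chain of equivalences linking the two-sided FLP condition on~$\tau R$ to satisfaction of~$\taud R$ in the logic of here-and-there, using the auxiliary results already developed for exactly this purpose. The backbone is Lemma~\ref{lem:flp.reduct.aux}, which packages the relation between the FLP condition and the modified implication~$\PPP{\cdot}$; Lemmas~\ref{lem:dlv.translations} and~\ref{lem:dlv.implication.rewriting}, which bridge the \clingo-style translation~$\tau$ to the \dlv-style translation~$\taud$; and Proposition~\ref{lem:grounding.ht}, which removes the grounding. Writing (a) for the condition ``$\Ans{I}\models\tau R$ and $\Ans{H}\models\FLP(\tau R,\Ans{I})$'' and (b) for ``$\tuple{H,I}\modelsht\taud R$'', I aim to show (a)$\iff$(b) by composing these four facts.

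First I would reduce (a) to a statement about the individual ground instances of the rule. Recall that~$\tau R$ is an infinitary conjunction~$\mathcal{F}^\wedge$ of ground implications, one per ground substitution of the global variables, and that each such implication~$G$ has an atomic formula or~$\bot$ as its consequent, because the head of~$R$ is an atom or~$\bot$. Since~$\FLP(\cdot,\Ans{I})$ and~$\modelsht$ both distribute over this conjunction, (a) is equivalent to: for every ground instance~$G$ of~$\tau R$, both~$\Ans{I}\models G$ and~$\Ans{H}\models\FLP(G,\Ans{I})$. As~$\tau R$ is propositional and each~$G$ has the required form, Lemma~\ref{lem:flp.reduct.aux} applies to each~$G$ and rewrites this as~$\tuple{H,I}\modelsht\PPP{G}$ for every~$G$, that is, as~$\tuple{H,I}\modelsht\PPP{\tau R}$.

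Next I would apply Lemma~\ref{lem:dlv.translations} at the rule level to rewrite~$\tuple{H,I}\modelsht\PPP{\tau R}$ as~$\tuple{H,I}\modelsht\PNN{R'}$, where~$R'$ is~$\g{I}{\taud R}$. Grounding turns the universal closure in~$\taud R$ into a conjunction of ground implications whose antecedents are conjunctions of dlv-literals (the aggregate atoms~$\fsum(\setsd_{|E/\boldX|}(\boldx))\prec u$ survive grounding intact, being built from the intensional functions~$\setsd$) and whose consequents are atoms or~$\bot$; hence each ground instance of~$R'$ is a dlv-implication. Applying Lemma~\ref{lem:dlv.implication.rewriting} instance-by-instance then yields~$\tuple{H,I}\modelsht\PNN{R'}$ iff~$\tuple{H,I}\modelsht R'$. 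Finally, since~$R'=\g{I}{\taud R}$, Proposition~\ref{lem:grounding.ht} gives~$\tuple{H,I}\modelsht R'$ iff~$\tuple{H,I}\modelsht\taud R$, which is condition~(b), closing the chain.

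The step I expect to require the most care is the bookkeeping around the conjunctive (ground-instance) structure: Lemmas~\ref{lem:flp.reduct.aux} and~\ref{lem:dlv.implication.rewriting} are stated for a single implication, whereas~$\tau R$ and~$\g{I}{\taud R}$ are infinitary conjunctions of implications, so I must verify that~$\PPP{\cdot}$, $\PNN{\cdot}$, $\FLP(\cdot,\Ans{I})$, and~$\modelsht$ all commute with this conjunction in a compatible way, and that the hypotheses (consequent atomic or~$\bot$, antecedent a conjunction of dlv-literals, and the formula propositional where Lemma~\ref{lem:flp.reduct.aux} demands it) hold uniformly across all ground instances. Once this is checked, the four cited results compose directly and no further computation is needed.
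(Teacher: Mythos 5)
Your proposal is correct and follows essentially the same route as the paper's proof: the paper composes exactly the same four results (Proposition~\ref{lem:grounding.ht} to pass between $\taud R$ and $R'=\g{I}{\taud R}$, Lemma~\ref{lem:dlv.implication.rewriting} to pass between $R'$ and $\PNN{R'}$, Lemma~\ref{lem:dlv.translations} to bridge $\PNN{R'}$ and $\PPP{\tau R}$, and Lemma~\ref{lem:flp.reduct.aux} to recover the FLP condition), merely traversing the chain in the opposite direction. The instance-wise bookkeeping you flag (that $\PPP{\cdot}$, $\PNN{\cdot}$, $\FLP(\cdot,\Ans{I})$, and $\modelsht$ commute with the infinitary conjunction of ground instances) is left implicit in the paper but is exactly the right thing to verify, and it goes through as you describe.
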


\begin{proof}
    Let~$R'$ be~$\gr{I}{\taud R}$.
    By Proposition~\ref{lem:grounding.ht},
    we get~${\tuple{H,I} \modelsht \taud R}$ iff~${\tuple{H,I} \models R'}$.
    Furthermore,
    $R'$ is a dlv\nobreakdash-implication and,
    by Lemmas~\ref{lem:flp.reduct.aux} and~\ref{lem:dlv.implication.rewriting} we respectively get:
    \begin{itemize}
        \item ${\tuple{H,I} \modelsht \PPP{\tau R}}$ iff~${\Ans{I} \models F}$ and~${\Ans{H} \models \FLP(\tau R,\Ans{I})}$,
        \item ${\tuple{H,I} \modelsht R'}$ iff~${\tuple{H,I} \modelsht \PNN{R'}}$.
    \end{itemize}
    Hence, it remains to be shown
    $${\tuple{H,I} \modelsht \PPP{\tau R} \quad\text{ iff }\quad\tuple{H,I} \modelsht \PNN{R'}}$$
    which follows by Lemma~\ref{lem:dlv.translations}.
\end{proof}

\begin{proof}[Proof of Theorem~\ref{thm:dlv.correspondence}]
    Assume that~$\Ans{I}$ is a fo\nobreakdash-dlv answer set of~$\Pi$.
    By definition, there is a $I$ is an \aggstable model of~$\taud \Pi$.
    In its turn, this implies that~$I$ is an \aggmodel of~$\taud \Pi$
    and there is no \agghtmodel~$\tuple{H,I}$ of~$\taud \Pi$ with~${H \less I}$.
    By Lemma~\ref{lem:dlv.translations.cl},
    it follows that~$\Ans{I}$ is a model of~$\tau\Pi$.
    Suppose, for the sake of contradiction, that there is~${\Ans{H} \subset \Ans{I}}$ such that~${\Ans{H} \models \FLP(\tau\Pi,\Ans{I})}$.
    Let~$\tuple{H,I}$ be the \agghtinterp with~$\Ans{H}$ and~$\Ans{I}$ the set of ground atoms of~$\sigma^p$ satisfied by~$H$ and~$I$, respectively.
    %
    %
    Then, by Proposition~\ref{prop:less.ht} and Lemma~\ref{lem:dlv.reduct.ht},
    it follows~${H \less I}$ and~${\tuple{H,I} \modelsht \taud \Pi}$.
    This is a contradiction because there is no \agghtmodel~$\tuple{H,I}$ of~$\taud \Pi$ with~${H \less I}$.
    \\[5pt]
    Conversely, assume that~$\Ans{I}$ is a dlv answer set of~$\Pi$.
    By definition, $\Ans{I}$ is a model of~$\tau\Pi$ and there is no model~$\Ans{H}$ of~$\FLP(\tau\Pi,\Ans{I})$ with~${\Ans{H} \subset \Ans{I}}$.
    By Lemma~\ref{lem:dlv.translations.cl}, the former implies that there is an \aggmodel~$I$ of~$\taud\Pi$.
    Suppose, for the sake of contradiction, that there is some \agghtmodel~$\tuple{H,I}$ of~$\taud\Pi$ with~${H \less I}$.
    By Lemma~\ref{lem:dlv.reduct.ht} and Proposition~\ref{prop:less.ht}, this implies that~$\Ans{H}$ satisfies~$\FLP(\taud\Pi,\Ans{I})$ with~~$\Ans{H} \subset \Ans{I}$, which is a contradiction.
    \end{proof}

\subsection{Proofs of Section Strong Equivalence}\label{sec:proofs.strong.equivalece}

\begin{lemma}\label{lem:same.htmodels.same.stable.models}
    If~$\Gamma_1$ and~$\Gamma_2$ have the same \agghtmodels, then they have the same \aggstable models.
\end{lemma}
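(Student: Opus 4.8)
The plan is to reduce everything to a single diagonal observation: for any agg\nobreakdash-interpretation $I$, we have that $I$ is an \aggmodel of a theory $\Gamma$ exactly when the \htinterp $\tuple{I,I}$ is an \agghtmodel of $\Gamma$. Granting this, the result is almost immediate, because the defining condition of an \aggstable\ model (Definition~\ref{def:agg.stable.model}) quantifies only over \agghtmodels $\tuple{H,I}$ that share the same there\nobreakdash-world $I$, and the hypothesis guarantees that $\Gamma_1$ and $\Gamma_2$ possess \emph{precisely} the same \agghtmodels. So the two ingredients an \aggstable\ model depends on—being an \aggmodel, and the nonexistence of a strictly smaller \agghtmodel with the same $I$—are both preserved when passing between $\Gamma_1$ and $\Gamma_2$.

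To establish the observation I would first invoke the first item of Proposition~\ref{prop:persistence}, which gives $\tuple{I,I} \modelsht F$ iff $I \models F$; hence $\tuple{I,I}$ satisfies $\Gamma$ iff $I$ does. It then remains to check that $\tuple{I,I}$ is an \agghtinterp\ exactly when $I$ is an \agginterp. This is the step that requires unwinding the set\nobreakdash-symbol conditions: for the here\nobreakdash-world the value $\setsg_{|E/\boldX|}(\boldx)^H$ is prescribed in terms of $\tuple{H,I}$\nobreakdash-satisfaction, while $\setsd_{|E/\boldX|}(\boldx)^H$ is prescribed in terms of $H$\nobreakdash-satisfaction. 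Setting $H=I$ and applying Proposition~\ref{prop:persistence} once more collapses both stipulations to the corresponding \agginterp\ conditions on $I$, so the two notions coincide on the diagonal.

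With the observation in hand, the argument is a straightforward transfer of witnesses. Let $I$ be an \aggstable\ model of $\Gamma_1$. Then $I$ is an \aggmodel of $\Gamma_1$, so $\tuple{I,I}$ is an \agghtmodel of $\Gamma_1$, hence of $\Gamma_2$ by hypothesis, and therefore $I$ is an \aggmodel of $\Gamma_2$. If $I$ were not \aggstable\ for $\Gamma_2$, there would be an \agghtmodel $\tuple{H,I}$ of $\Gamma_2$ with $H \less I$; by the equality of the \agghtmodel\ classes this same $\tuple{H,I}$ is an \agghtmodel of $\Gamma_1$ with $H \less I$, contradicting the \aggstability of $I$ for $\Gamma_1$. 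Thus $I$ is an \aggstable\ model of $\Gamma_2$, and the reverse inclusion follows by the symmetric argument, establishing that the \aggstable\ models of $\Gamma_1$ and $\Gamma_2$ coincide.

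The only step that demands genuine care—and hence the main obstacle—is the diagonal correspondence between \aggmodels and \agghtmodels, specifically verifying that the here\nobreakdash-world stipulations on $\setsg_{|E/\boldX|}$ and $\setsd_{|E/\boldX|}$ reduce to the there\nobreakdash-world \agginterp\ stipulations when $H=I$. Everything after that is a purely formal transfer across the two theories.
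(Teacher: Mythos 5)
Your proof is correct and follows essentially the same route as the paper's: the paper likewise transfers a strictly smaller \agghtmodel~$\tuple{H,I}$ between the two theories to derive a contradiction, after noting that equal \agghtmodels entail equal \aggmodels. The only difference is that the paper asserts this latter step as immediate, whereas you explicitly justify it via the diagonal correspondence ($I$ is an \aggmodel of~$\Gamma$ iff $\tuple{I,I}$ is an \agghtmodel of~$\Gamma$, using Proposition~\ref{prop:persistence} and checking the set-symbol conditions at~$H=I$), which is a sound and slightly more careful rendering of the same argument.
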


\begin{proof}
    Since~$\Gamma_1$ and~$\Gamma_2$ have the same \agghtmodels, they also have the same \aggmodels.
    Suppose, for the sake of contradiction, that they have different \aggstable models.
    Assume, without loss of generality, that~$I$ is an \aggstable model of~$\Gamma_1$ but not of~$\Gamma_2$.\
    Since~$\Gamma_1$ and~$\Gamma_2$ have the same models and~$I$ is a model of~$\Gamma_1$, it follows that~$I$ is a model of~$\Gamma_2$.
    Since~$I$ is not an \aggstable model of~$\Gamma_2$, there is a \agghtmodel~$\tuple{H,I}$ of~$\Gamma_2$ such that~${H \less I}$.
    Since $\Gamma_1$ and~$\Gamma_2$ have the same \agghtmodels, this implies that $\tuple{H,I}$ is also an \agghtmodel of~$\Gamma_1$, which is a contradiction with the assumption that~$I$ is a \aggstable model of~$\Gamma_1$.
\end{proof}

For any interpretation~$I$, by~$\Delta_I$ we denote the program containing all facts of the form~``$p(\boldt)$'' such that~${I \models \taug(p(\boldt))}$ with~${p \in \P}$.
Similarly, for an \htinterp~$\tuple{H,I}$, by~$\Delta_{\tuple{H,I}}$ we denote the program containing all facts in~$\Delta_I$ plus all rules of the form~``${p(\boldt) \text{\rm\ruleo} q(\boldu)}$''
such that~${I \models \taug (p(\boldt) \wedge q(\boldu))}$ and~${H \not\models \taug (p(\boldt) \vee q(\boldu))}$ with~$p,q \in \P$.

\begin{lemma}\label{lem:strong.equivalence.only.if.cl}
    Take any two sets of sentences, $\Gamma_1$ and $\Gamma_2$ and let~$I$ be a \aggmodel\ of~$\Gamma_1$ that does not satisfy~$\Gamma_2$.
    Then, $I$ is an \aggstable model of~$\Gamma_1 \cup \tau^x\Delta_I$, but not of~$\Gamma_2\cup \tau^x\Delta_I$ with~$x \in \{\mathit{cli},\dlv \}$.
\end{lemma}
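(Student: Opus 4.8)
The plan is to prove the two assertions separately, and to notice that the failure of stability for $\Gamma_2 \cup \tau^x\Delta_I$ is essentially free while all the real work lies in establishing stability for $\Gamma_1 \cup \tau^x\Delta_I$. For the negative claim, since $I$ does not satisfy $\Gamma_2$ it is not a model of $\Gamma_2 \cup \tau^x\Delta_I$; as every \aggstable{} model is in particular an \aggmodel{} (Definition~\ref{def:agg.stable.model}), $I$ cannot be an \aggstable{} model of $\Gamma_2 \cup \tau^x\Delta_I$. This disposes of the second part for both $x \in \{\mathit{cli},\mathit{dlv}\}$ at once.

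For the positive claim I would first check that $I$ is an \aggmodel{} of $\Gamma_1 \cup \tau^x\Delta_I$. By hypothesis $I \models \Gamma_1$ and $I$ is an \agginterp, so it remains only to see that $I \models \tau^x\Delta_I$. Here I use that $\Delta_I$ consists solely of ground facts $p(\boldt)$ with $p \in \P$, that for such atoms $\taug(p(\boldt)) = \taud(p(\boldt)) = p(\boldt)$ (so that $\tau^x\Delta_I$ does not in fact depend on $x$), and that by the very definition of $\Delta_I$ we have $I \models p(\boldt)$ for each of its facts. Hence $I$ satisfies $\tau^x\Delta_I$, and therefore the whole union.

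The core is stability. I would suppose toward a contradiction that there is an \agghtmodel~$\tuple{H,I}$ of $\Gamma_1 \cup \tau^x\Delta_I$ with $H \less I$. Since $\tuple{H,I} \modelsht \tau^x\Delta_I$, for every fact $p(\boldt)$ of $\Delta_I$ we obtain $\tuple{H,I} \modelsht p(\boldt)$ and hence $H \models p(\boldt)$. Because $\Delta_I$ collects \emph{every} $\P$-atom true in $I$, this yields $\mathcal{I} \subseteq \mathcal{H}$: the facts force agreement on the intensional predicate atoms, while all remaining atoms of $\sigma^p$ (comparisons and membership atoms without intensional functions) are interpreted identically in $H$ and $I$, being non-intensional. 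As $H \lesseq I$ always gives $\mathcal{H} \subseteq \mathcal{I}$, we conclude $\mathcal{H} = \mathcal{I}$, and then Proposition~\ref{prop:less.ht} tells us that $H \less I$ fails, the desired contradiction. Thus no such $\tuple{H,I}$ exists and $I$ is an \aggstable{} model of $\Gamma_1 \cup \tau^x\Delta_I$.

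The step I expect to demand the most care is the containment $\mathcal{I} \subseteq \mathcal{H}$ forced by the facts and its interaction with $H \lesseq I$. The facts pin down only the intensional \emph{predicate} atoms, and it is Proposition~\ref{prop:less.ht} --- which, via Lemma~\ref{lem:agginterp.less}, already reduces $H \less I$ to $\mathcal{H} \subset \mathcal{I}$ and thereby absorbs the possibility of differing intensional \emph{function} values --- that lets this predicate-level equality rule out $H \less I$ outright. I would also take care to justify that comparison predicates, which lie in $\sigma^p$ but are non-intensional, contribute nothing to a possible strict inclusion, since they agree in $H$ and $I$ by the standardness conditions.
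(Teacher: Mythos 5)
Your proof is correct and takes essentially the same route as the paper's: the facts in $\Delta_I$ force $p^H = p^I$ for every intensional predicate, and the reduction of~$H \less I$ to a purely predicate-level comparison (you invoke Proposition~\ref{prop:less.ht}, while the paper applies Lemma~\ref{lem:agginterp.less} directly, of which that proposition is just the propositional restatement) then rules out any strictly smaller here-world, absorbing possible differences in intensional function values exactly as you note. The negative half---$I$ fails to be a model, hence an \aggstable{} model, of~$\Gamma_2 \cup \tau^x\Delta_I$---matches the paper's argument verbatim.
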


\begin{proof}
    By the definition, it follows that~${(A) \in \Pi}$ iff ${I \models \taug A}$ iff ${I \models \taud A}$.
    Note that~${\taug(A) = \taud(A)}$ for all~${A \in \Delta_I}$.
    Thus, $I$ is a model of~$\tau^x\Pi$.
    Furthermore, there is no~$\tuple{H,I}$ with~${H \lessP{\P\emptyset} I}$ satisfies~$\tau^x\Pi$.
    By Lemma~\ref{lem:agginterp.less}, this implies that there is no \agginterp~$H$ with~$H \less I$ such that~$\tuple{H,I}$ satisfies~$\tau^x\Pi$.
    Since~$I$ is also a model of~$\Gamma_1$, it follows that~$I$ is a model of~$\Gamma_1 \cup \tau^x\Pi$ and, thus, it a \aggstable model of~$\Gamma_1 \cup \tau^x\Pi$.
    Since~$I$ does not satisfy~$\Gamma_2$, it follows that~$I$ is not an \aggstable model of~$\Gamma_2 \cup \tau^x\Pi$.
\end{proof}

\begin{lemma}\label{lem:strong.equivalence.only.if.ht}
    Take any two sets of sentences, $\Gamma_1$ and $\Gamma_2$ with the same classical models and let~$\tuple{H,I}$ be an \agghtmodel\ of~$\Gamma_1$ that does not satisfy~$\Gamma_2$.
    Then, $I$ is an \aggstable model of~$\Gamma_3 \cup \tau^x\Delta_{\tuple{H,I}}$, but not of~$\Gamma_1\cup \tau^x\Delta_{\tuple{H,I}}$ with~$x \in \{\mathit{cli},\dlv \}$.
\end{lemma}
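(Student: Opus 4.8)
The plan is to use the Here-and-There characterisation of stability in Definition~\ref{def:agg.stable.model}: since $I$ will turn out to model each theory in play, everything reduces to controlling which \agghtmodels $\tuple{H',I}$ with $H'\less I$ survive once the context $\tau^x\Delta_{\tuple{H,I}}$ is added. The heart of the argument is that this context is rigid enough to force any such here-world $H'$ to be either $H$ or $I$; the two halves of the statement then read off directly from the hypotheses $\tuple{H,I}\modelsht\Gamma_1$ and $\tuple{H,I}\not\modelsht\Gamma_2$. Because the context contains no aggregates and no double negation, $\taug$ and $\taud$ coincide on it, so the whole argument will be uniform in $x\in\{\cli,\mathit{dlv}\}$.

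First I would dispatch the easy membership facts. By persistence (Proposition~\ref{prop:persistence}), $\tuple{H,I}\modelsht\Gamma_1$ gives $I\models\Gamma_1$, and because $\Gamma_1$ and $\Gamma_2$ have the same classical models we also get $I\models\Gamma_2$. A direct check shows $I\models\tau^x\Delta_{\tuple{H,I}}$: the facts hold in $I$ because the atoms true in $H$ are among those true in $I$, and each added implication $q(\boldu)\to p(\boldt)$ holds because its consequent is already true in $I$. Thus $I$ is an \aggmodel of both $\Gamma_1\cup\tau^x\Delta_{\tuple{H,I}}$ and $\Gamma_2\cup\tau^x\Delta_{\tuple{H,I}}$. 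I would likewise verify $\tuple{H,I}\modelsht\tau^x\Delta_{\tuple{H,I}}$: the facts are ht-satisfied since they hold in $H$, and each implication $q(\boldu)\to p(\boldt)$ is ht-satisfied because, by construction, $q(\boldu)$ is false in $H$, so $\tuple{H,I}\not\modelsht q(\boldu)$.

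The key step is the rigidity claim: every \agghtinterp $\tuple{H',I}$ satisfying $\tau^x\Delta_{\tuple{H,I}}$ has $H'=H$ or $H'=I$. The facts force the atoms true in $H$ to be true in $H'$, while $H'\lesseq I$ always forces the atoms true in $H'$ to lie in $\mathcal{I}$; and for every pair of atoms $p(\boldt),q(\boldu)$ that are true in $I$ but false in $H$, the context contains both $q(\boldu)\to p(\boldt)$ and $p(\boldt)\to q(\boldu)$, so if the here-world of $\tuple{H',I}$ contains even one atom of $\mathcal{I}\setminus\mathcal{H}$ it must contain all of them. Hence the atoms true in $H'$ are either exactly $\mathcal{H}$ or exactly $\mathcal{I}$. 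Finally, in an \agghtinterp the values of the set functions in the here-world are determined by its predicate part together with $I$ (the literals in an aggregate element are basic, hence their translations contain no set symbols), so fixing the atoms of $H'$ fixes $H'$ entirely; by Lemma~\ref{lem:agginterp.less} and Proposition~\ref{prop:less.ht} this gives $H'\in\{H,I\}$, and the only candidate with $H'\less I$ is $H$.

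With these pieces in place I would close both claims. For $\Gamma_1$, the pair $\tuple{H,I}$ is an \agghtmodel of $\Gamma_1\cup\tau^x\Delta_{\tuple{H,I}}$ with $H\less I$, witnessing that $I$ is \emph{not} \aggstable for that theory. For $\Gamma_2$, $I$ is an \aggmodel and, by the rigidity claim, the only possible strictly smaller witness would be $H$; but $\tuple{H,I}\not\modelsht\Gamma_2$ rules it out, so no \agghtmodel $\tuple{H',I}$ with $H'\less I$ satisfies $\Gamma_2\cup\tau^x\Delta_{\tuple{H,I}}$, and $I$ \emph{is} \aggstable for it. I expect the main obstacle to be precisely the rigidity claim, and within it the passage from ``the predicate part of $H'$ equals $\mathcal{H}$ or $\mathcal{I}$'' to ``$H'$ equals $H$ or $I$'': this is where one must invoke that an \agghtinterp is completely pinned down by its predicate part once $I$ is fixed, and where the $\cli$ and $\mathit{dlv}$ definitions of the set functions, although both reconstructible from the predicates, must be handled through their respective clauses.
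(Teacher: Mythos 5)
Your proof is correct and is essentially the paper's own argument: the paper likewise checks that $I$ models both theories and that ${\tuple{H,I} \modelsht \tau^x\Delta_{\tuple{H,I}}}$ (which settles non-stability for $\Gamma_1$), and its treatment of $\Gamma_2$ --- assuming a witness $\tuple{J,I}$ with ${H \lesseqP{\P\emptyset} J \lessP{\P\emptyset} I}$ and exhibiting a violated rule ${q(\boldu) \ruleo p(\boldt)}$ with $p(\boldt)$ true in $J$ but not $H$ and $q(\boldu)$ true in $I$ but not $J$ --- is exactly the contrapositive packaging of your rigidity dichotomy, while your explicit observation that an \agghtinterp\ with fixed there-world $I$ is pinned down by its predicate part (since aggregate-element literals are basic, so their translations contain no set symbols) is precisely the step the paper glosses when it passes from ``$H \neq J$'' to ${H \lessP{\P\emptyset} J}$; note also that you silently adopted the intended readings of two typos in the paper ($\Gamma_3$ should be $\Gamma_2$, and the facts in $\Delta_{\tuple{H,I}}$ must be the atoms true in $H$ rather than in $I$, as the paper's own inference ${H \lesseqP{\P\emptyset} J}$ from those facts confirms). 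The only gap --- shared with the paper and fillable in one line --- is the strictness ${H \less I}$ needed for the non-stability half: if ${H = I}$, then Proposition~\ref{prop:persistence} together with the equal-classical-models hypothesis (which gives ${I \models \Gamma_2}$) would yield ${\tuple{H,I} \modelsht \Gamma_2}$, contradicting the assumption, so ${H \less I}$ indeed holds.
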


\begin{proof}
    First note that~$\taug(\Delta_{\tuple{H,I}}) = \taud(\Delta_{\tuple{H,I}})$.
    Hence, in the following, we do not distinguish between the two.
    Furthermore, 
    $I$ satisfies~$\tau^x(\Delta_{\tuple{H,I}})$ because, by definition, it satisfies the consequent of every rule in~$\Delta_{\tuple{H,I}}$.
    Hence, $I$ is a model of~$\Gamma_1 \cup \tau^x(\Delta_{\tuple{H,I}})$ and~$\Gamma_2 \cup \tau^x(\Delta_{\tuple{H,I}})$.
    To see that~$I$ is an \aggstable model of~$\Gamma_2 \cup \tau^x(\Delta_{\tuple{H,I}})$, 
    suppose for the sake of contradiction that there is an \agginterp~$J$ with~${J \less I}$ such that~$\tuple{J,I}$ satisfies~$\Gamma_2 \cup \tau^x(\Delta_{\tuple{H,I}})$.
    This implies that~$\tuple{J,I}$ satisfies~$\tau^x(\Delta_I)$ and, thus that~${H \lesseqP{\P\emptyset} J}$.
    %
    %
    %
    Furthermore, $H$ must be different from~$J$ because~$\tuple{H,I}$ does not satisfy~$\Gamma_2$ and~$\tuple{J,I}$ does.
    %
    %
    By Lemma~\ref{lem:agginterp.less}, it follows that~ $J \less I$ implies~$J \lessP{\P\emptyset} I $.
    Hence, ${H \lessP{\P\emptyset} J \lessP{\P\emptyset} I}$.
    Let~$p(\boldt)$ be an atom with~${p \in \P}$ such that~${J \models \tau^x(p(\boldt))}$ and~${H \not\models \tau^x(p(\boldt))}$.
    Let~$q(\boldu)$ be an atom with~${p \in \P}$ such that~${I \models \tau^x(q(\boldu))}$ and~${J \not\models \tau^x(q(\boldu))}$.
    Therefore, rule~``$q(\boldu) \text{\rm\ruleo} p(\boldt)$'' belongs to~$\Delta_{\tuple{H,I}}$.
    Let this rule be named~$R$.
    Then,~$\tuple{J,I}$ does not satisfies~$\tau^xR$.
    This implies that~$\tuple{J,I}$ does not satisfy~$\Gamma_2 \cup \tau^x(\Delta_{\tuple{H,I}})$, which is a contradiction with the assumption.
    Hence, $I$ is an \aggstable model of~$\Gamma_2 \cup \tau^x(\Delta_{\tuple{H,I}})$.

    It remains to be shown that~$I$ is not an \aggstable model of~$\Gamma_1 \cup \tau^x(\Delta_{\tuple{H,I}})$.
    We show that~$\tuple{H,I}$ satisfies~$\Gamma_1 \cup \tau^x(\Delta_{\tuple{H,I}})$.
    It is a model of~$\Delta_I$.
    Furthermore, it also satisfies every rule~$R$ in~$\Delta_{\tuple{H,I}}$ of the form ``$q(\boldu) \text{\rm\ruleo} p(\boldt)$'' because~$\tuple{H,I} \not\modelsht \tau^x(p(\boldt))$ and~$I \models \tau^x(q(\boldu))$.
    Hence, $\tuple{H,I}$ satisfies~$\Gamma_1 \cup \tau^x(\Delta_{\tuple{H,I}})$ and, thus, $I$ is not an \aggstable model of~$\Gamma_1 \cup \tau^x(\Delta_{\tuple{H,I}})$.
\end{proof}

\begin{lemma}
    \label{lem:strong.equivalence.only.if}
    If~$\Gamma_1$ and $\Gamma_2$ do not have the same \agghtmodels, then there is some program~$\Delta$ without aggregates nor double negation such that~${\Gamma_1 \cup \tau^x(\Delta)}$ and ${\Gamma_2 \cup \tau^x(\Delta)}$ do not have the same \aggstable models, with~$x \in \{\mathit{cli},\dlv \}$.
\end{lemma}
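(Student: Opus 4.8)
The plan is to prove the statement directly by exhibiting, in each case, a concrete context $\Delta$ that separates the \aggstable models, splitting on \emph{where} $\Gamma_1$ and $\Gamma_2$ first disagree. By Proposition~\ref{prop:persistence}, every \aggmodel $I$ gives an \agghtmodel $\tuple{I,I}$, and every \agghtmodel $\tuple{H,I}$ has $I$ as an \aggmodel; so if $\Gamma_1$ and $\Gamma_2$ fail to have the same \agghtmodels, then either (Case~1) they already fail to have the same \aggmodels, or (Case~2) they have the same \aggmodels but some \agghtmodel with $H \less I$ distinguishes them. The two preceding auxiliary lemmas are engineered for precisely these two situations, so the remaining work is to invoke the appropriate one and to verify that the context it supplies uses neither aggregates nor double negation.

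In \textbf{Case~1}, assume without loss of generality that $I$ is an \aggmodel of $\Gamma_1$ that does not satisfy $\Gamma_2$. Taking $\Delta = \Delta_I$ and applying Lemma~\ref{lem:strong.equivalence.only.if.cl} shows that $I$ is an \aggstable model of $\Gamma_1 \cup \tau^x\Delta$ but not of $\Gamma_2 \cup \tau^x\Delta$, so these two programs do not have the same \aggstable models. Because $\Delta_I$ is by construction a set of facts, it contains neither aggregates nor negation, in particular no double negation, as the statement demands.

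In \textbf{Case~2}, $\Gamma_1$ and $\Gamma_2$ have the same \aggmodels, so no $\tuple{I,I}$ can distinguish them; hence I may assume without loss of generality an \agghtmodel $\tuple{H,I}$ of $\Gamma_1$ with $H \less I$ that does not satisfy $\Gamma_2$. Taking $\Delta = \Delta_{\tuple{H,I}}$ and applying Lemma~\ref{lem:strong.equivalence.only.if.ht}---whose ``same models'' hypothesis is exactly the Case~2 assumption, used there only to transfer model-hood of the single interpretation $I$ to $\Gamma_2$---yields that $I$ is an \aggstable model of $\Gamma_2 \cup \tau^x\Delta$ but not of $\Gamma_1 \cup \tau^x\Delta$, again separating the two. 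Since $\Delta_{\tuple{H,I}}$ consists of facts together with rules of the form $p(\boldt) \ruleo q(\boldu)$, which are positive and aggregate-free, it likewise contains neither aggregates nor double negation.

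Because the genuine content sits inside the two auxiliary lemmas, the only real difficulty at this level is a clean case analysis. The point that warrants care is that Case~2's assumption of identical \aggmodels simultaneously (i)~forces $H \neq I$, i.e.\ the strict relation $H \less I$ that $\Delta_{\tuple{H,I}}$ is designed to exploit, and (ii)~supplies the equal-models hypothesis of Lemma~\ref{lem:strong.equivalence.only.if.ht}. I would also record explicitly that in both cases the chosen $\Delta$ is the \emph{same} program for $x = \mathit{cli}$ and $x = \dlv$---since $\taug$ and $\taud$ coincide on facts and on positive rule bodies---which is what allows a single $\Delta$ to witness the conclusion for both semantics at once.
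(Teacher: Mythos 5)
Your proof is correct and follows essentially the same route as the paper's: the same two-case split on whether $\Gamma_1$ and $\Gamma_2$ already differ on \aggmodels, invoking Lemma~\ref{lem:strong.equivalence.only.if.cl} with $\Delta_I$ in the first case and Lemma~\ref{lem:strong.equivalence.only.if.ht} with $\Delta_{\tuple{H,I}}$ in the second. You additionally make explicit two points the paper leaves implicit---that the same-\aggmodels assumption forces $H \less I$ in Case~2, and that the constructed contexts contain neither aggregates nor double negation and satisfy $\taug(\Delta)=\taud(\Delta)$---which is a welcome tightening rather than a deviation.
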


\begin{proof}
    We proceed by cases.
    \emph{Case 1.} $\Gamma_1$ and $\Gamma_2$ do not have the same \aggmodels.
    Assume without loss of generality that~$I$ is an \aggmodel of~$\Gamma_1$ but not of~$\Gamma_2$.
    By Lemma~\ref{lem:strong.equivalence.only.if.cl}, it follows that~$I$ is an \aggstable model of~${\Gamma_1 \cup \tau^x(\Delta_I)}$ but not of~${\Gamma_2 \cup \tau^x(\Delta_I)}$.
    \emph{Case 2.} $\Gamma_1$ and $\Gamma_2$ have the same \aggmodels.
    By Lemma~\ref{lem:strong.equivalence.only.if.ht}, it follows that~$I$ is an \aggstable model of~$\Gamma_2 \cup \tau^x(\Delta_{\tuple{H,I}})$ but not of~$\Gamma_1 \cup \tau^x(\Delta_{\tuple{H,I}})$.
    In both cases,
    $\Gamma_2 \cup \tau^x(\Delta_{\tuple{H,I}})$
    and
    $\Gamma_1 \cup \tau^x(\Delta_{\tuple{H,I}})$
    have different \aggstable models.
\end{proof}

\begin{lemma}\label{lem:same.htmodels.same.stable.models.program}
    If~$\tau^x(\Pi_1)$ and $\tau^x(\Pi_2)$ have the same \agghtmodels, then~$\tau^x(\Pi_1 \cup \Delta)$ and $\tau^x(\Pi_2 \cup \Delta)$ they have the same \aggstable models, with~$x \in \{\mathit{cli},\dlv \}$, for any program~$\Delta$.
\end{lemma}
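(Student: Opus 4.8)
The plan is to reduce the claim to Lemma~\ref{lem:same.htmodels.same.stable.models} by first establishing that, under the hypothesis, the two theories $\tau^x(\Pi_1 \cup \Delta)$ and $\tau^x(\Pi_2 \cup \Delta)$ have the same \agghtmodels; passing from ``same \agghtmodels'' to ``same \aggstable models'' is then exactly that lemma. The whole argument rests on the fact that $\tau^x$ is defined rule by rule and therefore distributes over set union.

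First I would record the identity $\tau^x(\Pi_i \cup \Delta) = \tau^x(\Pi_i) \cup \tau^x(\Delta)$ for $i \in \{1,2\}$, which is immediate from the definition of $\tau^x\Pi$ as the set $\{\tau^x R \mid R \in \Pi\}$. Next I would note that, for any two theories, an \agghtinterp is an \agghtmodel of their union iff it is an \agghtmodel of each; hence the \agghtmodels of a union are precisely the intersection of the \agghtmodels of its parts. Combining these observations, the \agghtmodels of $\tau^x(\Pi_i \cup \Delta)$ are exactly the \agghtinterps that are simultaneously \agghtmodels of $\tau^x(\Pi_i)$ and of $\tau^x(\Delta)$. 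Because $\tau^x(\Pi_1)$ and $\tau^x(\Pi_2)$ share their \agghtmodels by hypothesis, the two intersections coincide, so $\tau^x(\Pi_1 \cup \Delta)$ and $\tau^x(\Pi_2 \cup \Delta)$ have the same \agghtmodels. Lemma~\ref{lem:same.htmodels.same.stable.models} then delivers the stated coincidence of \aggstable models.

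The one point that requires care, and which I expect to be the main subtlety, is the handling of the signature. The hypothesis concerns \agghtmodels over the signature $\sigma(\P,\S)$ fixed by $\Pi_1 \cup \Pi_2$, whereas $\Delta$ may introduce fresh predicate and set symbols, so the unions above live over a larger signature $\sigma'$. I would argue that the coincidence of \agghtmodels lifts from $\sigma(\P,\S)$ to $\sigma'$. The fresh symbols do not occur in $\tau^x(\Pi_i)$, so satisfaction of $\tau^x(\Pi_i)$ by an \agghtinterp over $\sigma'$ depends only on its restriction to $\sigma(\P,\S)$; moreover the defining condition of an \agghtinterp is local to each set symbol---the value of $\sets^x_{|E/\boldX|}$ is determined by the truth of $\tau^x(l_1) \wedge \dots \wedge \tau^x(l_m)$, and for a set symbol of $\sigma(\P,\S)$ these literals mention only predicates of $\P$---so restricting an \agghtinterp over $\sigma'$ yields an \agghtinterp over $\sigma(\P,\S)$. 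Consequently the \agghtmodels of $\tau^x(\Pi_i)$ over $\sigma'$ are exactly the \agghtinterps over $\sigma'$ whose restriction to $\sigma(\P,\S)$ is an \agghtmodel of $\tau^x(\Pi_i)$; since those restricted \agghtmodels coincide for $i = 1, 2$ by hypothesis, the coincidence transfers to $\sigma'$ and the intersection argument above goes through unchanged.
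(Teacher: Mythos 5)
Your proof is correct and follows essentially the same route as the paper's: record that $\tau^x(\Pi_i \cup \Delta) = \tau^x(\Pi_i) \cup \tau^x(\Delta)$, conclude that the two unions have the same \agghtmodels, and apply Lemma~\ref{lem:same.htmodels.same.stable.models} to pass to \aggstable models. Your closing discussion of the enlarged signature induced by fresh symbols in~$\Delta$ is sound and in fact more explicit than the paper, whose proof applies the distribution-over-union step without comment on that point.
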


\begin{proof}
    Assume that~$\tau^x(\Pi_1)$ and $\tau^x(\Pi_2)$ have the same \agghtmodels.
    Then, $\tau^x(\Pi_1 \cup \Delta) = \tau^x(\Pi_1) \cup \tau^x(\Delta)$ has the same \agghtmodel as $\tau^x(\Pi_2 \cup \Delta) = \tau^x(\Pi_2) \cup \tau^x(\Delta)$.
    By Lemma~\ref{lem:same.htmodels.same.stable.models}, this implies that both have same \aggstable models.
\end{proof}

\begin{lemma}
    \label{lem:strong.equivalence.only.if.program}
    If~$\tau^x(\Pi_1)$ and $\tau^x(\Pi_2)$ do not have the same \agghtmodels, then there is some program~$\Delta$ without aggregates nor double negation such that~$\tau^x(\Pi_1 \cup \Delta)$ and $\tau^x(\Pi_2 \cup \Delta)$ do not have the same \aggstable models, with~$x \in \{\mathit{cli},\dlv \}$.
\end{lemma}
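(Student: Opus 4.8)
The plan is to derive this program-level statement directly from its sentence-level counterpart, Lemma~\ref{lem:strong.equivalence.only.if}. First I would instantiate that lemma with the two sets of sentences $\Gamma_1 := \tau^x(\Pi_1)$ and $\Gamma_2 := \tau^x(\Pi_2)$. The hypothesis of Lemma~\ref{lem:strong.equivalence.only.if}---that $\Gamma_1$ and $\Gamma_2$ do not have the same \agghtmodels---is exactly the hypothesis we are given here. Hence the lemma supplies a program $\Delta$ containing neither aggregates nor double negation such that $\tau^x(\Pi_1) \cup \tau^x(\Delta)$ and $\tau^x(\Pi_2) \cup \tau^x(\Delta)$ do not have the same \aggstable\ models.

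The only remaining step is to recognize these two theories as $\tau^x(\Pi_1 \cup \Delta)$ and $\tau^x(\Pi_2 \cup \Delta)$. This follows from the fact that $\tau^x$ acts rule by rule: by the definition of the translation of a program, $\tau^x\Pi$ is the theory collecting $\tau^x R$ over all rules $R \in \Pi$, so $\tau^x(\Pi_i \cup \Delta) = \{\tau^x R \mid R \in \Pi_i \cup \Delta\} = \tau^x(\Pi_i) \cup \tau^x(\Delta)$ for $i \in \{1,2\}$. Substituting this identity into the conclusion obtained above yields precisely the claim. (This is also the step invoked, in the converse direction, by Lemma~\ref{lem:same.htmodels.same.stable.models.program}.)

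The one point that requires care is the bookkeeping of the signature over which \agghtmodels\ and \aggstable\ models are taken. Since $\Delta$ is obtained inside Lemma~\ref{lem:strong.equivalence.only.if} as $\tau^x(\Delta_I)$ or $\tau^x(\Delta_{\tuple{H,I}})$, it contains only predicate symbols already in~$\P$ and, having no aggregates, introduces no new set symbols; thus the signature $\sigma(\P,\S)$ fixed for $\Pi_1 \cup \Pi_2$ still suffices for $\Pi_i \cup \Delta$, so no widening of the signature is needed and the \agginterp\ conditions are unaffected. I do not expect any genuine obstacle here: the entire mathematical content is carried by Lemma~\ref{lem:strong.equivalence.only.if}, and the present statement is a purely syntactic repackaging that replaces the abstract sentence sets $\Gamma_i$ by the translations of the programs $\Pi_i$ and pushes $\tau^x$ through the union with the context program.
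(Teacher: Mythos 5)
Your proposal is correct and follows exactly the paper's own proof: the authors likewise apply Lemma~\ref{lem:strong.equivalence.only.if} with $\Gamma_1 = \tau^x(\Pi_1)$ and $\Gamma_2 = \tau^x(\Pi_2)$ and note the identity $\tau^x(\Pi_i \cup \Delta) = \tau^x(\Pi_i) \cup \tau^x(\Delta)$, which holds because $\tau^x$ is defined rule by rule. Your additional signature-bookkeeping remark (that $\Delta$ introduces no new predicate or set symbols, so $\sigma(\P,\S)$ still suffices) is a detail the paper's one-line proof leaves implicit, and it is correctly resolved.
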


\begin{proof}
    In this case, by Lemma~\ref{lem:strong.equivalence.only.if} with~$\Gamma_1 = \tau^x(\Pi_1)$ and~$\Gamma_2 = \tau^x(\Pi_2)$ by noting that~$\tau^x(\Pi_1 \cup \Delta) = \tau^x(\Pi_1) \cup \tau^x(\Delta)$ and~$\tau^x(\Pi_2 \cup \Delta) = \tau^x(\Pi_2) \cup \tau^x(\Delta)$.
\end{proof}

\begin{proof}[Proof of Theorem~\ref{thm:strong.equivalece.clingo}]
    Assume that~$\taug(\Pi_1)$ and $\taug(\Pi_2)$ have the same \agghtmodels and let~$\Delta$ be a program.
    By Lemma~\ref{lem:same.htmodels.same.stable.models.program}, it follows that~$\taug(\Pi_1 \cup \Delta)$ and~$\taug(\Pi_2 \cup \Delta)$ have the same \aggstable models.
    This implies that~$\Pi_1 \cup \Delta$ and~$\Pi_2 \cup \Delta$ have the same fo\nobreakdash-clingo answer sets.
    By Theorem~\ref{thm:strong.equivalece.clingo}, this implies that~$\Pi_1 \cup \Delta$ and~$\Pi_2 \cup \Delta$ have the same clingo answer sets and, thus, are strongly equivalent under the \clingo\ semantics.
    \\[5pt]
    Conversely, suppose~$\taug(\Pi_1)$ and $\taug(\Pi_2)$ do not have the same \agghtmodels.
    By Lemma~\ref{lem:strong.equivalence.only.if.program}, there is a program~$\Delta$ such that~$\taug(\Pi_1 \cup \Delta)$ and~$\taug(\Pi_2 \cup \Delta)$ do not have the same \aggstable models, and, thus they do not have the same fo\nobreakdash-clingo answer sets.
    By Theorem~\ref{thm:abstrac.gringo.correspondence}, this implies~$\Pi_1 \cup \Delta$ and~$\Pi_2 \cup \Delta$ have different clingo answer sets.
    Hence, $\Pi_1$ and~$\Pi_2$ are not strongly equivalent under the \clingo\ semantics.
\end{proof}

\begin{proof}[Proof of Theorem~\ref{thm:strong.equivalece.dlv}]
    The proof is analogous to the one of Theorem~\ref{thm:strong.equivalece.clingo}.
    Assume that~$\taud(\Pi_1)$ and $\taud(\Pi_2)$ have the same \agghtmodels and let~$\Delta$ be a program.
    By Lemma~\ref{lem:same.htmodels.same.stable.models.program}, it follows that~$\taud(\Pi_1 \cup \Delta)$ and~$\taud(\Pi_2 \cup \Delta)$ have the same \aggstable models.
    This implies that~$\Pi_1 \cup \Delta$ and~$\Pi_2 \cup \Delta$ have the same fo\nobreakdash-dlv answer sets.
    By Theorem~\ref{thm:strong.equivalece.dlv}, this implies that~$\Pi_1 \cup \Delta$ and~$\Pi_2 \cup \Delta$ have the same dlv answer sets and, thus, are strongly equivalent under the \dlv\ semantics.
    \\[5pt]
    Conversely, suppose~$\taud(\Pi_1)$ and $\taud(\Pi_2)$ do not have the same \agghtmodels.
    By Lemma~\ref{lem:strong.equivalence.only.if.program}, there is a program~$\Delta$ such that~$\taud(\Pi_1 \cup \Delta)$ and~$\taud(\Pi_2 \cup \Delta)$ do not have the same \aggstable models, and, thus they do not have the same fo\nobreakdash-dlv answer sets.
    By Theorem~\ref{thm:dlv.correspondence}, this implies~${\Pi_1 \cup \Delta}$ and~${\Pi_2 \cup \Delta}$ have different dlv answer sets.
    Hence, $\Pi_1$ and~$\Pi_2$ are not strongly equivalent under the \dlv\ semantics.
\end{proof}

\begin{proof}[Proof of Theorem~\ref{thm:strong.equivalece.croos}]
    The proof is similar to the those of Theorems~\ref{thm:strong.equivalece.clingo} and~\ref{thm:strong.equivalece.dlv}.
    Assume that~$\taug(\Pi_1)$ and $\taud(\Pi_2)$ have the same \agghtmodels and let~$\Delta$ be a program such that~$\taug(\Delta)$ and $\taud(\Delta)$ have the same \agghtmodels.
    Then, ${\taug(\Pi_1 \cup \Delta)} = {\taug(\Pi_1) \cup \taug(\Delta)}$ and~${\taud(\Pi_2) \cup \Delta} = {\taud(\Pi_2) \cup \taud(\Delta)}$ have the same \agghtmodels.
    By Lemma~\ref{lem:same.htmodels.same.stable.models}, it follows that~$\taud(\Pi_1 \cup \Delta)$ and~$\taud(\Pi_2 \cup \Delta)$ have the same \aggstable models.
    This implies that the fo\nobreakdash-clingo answer sets of~$\Pi_1 \cup \Delta$ and the same fo\nobreakdash-dlv answer sets of~$\Pi_2 \cup \Delta$ coincide.
    By Theorems~\ref{thm:strong.equivalece.clingo} and~\ref{thm:strong.equivalece.dlv}, this implies that clingo answer sets~$\Pi_1 \cup \Delta$ coincide with the dlv answer sets of~$\Pi_2 \cup \Delta$.
    \\[5pt]
    Conversely, suppose~$\taug(\Pi_1)$ and $\taud(\Pi_2)$ do not have the same \agghtmodels.
    By Lemma~\ref{lem:strong.equivalence.only.if}, there is a program~$\Delta$ without aggregates nor double negation such that ${\taug(\Pi_1 \cup \Delta)} = {\taug(\Pi_1) \cup \tau^x(\Delta)}$ and ${\taud(\Pi_2 \cup \Delta)} = {\taud(\Pi_2) \cup \tau^x(\Delta)}$ do not have the same \aggstable models.
    Note that, since~$\Delta$ does not contain aggregates nor double negation, it follows that~$\taug(\Delta) = \taud(\Delta)$.
    Hence, the fo\nobreakdash-clingo answer sets of ${\Pi_1 \cup \Delta}$ do not coincide with the fo\nobreakdash-dlv answer sets of~${\Pi_2 \cup \Delta}$.
    By Theorems~\ref{thm:strong.equivalece.clingo} and~\ref{thm:strong.equivalece.dlv}, this implies that the clingo answer sets of~${\Pi_1 \cup \Delta}$ are different from the dlv answer sets of~${\Pi_2 \cup \Delta}$.
\end{proof}
\subsection{Proof of Section Strong Equivalence using Classical Logic}

\begin{lemma}\label{lem:prima.transformation.term}
    $\hat{t}^{I^H} = t^I$.
\end{lemma}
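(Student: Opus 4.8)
The plan is to prove the identity $\hat{t}^{I^H} = t^I$ by structural induction on the term $t$, exploiting the fact that the hat operation only rewrites the intensional function symbols (the set symbols $\sets^x_{|E/\boldX|}$) and that $I^H$ is designed precisely so that every hatted intensional symbol, together with every non-intensional symbol, is evaluated according to $I$.

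For the base case, suppose $t$ is an object constant. Object constants are either ground program terms, domain-element names $d^*$, or nullary set symbols $\sets^x_{|E/\boldX|}$ (arising when $\boldX$ is empty). In the first two situations $t$ lies outside $\F$, so the hat operation leaves it untouched, i.e.\ $\hat{t}=t$; since $I^H$ interprets every symbol not in $\P\cup\F$ exactly as $I$ does, and both interpretations share the same universe and read names as themselves, we get $\hat{t}^{I^H}=t^{I^H}=t^I$. If instead $t$ is a nullary set symbol, then $\hat{t}=\hat{\sets}^x_{|E/\boldX|}$, and the defining clause $\hat{f}^{I^H}=f^I$ of $I^H$ gives $\hat{t}^{I^H}=(\hat{\sets}^x_{|E/\boldX|})^{I^H}=(\sets^x_{|E/\boldX|})^{I}=t^{I}$.

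For the inductive step, write $t=f(t_1,\dots,t_n)$ with $n\geq 1$ and assume the induction hypothesis $\hat{t_i}^{I^H}=t_i^{I}$ for each $i$. I would split on whether $f$ is intensional. If $f$ is non-intensional---in particular $\fcount$, $\fsum$, or $\ftuple$---then the hat operation does not rewrite $f$, so $\hat{t}=f(\hat{t_1},\dots,\hat{t_n})$, and since $f\notin\F$ we have $f^{I^H}=f^I$; combining this with the induction hypothesis yields $\hat{t}^{I^H}=f^{I}(t_1^{I},\dots,t_n^{I})=t^{I}$. If instead $f=\sets^x_{|E/\boldX|}$ is intensional, then $\hat{t}=\hat{\sets}^x_{|E/\boldX|}(\hat{t_1},\dots,\hat{t_n})$, and the clause $\hat{f}^{I^H}=f^{I}$ again combines with the induction hypothesis to give $\hat{t}^{I^H}=(\sets^x_{|E/\boldX|})^{I}(t_1^{I},\dots,t_n^{I})=t^{I}$.

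There is no genuine obstacle here: the lemma is essentially a bookkeeping statement that the hat rewriting and the definition of $I^H$ cancel out. The only point requiring care is keeping straight which function symbols are intensional---only the set symbols $\sets^x_{|E/\boldX|}$ belong to $\F$ and are rewritten by the hat operation, whereas $\fcount$, $\fsum$, and $\ftuple$ are non-intensional and left alone---so that in every case both branches push the evaluation onto $I$, as required.
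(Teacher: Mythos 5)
Your proof is correct and follows essentially the same route as the paper's: a structural induction on $t$ whose base case splits nullary function constants into extensional ones (where $\hat{t}=t$ and $I^H$ agrees with $I$) and intensional set symbols (where the defining clause $\hat{f}^{I^H}=f^I$ applies), with the inductive step handled by the same case split. The paper merely states the compound-term cases as ``similar by induction,'' so your more explicit write-out of the inductive step is just a more detailed rendering of the identical argument.
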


\begin{proof}
    If~$t$ is of the form~$f()$ where~$f$ is an extensional function, then~${\hat{t} = f()}$ and the result follows by definition.
    If~$t$ is of the form~$f()$ where~$f$ is an intensional function, then~${\hat{t} =\hat{f}()}$ and~${\hat{f}^{I^H} = f^I}$ follows by definition.
    The rest of the proof follows by induction on the structure of~$t$ in a similar way.
\end{proof}

\begin{lemma}\label{lem:prima.transformation.atom}
    $I^H\models \hat{p}(\hat{\boldt})\hbox{ iff }I\models p(\boldt)$.
\end{lemma}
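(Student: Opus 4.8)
The plan is to unfold the definition of satisfaction and reduce everything to Lemma~\ref{lem:prima.transformation.term} together with the defining clause for the interpretation of $\hat{p}$ under $I^H$. First I would observe that, by the definition of the satisfaction relation for classical interpretations, $I^H \models \hat{p}(\hat{\boldt})$ holds iff the tuple of values $(\hat{\boldt})^{I^H}$ belongs to the relation $\hat{p}^{I^H}$, and likewise $I \models p(\boldt)$ holds iff $\boldt^I \in p^I$. Thus it suffices to show that these two membership conditions are equivalent.

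Second, I would apply Lemma~\ref{lem:prima.transformation.term} componentwise to the list $\boldt = t_1,\dots,t_n$: since $\hat{t_i}^{I^H} = t_i^I$ for each $i$, we obtain $(\hat{\boldt})^{I^H} = \boldt^I$. This handles the arguments, and it is where all the work concerning intensional function symbols is absorbed, as Lemma~\ref{lem:prima.transformation.term} already accounts for the fact that an intensional function symbol $f$ becomes $\hat{f}$ under the hat and that $\hat{f}^{I^H} = f^I$.

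Third, I would invoke the defining clause of $I^H$, namely that $I^H \models \hat{p}(\mathbf{d}^*)$ iff $I \models p(\mathbf{d}^*)$ for all domain tuples $\mathbf{d}$; read relationally, this says precisely that $\hat{p}^{I^H} = p^I$. Combining the last two steps, $(\hat{\boldt})^{I^H} \in \hat{p}^{I^H}$ iff $\boldt^I \in p^I$, which is the desired equivalence.

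There is no genuine obstacle here; the lemma is a direct bookkeeping consequence of the term-level result and the definition of $I^H$. The only point requiring care is to keep the hat operator on the predicate symbol, governed by the definition of $I^H$, separate from the hat operator on the argument terms, governed by Lemma~\ref{lem:prima.transformation.term}, since these two occurrences of the hat are resolved by two different clauses of the construction.
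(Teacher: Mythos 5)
Your proposal is correct and takes essentially the same route as the paper's proof: both reduce the claim to Lemma~\ref{lem:prima.transformation.term}, which gives $\hat{\boldt}^{I^H} = \boldt^I$, and then apply the defining clause of $I^H$ for hatted predicate symbols, $I^H \models \hat{p}(\mathbf{d}^*)$ iff $I \models p(\mathbf{d}^*)$. Your relational phrasing $\hat{p}^{I^H} = p^I$ is merely a restatement of the paper's chain of equivalences through names of domain elements, so the two arguments coincide in substance.
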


\begin{proof}
    By Lemma~\ref{lem:prima.transformation.term}, we get~${\hat{\boldt}^{I^H} = \boldt^I}$.
    Then, $I^H\models \hat{p}(\hat{\boldt})$ iff ${I^H\models \hat{p}(\hat{\boldt})}$ iff~${I^H\models \hat{p}((\hat{\boldt}^{I^H})^*)}$ iff $I^H\models \hat{p}((\hat{\boldt}^{I})^*)$ iff $I\models p((\hat{\boldt}^{I})^*)$ iff $I\models p(\boldt)$.
\end{proof}

\begin{lemma}\label{lem:prima.transformation.cl}
    ${I^H\models \hat{F}}$ iff ${I \models F}$.
\end{lemma}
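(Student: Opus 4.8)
The plan is to prove the equivalence by structural induction on~$F$, using Lemmas~\ref{lem:prima.transformation.term} and~\ref{lem:prima.transformation.atom} to dispatch the atomic cases and then checking that the hat operation commutes with each connective and quantifier so that the inductive hypothesis carries through. Recall that~$\hat{F}$ differs from~$F$ only in that predicate symbols~$p$ are renamed to~$\hat{p}$ and the intensional function symbols~$\sets^x_{|E/\boldX|}$ to~$\hat{\sets}^x_{|E/\boldX|}$, and that~$I^H$ interprets the hatted symbols so as to mirror the ``there'' interpretation~$I$; the supporting lemmas already encode exactly this correspondence at the level of terms and atoms.

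For the base cases, when~$F$ is~$\bot$ both sides are false since~$\hat{\bot}=\bot$. When~$F$ is an atom~$p(\boldt)$, we have~$\hat{F}=\hat{p}(\hat{\boldt})$ and the claim is precisely Lemma~\ref{lem:prima.transformation.atom}. When~$F$ is an equality~$t_1=t_2$, we have~$\hat{F}=(\hat{t_1}=\hat{t_2})$, and by Lemma~\ref{lem:prima.transformation.term} we get~$\hat{t_i}^{I^H}=t_i^I$, so~$I^H\models \hat{t_1}=\hat{t_2}$ iff~$t_1^I=t_2^I$ iff~$I\models t_1=t_2$.

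For the inductive step, the propositional connectives are immediate because the hat operation distributes over them and classical satisfaction is defined compositionally; in particular, for the new connective I would use that~$I^H\models \sneg\hat{G}$ iff~$I^H\not\models\hat{G}$ iff (by the induction hypothesis)~$I\not\models G$ iff~$I\models\sneg G$, relying on the classical reading~$J\models\sneg F$ iff~$J\not\models F$ so that an occurrence of~$\sneg$ in~$\hat{F}$ causes no difficulty. For the quantifiers, the key observations are that~$I^H$ and~$I$ share the same universe for every sort (by the definition of~$I^H$) and that the hat operation leaves the names~$d^*$ of domain elements untouched, so~$\widehat{G(d^*)}$ is just~$\hat{G}(d^*)$; hence~$I^H\models\forall X\,\hat{G}(X)$ iff~$I^H\models\hat{G}(d^*)$ for all~$d\in|I|^s$ iff (induction applied to each~$G(d^*)$)~$I\models G(d^*)$ for all such~$d$ iff~$I\models\forall X\,G(X)$, and symmetrically for~$\exists$.

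The argument is entirely routine; the only points requiring care are bookkeeping ones, namely confirming that substituting a name~$d^*$ commutes with hatting in the quantifier cases and that the two atomic base cases are covered exactly by the supporting lemmas. I do not expect a genuine obstacle here, since all the work involving the intensional symbols has already been absorbed into Lemmas~\ref{lem:prima.transformation.term} and~\ref{lem:prima.transformation.atom}.
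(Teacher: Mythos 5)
Your proposal is correct in substance and follows essentially the same route as the paper: the paper also reduces the claim to the atomic case via Lemmas~\ref{lem:prima.transformation.term} and~\ref{lem:prima.transformation.atom} and dismisses the inductive extension as straightforward, so your connective and quantifier cases (including the observation that $\sneg$ is classical under $\models$ and that names $d^*$ are untouched by hatting, with $I^H$ and $I$ sharing the same universe) are exactly that routine part, done correctly. The one slip is in your atomic case: it is not true that $\hat{F}=\hat{p}(\hat{\boldt})$ for \emph{every} atom. The signature $\hat\sigma$ introduces $\hat{p}$ only for predicate symbols other than the comparison symbols~\eqref{rel:1}, and Lemma~\ref{lem:prima.transformation.atom} is sound only for intensional $p$, since its proof rests on the defining condition $I^H\models\hat{p}(\boldd^*)$ iff $I\models p(\boldd^*)$, which $I^H$ imposes only for $p\in\P$. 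For an extensional atom --- a comparison such as $t_1\prec t_2$ or a membership atom --- hatting leaves the predicate symbol untouched, so $\hat{A}$ is $p(\hat{\boldt})$, and the correct argument combines Lemma~\ref{lem:prima.transformation.term} with the fact that $I^H$ interprets extensional symbols in the same way as $I$; this is precisely the extensional-versus-intensional case split that the paper's proof makes explicit. The case is not degenerate, because the terms of a comparison atom may contain the intensional set functions (e.g., $\fsum(\setsg_{ctr}(C_1,C_3)) > 50$ in \eqref{eq:rule.control.translated}), so Lemma~\ref{lem:prima.transformation.term} is doing real work there. The repair is immediate for you --- it is the same argument you already gave for equality --- but as written your atomic base case does not cover comparison atoms.
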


\begin{proof}
    We will consider the case of a ground
    atom~$A$ of the form~$p(\boldt)$; extension to arbitrary sentences by induction is
    straightforward.
    If $p$ is extensional then $\hat{A}$ is~$p(\hat{\boldt})$;
    $I^H\models p(\hat{\boldt})$ iff $I\models p(\boldt)$ follows by Lemma~\ref{lem:prima.transformation.term} and the fact that~$I^H$ interprets extensional symbols in the same way as~$I$.
    If~$p$ is intensional then $\hat{A}$ is~$\hat{p}(\hat{\boldt})$,
    and the result follows by Lemma~\ref{lem:prima.transformation.atom}.
  \end{proof}

\begin{lemma}\label{lem:nonprima.transformation.term}
    Let~$t$ is a term of~$\sigma$.
    Then, $t^{I^H} = t^H$.
\end{lemma}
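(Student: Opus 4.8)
The plan is to prove $t^{I^H}=t^H$ by structural induction on $t$, following the pattern of Lemma~\ref{lem:prima.transformation.term} but applied to the original (unhatted) term, so that the interpretation $I^H$ supplies the here-world value $f^H$ of each function symbol rather than the there-world value $f^I$ returned by the hatted symbols. A preliminary observation I would record is that $H$, $I$, and $I^H$ all share the same domain for each sort: $H\lesseq I$ forces $H$ and $I$ to have the same universe, and $I^H$ is defined with the same domain as $I$. This is what makes the asserted equation between domain elements well posed.

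The genuine base case is a name $d^*$, where all three interpretations return the element $d$, so that $t^{I^H}=t^H$ holds trivially. Every other term is a function application $f(t_1,\dots,t_n)$ with $n\geq 0$, which I would treat uniformly (the case $n=0$ covering object constants, including the intensional constants $\sets^x_{|E/\boldX|}$ that arise when an aggregate has no global variables). By the induction hypothesis $t_i^{I^H}=t_i^H$ for each argument, so it remains only to compare the interpretation of $f$ itself in $I^H$ and in $H$ on these values.

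Here I would split on whether $f$ is intensional. If $f\in\F$ (one of the set functions), the definition of $I^H$ gives $f^{I^H}=f^H$ directly. If $f\notin\F$ (the tuple function $\ftuple$, the aggregate functions $\fcount$, $\fsum$, or a ground program term), then $I^H$ interprets $f$ exactly as $I$ does, so $f^{I^H}=f^I$, and the clause of $H\lesseq I$ stating $f^H=f^I$ for every function constant outside $\F$ closes the gap to $f^{I^H}=f^H$. In either subcase $f(t_1,\dots,t_n)^{I^H}=f^H(t_1^H,\dots,t_n^H)=f(t_1,\dots,t_n)^H$, completing the induction.

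I expect the only nonroutine point---and hence the crux---to be the extensional subcase, where $f^{I^H}=f^H$ does not follow from the construction of $I^H$ alone but requires chaining $f^{I^H}=f^I$ with the $\lesseq$ condition $f^H=f^I$; everything else is straightforward bookkeeping over the term structure.
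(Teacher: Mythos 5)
Your proof is correct and follows essentially the same route as the paper's: structural induction on $t$, with the intensional case settled by the clause $f^{I^H}=f^H$ in the definition of $I^H$ and the extensional case by chaining $f^{I^H}=f^I$ with the condition $f^H=f^I$ from $H\lesseq I$. The paper merely states the two zero-ary cases and leaves the inductive step implicit, so your write-up is the same argument carried out in full detail (including the harmless extra base case for names $d^*$).
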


\begin{proof}
    If~$t$ is of the form~$f()$ where~$f$ is an extensional function, then~${t = f()}$ and we have~${f^{I^H} = f^I = f^H}$.
    If~$t$ is of the form~$f()$ where~$f$ is an intensional function, then~${t = f()}$ and~${f^{I^H} = f^H}$ follows by definition.
    The rest of the proof follows by induction on the structure of~$t$ in a similar way.
\end{proof}

\begin{lemma}\label{lem:nonprima.transformation}
    Let~$F$ be a formula of~$\sigma$.
    Then, ${I^H\models \NN{F}}$ iff ${H \models F}$.
\end{lemma}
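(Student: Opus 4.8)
The plan is to proceed by structural induction on $F$, in close parallel to the proof of Lemma~\ref{lem:prima.transformation.cl}. The essential observation is that $\NN{F}$ contains no occurrence of $\sneg$, so it is a standard formula; since $I^H$ is an ordinary (classical) interpretation of $\hat\sigma$, every connective in $\NN{F}$ is read classically. On the $H$ side, the satisfaction relation for a single interpretation already reads $\sneg$ classically (recall $H \models \sneg G$ iff $H \not\models G$), so the two sides will line up once the atomic case is settled. I would first invoke Lemma~\ref{lem:nonprima.transformation.term}, which gives $t^{I^H} = t^H$ for every term $t$ of $\sigma$; this is what lets the base case go through uniformly, whether or not $t$ contains intensional function symbols.

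For the base case, let $A = p(\boldt)$ be an atom. Using the term lemma, $I^H \models p(\boldt)$ iff $\boldt^H \in p^{I^H}$. If $p \in \P$, the definition of $I^H$ gives $p^{I^H} = p^H$ directly (since $I^H \models p(\boldd^*)$ iff $H \models p(\boldd^*)$), so $\boldt^H \in p^{I^H}$ iff $H \models p(\boldt)$. If $p \notin \P$ (e.g.\ a comparison symbol), then $I^H$ interprets $p$ exactly as $I$, and $H \lesseq I$ forces $p^H = p^I$; hence $p^{I^H} = p^I = p^H$ and again $I^H \models p(\boldt)$ iff $H \models p(\boldt)$. The equality case $t_1 = t_2$ is immediate from $t_i^{I^H} = t_i^H$. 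Since $\NN{A} = A$ for atomic $A$, this settles the base case.

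The inductive step is where I expect the only real subtlety, and it is confined to the connective $\sneg$. Here $\NN{\sneg G} = \neg\NN{G}$, so $I^H \models \NN{\sneg G}$ iff $I^H \not\models \NN{G}$ iff (induction hypothesis) $H \not\models G$ iff $H \models \sneg G$, using the classical reading of $\sneg$ by the single interpretation $H$. The remaining cases are routine: for $\otimes \in \{\wedge, \vee, \to\}$ one has $\NN{G_1 \otimes G_2} = \NN{G_1} \otimes \NN{G_2}$, and since both $I^H$ and $H$ evaluate these connectives classically, the equivalence follows termwise from the induction hypothesis; the abbreviation $\neg G = (G \to \bot)$ is subsumed by the $\to$ case. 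For the quantifiers, $|I^H|^s = |I|^s = |H|^s$ for every sort $s$, so $I^H \models \forall X\, \NN{G(X)}$ iff $I^H \models \NN{G(d^*)}$ for all $d$, iff (induction) $H \models G(d^*)$ for all $d$, iff $H \models \forall X\, G(X)$, and symmetrically for $\exists$. The main thing to get right is simply that the induction keeps $\NN{\cdot}$ and classical negation in register with $H$'s classical treatment of $\sneg$; no intensionality bookkeeping is needed beyond what the term lemma already supplies.
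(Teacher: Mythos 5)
Your proof is correct and follows essentially the same route as the paper's: induction on the structure of $F$, with Lemma~\ref{lem:nonprima.transformation.term} settling the atomic case (split on whether $p$ is intensional, using $p^{I^H}=p^H$ by definition of $I^H$, or extensional, using $p^{I^H}=p^I=p^H$ via $H \lesseq I$), and the $\sneg$ case resolved through $\NN{\sneg G}=\neg\NN{G}$ together with the classical reading of $\sneg$ by a single interpretation. You simply spell out the routine connective, equality, and quantifier cases that the paper leaves implicit.
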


\begin{proof}
    The proof is by induction on the number of connectives and quantifiers in~$F$.
    We consider below the more difficult cases when~$F$ is an atomic formula or the negation~$\sneg$.

    \medskip\noindent\emph{Case~1:}
    $F$ is an atomic formula~$p(\boldt)$.
    Let~$J = I^H$.
    Then, $\NN{F}$ is also~$p(\boldt)$.
    By Lemma~\ref{lem:nonprima.transformation.term}, we have~${\boldt^{J} = \boldt^H}$.
    Let~$\boldd$ be the common value of $\boldt^{I^H}$ and $\boldt^H$.
    \emph{Case~1.1:}~$p$ is intensional.
    The left-hand side is equivalent to
    ${I^H\models p(\boldd^*)}$ and consequently to ${H \models p(\boldd^*)}$.
    The right-hand side to is equivalent~$H \models p(\boldd^*)$ as well.
    \emph{Case~1.2:}~$p$ is extensional.
    Each of two sides is equivalent
    to~$I\models p(\boldd^*)$.

    \medskip\noindent\emph{Case~2:} $F$ is $\sneg G$.
    Then, $\NN{F}$ is~$\neg \NN{G}$; we need to check that~$I^H \models \neg\NN{G}$ iff~$H \models \neg G$.
    This is equivalent to check~$I^H \not\models \NN{G}$ iff~$H \not\models G$, which follows by induction hypothesis.
\end{proof}

\begin{proof}[Proof of Proposition~\ref{prop:prima.transformation.ht}]
    We prove it for a formula~$F$ and its extension to theories is straightforward.
    The proof is by induction on the number of propositional connectives and quantifiers in~$F$.
    We consider below the more difficult cases when~$F$ is an atomic formula, one of the negations, or an implication.

    \medskip\noindent\emph{Case~1:}
    $F$ is an atomic formula~$p(\boldt)$.
    Then~$\gamma F$ is~$F \wedge \hat{F}$; we need to check that
    \begin{align*}
        I^H\models F \wedge \hat{F} \hbox{ iff }\tuple{H,I}\modelsht F.
    \end{align*}
    On the one hand, $\tuple{H,I} \modelsht F$
    iff (by definition)~${I \models F}$ and~$H \models F$
    iff (Lemma~\ref{lem:prima.transformation.cl})~${I^H \models F}$ and~${I \models F}$.
    On the other hand, ${I^H\models F \wedge \hat{F}}$ iff ${I^H\models F}$ and ${I^H\models \hat{F}}$ iff ${I\models F}$ and ${I^H\models F}$.
    Hence, it is enough to show that
    \begin{align*}
        I^H\models F \hbox{ iff } H \models F.
    \end{align*}
    which follows by Lemma~\ref{lem:nonprima.transformation}.

    \medskip\noindent\emph{Case~2:} $F$ is $\neg G$.
    Then $\gamma F$ is
    $\neg \hat{G}$; we need to check that
    $$I^H\not\models \hat{G}\hbox{ iff }\tuple{H,I}\modelsht\neg G.$$
    By Lemma~\ref{lem:prima.transformation.cl}, the left-hand side is
    equivalent to $I\not\models G$.
    By Proposition~\ref{prop:persistence},
    the right-hand side is equivalent to $I \not\models G$ as well.

    \medskip\noindent\emph{Case~3:} $F$ is $\sneg G$.
    Then $\gamma F$ is
    $\neg \NN{G} \wedge \neg \hat{G}$; we need to check that
    $$I^H\models \neg \NN{G} \wedge \neg \hat{G}\hbox{ iff }\tuple{H,I}\modelsht\sneg G.$$
    The left-hand side is
    equivalent to the conjunction of~${I^H \not\models \NN{G}}$ and~${I^H \not\models \hat{G}}$.
    By Lemmas~\ref{lem:nonprima.transformation} and~\ref{lem:prima.transformation.cl}, this is equivalent to the conjunction of~${H \not\models G}$ and~${I \not\models G}$, which by definition, is equivalent to the right-hand side.

    \medskip\noindent\emph{Case~4:} $F$ is of the form~${F_1 \to F_2}$.
    Then $\gamma F$
    is the conjunction $(\gamma F_1 \to \gamma F_2) \land (\hat{F_1} \to \hat{F_2})$, so that the condition
    $I^H\models \gamma F$ holds iff
    \begin{align}
    I^H\not\models \gamma F_1\hbox{ or }I^H\models \gamma F_2
    \label{eq:d1:prop:prima.transformation.ht}
    \end{align}
    and
    \begin{align}
    I^H\models \hat{F_1}\to \hat{F_2}.
    \label{eq:d2:prop:prima.transformation.ht}
    \end{align}
    By the induction hypothesis,~(\ref{eq:d1:prop:prima.transformation.ht}) is equivalent to
    \begin{align}
    \tuple{H,I}\not\modelsht F_1 \hbox{ or }\tuple{H,I}\modelsht F_2.
    \label{eq:d3:prop:prima.transformation.ht}
    \end{align}
    By Lemma~\ref{lem:prima.transformation.cl}, we get that~(\ref{eq:d2:prop:prima.transformation.ht}) is equivalent to
    \begin{align}
    I\models G\to H.
    \label{d4}
    \end{align}
    By definition, the conjunction of~(\ref{eq:d3:prop:prima.transformation.ht}) and~(\ref{d4}) is equivalent to
    $\tuple{H,I}\modelsht F_1 \to F_2$.
\end{proof}

\begin{lemma}\label{lem:prima.interpretation}
    An interpretation of the signature~$\hat\sigma$ satisfies~$\HT$ iff it can be
    represented in the form $I^H$ for some \htinterp~$\tuple{H,I}$.
\end{lemma}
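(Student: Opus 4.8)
The plan is to prove both directions by reading an \htinterp off of $J$ and verifying that it reproduces $J$ under the $I^H$ construction, exploiting that the axioms $\HT$ encode exactly the subset condition on intensional predicates that separates the here-world from the there-world. The only genuine content is checking the order relation $\lesseq$; everything else unwinds directly from the definitions of $I^H$ and $\HT$.

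For the forward direction, I would assume $J$ is an interpretation of $\hat\sigma$ satisfying $\HT$ and build two interpretations $I$ and $H$ of $\sigma$ sharing the domain of $J$. On every non-intensional symbol of $\sigma$ (object constants, comparison and membership predicates, and the $\ftuple$, $\fcount$, $\fsum$ functions) both $I$ and $H$ copy $J$. On the intensional symbols I read the there-world off the hatted copies and the here-world off the plain ones: for each $p \in \P$ set $p^I = \hat{p}^J$ and $p^H = p^J$, and for each $f \in \F$ set $f^I = \hat{f}^J$ and $f^H = f^J$. I then check that $\tuple{H,I}$ is an \htinterp, i.e. $H \lesseq I$. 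The three defining conditions are that $H$ and $I$ share each universe (immediate), that $p^H = p^I$ for $p \notin \P$ and $f^H = f^I$ for $f \notin \F$ (immediate, since both copy $J$ there), and that $p^H \subseteq p^I$ for every $p \in \P$; this last condition is precisely $p^J \subseteq \hat{p}^J$, which holds because the formula $\forall \boldX (p(\boldX) \to \hat{p}(\boldX))$ belongs to $\HT$ and $J \models \HT$.

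It then remains to verify $I^H = J$, which I would do symbol by symbol against the definition of $I^H$. On symbols not in $\P \cup \F$, $I^H$ agrees with $I$, which agrees with $J$ by construction. For $f \in \F$ we have $f^{I^H} = f^H = f^J$ and $\hat{f}^{I^H} = f^I = \hat{f}^J$; for $p \in \P$, $I^H \models p(\boldd^*)$ iff $H \models p(\boldd^*)$ iff $\boldd \in p^H = p^J$, and $I^H \models \hat{p}(\boldd^*)$ iff $I \models p(\boldd^*)$ iff $\boldd \in p^I = \hat{p}^J$, so $I^H$ and $J$ coincide on these as well. For the backward direction, I would assume $J = I^H$ for some \htinterp $\tuple{H,I}$, fix $p \in \P$ and a tuple $\boldd$, and argue: if $J \models p(\boldd^*)$, then by the definition of $I^H$ this is $H \models p(\boldd^*)$, hence $\boldd \in p^H \subseteq p^I$ (using $H \lesseq I$), so $I \models p(\boldd^*)$, which by the definition of $I^H$ gives $J \models \hat{p}(\boldd^*)$; thus $J \models \forall \boldX (p(\boldX) \to \hat{p}(\boldX))$ for every $p \in \P$, i.e. $J \models \HT$.

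I expect the only delicate point to be the forward direction's verification that the constructed $I$ and $H$ form a genuine \htinterp. Once the equivalence between ``$p^H \subseteq p^I$ for all $p \in \P$'' and $J \models \HT$ is isolated, the remaining obligations---sharing of universes and of extensional symbols, and the complete freedom left to the intensional functions (on which $\lesseq$ imposes no constraint)---follow mechanically from the construction, and the check $I^H = J$ together with the backward direction are routine applications of the two clauses defining $I^H$.
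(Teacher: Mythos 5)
Your proposal is correct and follows essentially the same route as the paper's own proof: for the direction from $J \models \HT$ you build $H$ and $I$ by reading the here-world off the plain symbols and the there-world off the hatted ones (exactly the paper's construction, with $f^H = f^J$, $f^I = \hat{f}^J$, $p^H = p^J$, $p^I = \hat{p}^J$), identify $J \models \HT$ with the condition $p^H \subseteq p^I$ needed for $H \lesseq I$, and then verify $I^H = J$ symbol by symbol; the converse direction is the same pointwise chase through $p^H \subseteq p^I$ that the paper gives. The only difference is cosmetic---you phrase the intensional-predicate clauses as equations between sets rather than via satisfaction of atoms with names $\boldd^*$, which is equivalent since the universes coincide.
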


\begin{proof}
    For the if-part, take any sentence of the form of~$\forall\boldX (p(\boldX) \to \hat{p}(\boldX))$ from~$\HT$.
    We need to show that $I^H$ satisfies all sentences of the form
    ${p(\boldd^*)\to \hat{p}(\boldd^*)}$.
    Assume that ${I^H\models p(\boldd^*)}$.
    Then, ${\boldd^* \in p^H \subseteq p^I}$,
    and consequently ${I\models p(\boldd^*)}$, which is equivalent to
    $I^H\models \hat{p}(\boldd^*)$.

    For the only-if part, take any
    interpretation~$J$ of~$\hat\sigma$ that satisfies~$\HT$.  Let~$I$ be the
    interpretation of~$\sigma$ that has the same domains as~$J$,
    interprets extensional symbols in the same way as~$J$, interprets every intensional function~$f$ in accordance with the condition~$f^I=\hat{f}^J$. and interprets every intensional~$p$ in
    accordance with the condition
    \begin{gather}
        I\models p({\bf d}^*)\hbox{ iff }J\models \hat{p}({\bf d}^*).
        \label{iff}
    \end{gather}
    Similarly, let~$H$ be the interpretation of~$\sigma$ that has the same domains as~$J$, interprets extensional symbols in the same way as~$J$, interprets every intensional function~$f$ in accordance with the condition~${f^H=f^J}$, and interprets every intensional~$p$ in
    accordance with the condition
    \begin{gather}
        H\models p({\bf d}^*)\hbox{ iff }J\models p({\bf d}^*).
        \label{iff2}
    \end{gather}
    Then, $I$ and~$H$ agree on all extensional symbols.
    Furthermore, since~$J$ satisfies~$\AGG$, $J$ satisfies~$\hat{p}(\boldd^*)$ for every atom~$p(\boldd^*)$ satisfied by~$J$.
    By~\eqref{iff} and~\eqref{iff2}, it follows that all atoms satisfied by~$H$ are satisfied by~$I$.
    It follows that $\tuple{H,I}$ is an \htinterp.
    Let us show that ${I^H=J}$.
    Each of the interpretations~$I^H$ and~$J$ has the same domains as~$I$ and interprets all extensional symbols in the same way as~$I$.
    Furthermore, for intensional function symbols, we have
    \begin{align*}
        \hat{f}^{I^H} &= f^I = \hat{f}^J
        \\
        f^{I^H} &= f^H = f^J.
    \end{align*}
    For everyintensional~$p$ and any tuple~$\boldd$ of elements of appropriate
    domains, each of the conditions $I^H\models p(\boldd^*)$,
    $J\models p(\boldd^*)$ is equivalent to $H \models p(\boldd^*)$,
    and each of the conditions $I^H\models \hat{p}(\boldd^*)$,
    $J\models \hat{p}(\boldd^*)$ is equivalent to $I\models p(\boldd^*)$.
\end{proof}

\begin{lemma}\label{lem:agg_element.aux2}
    Let~$E$ be an aggregate element of the form~\eqref{eq:agel:2} with global variables~$\boldX$ and local variables~$\boldY$ and let~$\tuple{H,I}$ be a standard \htinterp.
    Let~$\boldx$ be a list of ground terms of sort~$\sortsuper$ of the same length as~$\boldX$,
    $\dtuple$ a domain element of sort tuple,
    $l_i' = (l_i)^\boldX_{\boldx}$
    and $t_i' = (t_i)^\boldX_{\boldx}$.
    Then, $I^H$ satisfies
    \begin{gather}
        \pmemF{\dtuple^*}{\hat{\sets}^x_{|E|}(\boldx)}\leftrightarrow
        \exists \boldY\, \hat{F}
    \label{eq:1:lem:agg_element.aux2}
    \end{gather}
    where~$F$ is~$(\dtuple^* = \ftuple(t_1',\dots,t_m') \wedge l_1'\wedge\cdots\wedge l_n')$
    iff the following two conditions are equivalent:
    \begin{enumerate}
        \item
        \label{item:1:lem:agg_element.aux2}
        $\dtuple$ belongs to~$\hat{\sets}^x_{|E|}(\boldx)^I$,
        \item
        \label{item:2:lem:agg_element.aux2}
        there is a list~$\boldc$ of domain elements of sort~$\sortsuper$ of the same length as~$\boldY$ such that
        $\dtuple \ = \ \tuple{(t_1'')^I,\dots,(t_m'')^I}$ and
        $I$ satisfies~$l_1'' \wedge \dots \wedge l_n''$
        with $t_i'' = (t_i')^\boldY_{\boldy}$ and~$l_i'' = (l_i')^\boldY_{\boldy}$ and~$\boldy = \boldc^*$.
    \end{enumerate}
\end{lemma}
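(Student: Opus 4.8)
The plan is to read off the two sides of the biconditional \eqref{eq:1:lem:agg_element.aux2} independently. Since $I^H$ is a classical interpretation of $\hat\sigma$, it satisfies a biconditional $A \leftrightarrow B$ exactly when $I^H \models A$ iff $I^H \models B$; here $A$ is the membership atom $\pmemF{\dtuple^*}{\hat{\sets}^x_{|E|}(\boldx)}$ and $B$ is $\exists \boldY\,\hat F$. Thus it suffices to establish the two auxiliary equivalences $(I^H \models A)$ iff item~\ref{item:1:lem:agg_element.aux2}, and $(I^H \models B)$ iff item~\ref{item:2:lem:agg_element.aux2}; the stated lemma then follows immediately, since $I^H$ satisfies $A \leftrightarrow B$ iff items~\ref{item:1:lem:agg_element.aux2} and~\ref{item:2:lem:agg_element.aux2} are equivalent.

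For the first auxiliary equivalence I would simply unfold the semantics of membership. Because $\in$, $\ftuple$, and the names $d^*$ are not intensional, $I^H$ interprets them exactly as $I$ does, so $\in^{I^H}$ is genuine set membership and $(\dtuple^*)^{I^H}=\dtuple$. Hence $I^H \models \pmemF{\dtuple^*}{\hat{\sets}^x_{|E|}(\boldx)}$ holds iff $\dtuple$ lies in the value $I^H$ assigns to $\hat{\sets}^x_{|E|}(\boldx)$; by the defining clause $\hat f^{I^H}=f^I$ of $I^H$ this set coincides with $(\sets^x_{|E|}(\boldx))^I$, which is the set named in item~\ref{item:1:lem:agg_element.aux2}. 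This step is pure bookkeeping once the interpretation of the non-intensional vocabulary is pinned down.

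The substance of the proof is the second auxiliary equivalence. By the semantics of $\exists$, $I^H \models \exists \boldY\,\hat F$ holds iff there is a list $\boldc$ of domain elements of sort $\sortsuper$ of the same length as $\boldY$ with $I^H \models \hat F^\boldY_{\boldc^*}$. The key observation is that the hat operation rewrites only predicate and set-function symbols and therefore commutes with substituting the names $\boldc^*$ for the variables $\boldY$; consequently $\hat F^\boldY_{\boldc^*}$ equals the hat of $F^\boldY_{\boldc^*}$, and Lemma~\ref{lem:prima.transformation.cl} converts $I^H \models \hat F^\boldY_{\boldc^*}$ into $I \models F^\boldY_{\boldc^*}$. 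It then remains to unfold this conjunction: writing $\boldy = \boldc^*$, $t_i''=(t_i')^\boldY_\boldy$ and $l_i''=(l_i')^\boldY_\boldy$, the equality conjunct $\dtuple^* = \ftuple(t_1'',\dots)$ is satisfied by $I$ iff $\dtuple = \tuple{(t_1'')^I,\dots}$ (by the clause interpreting a tuple term as the tuple of the values of its arguments), while the remaining conjuncts hold iff $I$ satisfies $l_1'' \wedge \dots \wedge l_n''$. Together these are exactly item~\ref{item:2:lem:agg_element.aux2}.

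The only delicate points, and the ones I would write out most carefully, are the commutation of the hat with name substitution (so that Lemma~\ref{lem:prima.transformation.cl} can be applied to the instantiated formula rather than to $F$ with a free tuple of variables) and the two-stage substitution bookkeeping, in which $\boldx$ is first substituted for the global variables $\boldX$ to form the primed expressions and then $\boldc^*$ for the local variables $\boldY$ to form the double-primed ones. No case analysis on $x$ is required: both $\sets^{\cli}$ and $\sets^{\mathit{dlv}}$ are intensional and are treated identically by the hat operation and by the definition of $I^H$, so the argument is uniform in $x \in \{\cli,\mathit{dlv}\}$.
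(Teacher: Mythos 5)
Your proposal is correct and follows essentially the same route as the paper's proof: both unfold the biconditional~\eqref{eq:1:lem:agg_element.aux2} into the two auxiliary equivalences (the membership atom holding in $I^H$ iff condition~\ref{item:1:lem:agg_element.aux2}, and the existential holding iff a witness list~$\boldc$ yields condition~\ref{item:2:lem:agg_element.aux2} via the standard interpretation of $\ftuple$). The only difference is one of care, not substance: you make explicit the appeal to Lemma~\ref{lem:prima.transformation.cl} and the commutation of the hat operation with substitution of names, steps the paper's terse proof leaves implicit.
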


\begin{proof}
    $I$ satisfies~\eqref{eq:1:lem:agg_element.aux2} iff condition~\ref{item:1:lem:agg_element.aux2} is equivalent to:
    \begin{itemize}
        \item[]
        \label{item:5:lem:agg_element.aux2}
        $I$ satisfies $\exists \boldY\, (\dtuple^* = \ftuple(t_1',\dots,t_m') \wedge \hat{l_1}'\wedge\cdots\wedge \hat{l_n}')$
    \end{itemize}
    Furthermore, the latter holds iff there is a list~$\boldc$ of domain elements of sort~$\sortsuper$ such that
    $$
    \domaintuple \ = \ \ftuple(t_1'',\dots,t_m'')^I \ = \ \tuple{(t_1'')^I,\dots,(t_m'')^I}
    $$
    and $I$ satisfies $l_1''\wedge\cdots\wedge l_n''$ with~$\boldy = \boldc^*$ iff condition~\ref{item:2:lem:agg_element.aux2} holds.
\end{proof}

\begin{lemma}
    \label{lem:prima.agg-interp.cl}
    Let~$\tuple{H,I}$ be a standard \htinterp.
    Then, $I$ is an \agginterp iff~$I^H$ satisfies~\eqref{eq:agg_element.clingo.there} for every~$E/\boldX$ in~$\S$.
\end{lemma}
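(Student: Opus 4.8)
The plan is to reduce the outer biconditional to a per-instance correspondence already packaged in Lemma~\ref{lem:agg_element.aux2}, and then recognize the resulting condition as the defining clause of an \agginterp. First I would fix a set symbol $E/\boldX$ in $\S$ with $E$ of the form~\eqref{eq:agel:2}, write $\boldY$ for its local variables, and fix $x \in \{\cli,\mathit{dlv}\}$; since~\eqref{eq:agg_element.clingo.there} is a schema in $x$, it contributes one sentence for each of the two values, and I must treat both (the $\cli$ instance alone constrains $\hat\setsg$ but leaves $\hat\setsd$ free, so it could not by itself force $I$ to be an \agginterp). Unfolding the semantics of the leading $\forall \boldX\,\vartuple$, I would observe that $I^H$ satisfies the sentence for this $E/\boldX$ and $x$ iff, for every list $\boldx$ of ground terms of the general sort and every domain element $\dtuple$ of the tuple sort, $I^H$ satisfies the instance $\pmemF{\dtuple^*}{\hat{\sets}^x_{|E/\boldX|}(\boldx)} \leftrightarrow \exists \boldY\,\hat{F}^x$.

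Next I would invoke Lemma~\ref{lem:agg_element.aux2} on each such instance, taking the literals of that lemma to be the FO-translations $\tau^x(l_i)$ so that its $F$ becomes exactly $F^x$ and the instance biconditional is precisely the one displayed above. The lemma then yields that $I^H$ satisfies the instance iff the membership $\dtuple \in (\sets^x_{|E/\boldX|}(\boldx))^I$ is equivalent to the existence of a list $\boldy$ of ground terms with $\dtuple = \tuple{(t_1)^{\boldX\boldY}_{\boldx\boldy},\dots,(t_k)^{\boldX\boldY}_{\boldx\boldy}}$ and $I \models \tau^x(l_1)^{\boldX\boldY}_{\boldx\boldy} \wedge \dots \wedge \tau^x(l_m)^{\boldX\boldY}_{\boldx\boldy}$. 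Here I would note that the left side really is the there-world value, using that $\hat{\sets}^x$ is interpreted in $I^H$ as $\sets^x$ is in $I$ and that $\in$ is extensional, and that the right side is the required tuple, using condition~\eqref{interp.domain.tuple} and the standard-interpretation clause interpreting $\ftuple$ as tupling. Because $\dtuple$ ranges over all tuple-sort elements, the family of these equivalences for a fixed $\boldx$ says exactly that $(\sets^x_{|E/\boldX|}(\boldx))^I$ equals the set of all tuples $\tuple{(t_1)^{\boldX\boldY}_{\boldx\boldy},\dots,(t_k)^{\boldX\boldY}_{\boldx\boldy}}$ for which $I \models \tau^x(l_1)^{\boldX\boldY}_{\boldx\boldy} \wedge \dots \wedge \tau^x(l_m)^{\boldX\boldY}_{\boldx\boldy}$.

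Finally I would collect these equivalences over all $\boldx$, all set symbols $E/\boldX \in \S$, and both $x \in \{\cli,\mathit{dlv}\}$: their conjunction is verbatim the clause in the definition of \agginterp that constrains each $\sets^x_{|E/\boldX|}(\boldx)^I$. Hence $I^H$ satisfies~\eqref{eq:agg_element.clingo.there} for every $E/\boldX \in \S$ iff $I$ is an \agginterp, as claimed; in particular the whole condition depends on $I$ alone (through the hatted symbols), consistently with \agginterp being a property of $I$ rather than of $\tuple{H,I}$. I expect the only real difficulty to be bookkeeping rather than conceptual: aligning the substitution $\boldX\boldY \mapsto \boldx\boldy$ from the \agginterp definition with the witness for $\exists \boldY$ delivered by Lemma~\ref{lem:agg_element.aux2}, and reconciling the minor index clash ($k$ versus $m$, $m$ versus $n$) between the target formula and the auxiliary lemma.
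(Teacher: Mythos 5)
Your proposal is correct and, in substance, it is the same two-step argument as the paper's: first reduce satisfaction of the hatted sentence by $I^H$ to a condition on $I$ alone, then recognize that condition as verbatim the defining clause of an \agginterp. The only organizational difference is the tool used for the first step: the paper's written proof applies Lemma~\ref{lem:prima.transformation.cl} to de-hat \eqref{eq:agg_element.clingo.there} globally (${I^H \models \hat{F}}$ iff ${I \models F}$) and then unfolds the quantifiers inline, whereas you unfold the quantifiers first and discharge each instance through Lemma~\ref{lem:agg_element.aux2} --- a lemma the paper states and proves but never explicitly invokes in this proof (or anywhere else), so your route is arguably the use that lemma was written for, and it makes the bookkeeping more modular. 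Your two points of extra care are both warranted and go slightly beyond the paper's own explicitness: (i) Lemma~\ref{lem:agg_element.aux2} as stated treats the $l_i$ of the aggregate element as though they were formulas, so it must indeed be instantiated with the translations $\tau^x(l_i)$ to make its $F$ coincide with $F^{x}$, exactly as you do; and (ii) the schema must be taken for both ${x \in \{\cli, \mathit{dlv}\}}$ to match the ``for all $x$'' in the definition of \agginterp --- the paper's proof works with a generic $x$ and leaves this quantification implicit. One typo you silently (and correctly) repair when citing Lemma~\ref{lem:agg_element.aux2}: its condition~1 writes $\hat{\sets}^x_{|E|}(\boldx)^I$, which should be read as the there-world value $\sets^x_{|E/\boldX|}(\boldx)^I$, justified by the clause ${\hat{f}^{I^H} = f^I}$ in the definition of $I^H$.
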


\begin{proof}
    By Lemma~\ref{lem:prima.transformation.cl}, it follows that~$I^H$ satisfies~\eqref{eq:agg_element.clingo.there} iff~$I$ satisfies
    ${\forall \boldX \vartuple \big(\pmemF{\vartuple}{\sets^x_{|E/\boldX|}(\boldX)}
                \leftrightarrow
                \exists \boldY F^{\mathit{x}}\big)}$.
    Then, $I$ satisfies
    \begin{align}
        \forall \vartuple \big(\pmemF{\vartuple}{\sets^x_{|E/\boldX|}(\boldx)}
                &\leftrightarrow
                \exists \boldY (F^{\mathit{x}})^\boldX_\boldx\big)
        \label{eq:agg_element.clingo.there.noprima}
    \end{align}
        iff~$\sets_{|E/\boldX|}^x(\boldx)^I$ is the set of all tuples~$\dtuple$ such that, there is~$\boldy$ such that~$I$ satisfies
    \begin{align}
        \dtuple^* = \ftuple(t_1',\dots,t_k') \wedge  \tau^x(l_1')\wedge\cdots\wedge \tau^x(l_m')
    \end{align}
    with~$t_i' = (t_i)^{\boldX\boldY}_{\boldx\boldy}$ and~$l_i' = (l_i)^{\boldX\boldY}_{\boldx\boldy}$
    iff~$\sets_{|E/\boldX|}^x(\boldx)^I$ is the set of all tuples of the form
    \begin{align*}
        \tuple{t_1',\dots,t_k'}
    \end{align*}
    such that $I$ satisfies
    \begin{align*}
        \tau^x(l_1') \wedge \dots \wedge \tau^x(l_m')
    \end{align*}
    and the result holds.
\end{proof}

\begin{lemma}
    \label{lem:prima.agg-interp.clingo}
    Let~$\tuple{H,I}$ be a standard \htinterp such that~$I$ is an \agginterp.
    Then, $\tuple{H,I}$ satisfies the condition of being an \agginterp for every term of the form~$\setsg_{|E/\boldX|}(\boldx)$ iff~$I^H$ satisfies~\eqref{eq:agg_element.clingo.here} for every~$E/\boldX$ in~$\S$.
\end{lemma}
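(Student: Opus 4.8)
The plan is to follow the structure of the proof of Lemma~\ref{lem:prima.agg-interp.cl}, which establishes the analogous statement for the there-world, but to replace each there-side tool by its here-side counterpart: where that proof invokes Lemma~\ref{lem:prima.transformation.cl} (relating $\hat F$ evaluated in $I^H$ to $F$ evaluated in $I$), I would instead invoke Proposition~\ref{prop:prima.transformation.ht} (relating $\gamma F$ evaluated in $I^H$ to $F$ evaluated in $\tuple{H,I}$). Fix a set symbol $E/\boldX$ in $\S$ with $E$ of the form~\eqref{eq:agel:2}, let $\boldY$ be its local variables, and fix a list $\boldx$ of ground general-sort terms matching $\boldX$ together with a domain element $\dtuple$ of sort tuple; it then suffices to analyse the instance of~\eqref{eq:agg_element.clingo.here} obtained by substituting $\boldx$ for $\boldX$ and $\dtuple^*$ for $\vartuple$, and to range over all such $\boldx$ and $\dtuple$.

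For the left-hand side I would argue that $\setsg_{|E/\boldX|}(\boldx)^{I^H} = \setsg_{|E/\boldX|}(\boldx)^H$ (by Lemma~\ref{lem:nonprima.transformation.term}, since $\setsg_{|E/\boldX|}$ is intensional), while $\in^{I^H} = \in^I$ coincides with the standard membership relation, so that $I^H \models \pmemF{\dtuple^*}{\setsg_{|E/\boldX|}(\boldx)}$ iff $\dtuple \in \setsg_{|E/\boldX|}(\boldx)^H$. For the right-hand side, I would use that $\gamma$ commutes with $\exists$ and with the name-substitution, so the instantiated $\exists \boldY\,\gamma(F^{\cli})$ equals $\gamma$ of the instantiated $\exists \boldY\,F^{\cli}$; Proposition~\ref{prop:prima.transformation.ht} then gives that $I^H$ satisfies it iff $\tuple{H,I} \modelsht \exists \boldY\,(F^{\cli})^{\boldX\vartuple}_{\boldx\,\dtuple^*}$. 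Unfolding $F^{\cli} = (\vartuple = \ftuple(t_1,\dots,t_k) \wedge \taug(l_1)\wedge\cdots\wedge\taug(l_m))$, this asserts that there is a list $\boldy$ of names with $\tuple{H,I} \modelsht \dtuple^* = \ftuple(t_1',\dots,t_k')$ and $\tuple{H,I} \modelsht \taug(l_1')\wedge\cdots\wedge\taug(l_m')$, where $t_i' = (t_i)^{\boldX\boldY}_{\boldx\boldy}$ and $l_i' = (l_i)^{\boldX\boldY}_{\boldx\boldy}$. This last conjunction is precisely the membership condition in the \agghtinterp clause for $\setsg$, so combining the two sides across all $\boldx$ and $\dtuple$ yields that $I^H$ satisfies~\eqref{eq:agg_element.clingo.here} iff, for every $\boldx$, the set $\setsg_{|E/\boldX|}(\boldx)^H$ equals the set of tuples $\tuple{t_1',\dots,t_k'}$ with $\tuple{H,I} \modelsht \taug(l_1')\wedge\cdots\wedge\taug(l_m')$ --- exactly the condition required of an \agghtinterp. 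I would likely package the per-instance equivalence as an auxiliary claim mirroring Lemma~\ref{lem:agg_element.aux2}, but stated with $\gamma$ and $\tuple{H,I}\modelsht$ in place of the hatted formula and there-satisfaction.

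The step I expect to demand the most care is the treatment of the equality conjunct $\vartuple = \ftuple(t_1,\dots,t_k)$ under $\modelsht$: here-satisfaction of an equality requires agreement in both $H$ and $I$, so I must argue that, because $\ftuple$ is a non-intensional tuple-building function interpreted identically in $H$ and $I$ by standardness and the $t_i'$ are ground general-sort terms interpreted as themselves, the HT-equality $\dtuple^* = \ftuple(t_1',\dots,t_k')$ collapses to the single condition $\dtuple = \tuple{t_1',\dots,t_k'}$. This is the one place where standardness is essential and where a careless reading could wrongly introduce an asymmetry between the two worlds. The remaining bookkeeping --- commuting $\gamma$ past $\exists$ and the substitution, noting that $\taug$ likewise commutes with substitution of general-sort terms, and matching the free variables $\boldX$ and $\vartuple$ when applying Proposition~\ref{prop:prima.transformation.ht} --- is routine and parallels the substitution handling already carried out in Lemma~\ref{lem:prima.agg-interp.cl} and Lemma~\ref{lem:agg_element.aux2}.
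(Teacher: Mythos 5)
Your proposal is correct and takes essentially the same route as the paper: the paper likewise instantiates~\eqref{eq:agg_element.clingo.here} at each $\boldd$ and $\dtuple$, converts the membership side to $H$\nobreakdash-satisfaction via Lemma~\ref{lem:nonprima.transformation} (your use of the term-level Lemma~\ref{lem:nonprima.transformation.term} plus extensionality of $\in$ amounts to the same thing), converts the right side to $\tuple{H,I} \modelsht \exists \boldY\, F^{\cli}$ via Proposition~\ref{prop:prima.transformation.ht}, and finishes by unfolding as in Lemma~\ref{lem:prima.agg-interp.cl}. Your explicit handling of the equality conjunct $\vartuple = \ftuple(t_1,\dots,t_k)$ collapsing to a single condition by standardness is precisely the detail the paper leaves implicit in its ``analogous'' step, so it is a welcome addition rather than a deviation.
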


\begin{proof}
    $I^H$ satisfies~\eqref{eq:agg_element.clingo.here} iff the following two conditions are equivalent for every tuple~$\boldd$ of domain element of sort general and every domain element~$\dtuple$ of sort tuple:
    \begin{itemize}
        \item $I^H$ satisfies~$\pmemF{\dtuple^*}{\setsg_{|E/\boldX|}(\boldx)}$, and
        \item $I^H$ satisfies~$\exists \boldY \gamma (F^{\mathit{cli}})$.
    \end{itemize}
    By Lemma~\ref{lem:nonprima.transformation} and Proposition~\ref{prop:prima.transformation.ht} respectively, these two conditiosn are equivalent to
    \begin{itemize}
        \item $H$ satisfies~$\pmemF{\dtuple^*}{\setsg_{|E/\boldX|}(\boldx)}$, and
        \item $\tuple{H,I}$ satisfies~$\exists \boldY F^{\mathit{cli}}$.
    \end{itemize}
    The rest of the proof is analogous to that of Lemma~\ref{lem:prima.agg-interp.cl}.%
\end{proof}

\begin{lemma}
    \label{lem:prima.agg-interp.dlv}
    Let~$\tuple{H,I}$ be a standard \htinterp such that~$I$ is an \agginterp.
    Then, $\tuple{H,I}$ satisfies the condition of being an \agginterp for every term of the form~$\setsd_{|E/\boldX|}(\boldx)$ iff~$I^H$ satisfies~\eqref{eq:agg_element.dlv.here} for every~$E/\boldX$ in~$\S$.
\end{lemma}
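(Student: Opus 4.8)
The plan is to mirror the proof of Lemma~\ref{lem:prima.agg-interp.clingo}, the \clingo{} here-world case, replacing its single appeal to Proposition~\ref{prop:prima.transformation.ht} by an appeal to Lemma~\ref{lem:nonprima.transformation}. The reason for this change is structural: under the \dlv{} semantics the here-world set $\setsd_{|E/\boldX|}(\boldx)^H$ is determined by what $H$ \emph{classically} satisfies through $\taud$, not by what $\tuple{H,I}$ ht-satisfies, and this is precisely what the operator $\NN{\cdot}$ appearing in~\eqref{eq:agg_element.dlv.here} encodes.

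First I would fix a set symbol $E/\boldX$ in $\S$ and read off~\eqref{eq:agg_element.dlv.here} pointwise: $I^H$ satisfies it iff, for every list $\boldx$ of object constants of sort $\sortsuper$ and every domain element $\dtuple$ of sort $\sorttuple$, the membership atom $\pmemF{\dtuple^*}{\setsd_{|E/\boldX|}(\boldx)}$ and the matching instance of $\exists\boldY\,\NN{F^{\mathit{dlv}}}$ are $I^H$-equivalent. I would then push both sides down to $H$ alone. The membership atom is $\sneg$-free and its function symbol $\setsd_{|E/\boldX|}$ is unhatted, so $\NN{\cdot}$ leaves it unchanged and Lemma~\ref{lem:nonprima.transformation} gives that $I^H \models \pmemF{\dtuple^*}{\setsd_{|E/\boldX|}(\boldx)}$ iff $\dtuple \in \setsd_{|E/\boldX|}(\boldx)^H$. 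For the right-hand side I would use that $\NN{\cdot}$ commutes with $\exists$ and $\wedge$, so that the relevant instance of $\exists\boldY\,\NN{F^{\mathit{dlv}}}$ is $\NN{\cdot}$ applied to the corresponding instance of $\exists\boldY\,F^{\mathit{dlv}}$ and is therefore a standard formula; Lemma~\ref{lem:nonprima.transformation} then yields that $I^H$ satisfies it iff $H$ satisfies the instance of $\exists\boldY\,F^{\mathit{dlv}}$.

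At this point the biconditional holds in $I^H$ iff, for each $\dtuple$, membership of $\dtuple$ in $\setsd_{|E/\boldX|}(\boldx)^H$ coincides with $H$-satisfaction of the corresponding instance of $\exists\boldY\,F^{\mathit{dlv}}$. I would then close the argument exactly as in Lemma~\ref{lem:prima.agg-interp.cl}: unwinding a witness $\boldy$ of the existential shows that this $H$-satisfaction holds precisely when $\dtuple = \tuple{(t_1)^{\boldX\boldY}_{\boldx\boldy},\dots,(t_k)^{\boldX\boldY}_{\boldx\boldy}}$ for some $\boldy$ with $H \models \taud(l_1)^{\boldX\boldY}_{\boldx\boldy}\wedge\cdots\wedge\taud(l_m)^{\boldX\boldY}_{\boldx\boldy}$. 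Hence the pointwise equivalence over all $\dtuple$ asserts exactly that $\setsd_{|E/\boldX|}(\boldx)^H$ is the set prescribed by the \agghtinterp condition for $\setsd$-terms, which is the claim.

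The step I expect to require the most care is the justification that $H$-satisfaction (through Lemma~\ref{lem:nonprima.transformation}) is the correct bridge rather than $\tuple{H,I}$-satisfaction (through Proposition~\ref{prop:prima.transformation.ht}): one must check that $\NN{F^{\mathit{dlv}}}$ is genuinely $\sneg$-free, since $\taud$ introduces $\sneg$ for negated literals and $\NN{\cdot}$ must rewrite every such occurrence to $\neg$ before Lemma~\ref{lem:nonprima.transformation} applies. The $\ftuple$-equality conjunct $\vartuple = \ftuple(t_1,\dots,t_k)$ is $\sneg$-free and untouched, so it contributes only the tuple-identification already handled in Lemma~\ref{lem:agg_element.aux2}, and no extra bookkeeping is needed there.
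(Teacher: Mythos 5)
Your proposal is correct and takes essentially the same route as the paper's proof: both read \eqref{eq:agg_element.dlv.here} pointwise, use Lemma~\ref{lem:nonprima.transformation} to reduce \emph{both} sides of the biconditional to classical satisfaction in~$H$ (the key \dlv{}-specific choice, in contrast to Proposition~\ref{prop:prima.transformation.ht} in the \clingo{} case of Lemma~\ref{lem:prima.agg-interp.clingo}), and then close by unwinding the existential exactly as in Lemma~\ref{lem:prima.agg-interp.cl}. Your explicit check that $\NN{\cdot}$ commutes with $\exists$ and $\wedge$, so the axiom's right-hand side is literally $\NN{\exists\boldY\,F^{\mathit{dlv}}}$, is sound bookkeeping that the paper leaves implicit.
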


\begin{proof}
    $I^H$ satisfies~\eqref{eq:agg_element.dlv.here} iff the following two conditions are equivalent for every tuple~$\boldd$ of domain element of sort general and every domain element~$\dtuple$ of sort tuple:
    \begin{itemize}
        \item $I^H$ satisfies~$\pmemF{\dtuple^*}{\setsg_{|E/\boldX|}(\boldx)}$, and
        \item $I^H$ satisfies~$\exists \boldY \NN{F^{\mathit{cli}}}$.
    \end{itemize}
    By Lemma~\ref{lem:nonprima.transformation}, these two conditiosn are equivalent to
    \begin{itemize}
        \item $H$ satisfies~$\pmemF{\dtuple^*}{\setsg_{|E/\boldX|}(\boldx)}$, and
        \item $H$ satisfies~$\exists \boldY F^{\mathit{dlv}}$.
    \end{itemize}
    The rest of the proof is analogous to that of Lemma~\ref{lem:prima.agg-interp.cl}.%
\end{proof}

\begin{lemma}
    \label{lem:prima.agg-interp}
    Every standard \htinterp~$\tuple{H,I}$ is an \agginterp iff~$I^H$ satisfies~$\AGG$.
\end{lemma}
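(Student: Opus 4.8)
The plan is to assemble the three preceding lemmas, which between them account for each of the three families of axioms in~$\AGG$. Recall that $\AGG$ decomposes into the there-world axioms~\eqref{eq:agg_element.clingo.there} (one for each $x \in \{\cli, \mathit{dlv}\}$), the clingo here-world axiom~\eqref{eq:agg_element.clingo.here}, and the dlv here-world axiom~\eqref{eq:agg_element.dlv.here}. Correspondingly, the property of being an \agghtinterp splits into three requirements: that $I$ be an \agginterp (which fixes the there-world values $\sets^x(\boldx)^I$ for both $x$), that the here-world condition for $\setsg$ hold, and that the here-world condition for $\setsd$ hold. The goal is simply to match these up one-to-one.

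First I would treat the forward direction. Assuming $\tuple{H,I}$ is an \agghtinterp, its definition gives that $I$ is an \agginterp, so Lemma~\ref{lem:prima.agg-interp.cl} yields that $I^H$ satisfies~\eqref{eq:agg_element.clingo.there} for every $E/\boldX$ in $\S$ and both values of $x$. Since $I$ is an \agginterp and the two here-world conditions hold by assumption, Lemmas~\ref{lem:prima.agg-interp.clingo} and~\ref{lem:prima.agg-interp.dlv} then give that $I^H$ satisfies~\eqref{eq:agg_element.clingo.here} and~\eqref{eq:agg_element.dlv.here}, respectively. Hence $I^H \models \AGG$.

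For the converse, I would first extract from $I^H \models \AGG$ the fact that $I^H$ satisfies~\eqref{eq:agg_element.clingo.there}, and apply Lemma~\ref{lem:prima.agg-interp.cl} to conclude that $I$ is an \agginterp. This is the one point requiring care: only once $I$ is known to be an \agginterp are the hypotheses of Lemmas~\ref{lem:prima.agg-interp.clingo} and~\ref{lem:prima.agg-interp.dlv} met; after that, $I^H$ satisfying~\eqref{eq:agg_element.clingo.here} and~\eqref{eq:agg_element.dlv.here} yields the here-world conditions for $\setsg$ and $\setsd$. Together with $I$ being an \agginterp, these give exactly that $\tuple{H,I}$ is an \agghtinterp.

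I do not expect a genuine obstacle, since the lemma is a routine conjunction of three already-established equivalences. The only subtlety is the sequencing just noted: the two here-world lemmas are stated conditionally on $I$ being an \agginterp, so that fact must be discharged first (via the there-world axioms and Lemma~\ref{lem:prima.agg-interp.cl}) before they can be invoked. A minor secondary remark is that the phrase ``$\tuple{H,I}$ is an \agginterp'' in the statement should be read as ``$\tuple{H,I}$ is an \agghtinterp'', matching the three-part definition of that notion.
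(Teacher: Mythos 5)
Your proposal is correct and matches the paper's proof, which simply cites Lemmas~\ref{lem:prima.agg-interp.cl}, \ref{lem:prima.agg-interp.clingo} and~\ref{lem:prima.agg-interp.dlv} without elaboration; your version spells out the same decomposition, and your observations---that the here-world lemmas can only be invoked after Lemma~\ref{lem:prima.agg-interp.cl} discharges the hypothesis that $I$ is an \agginterp, and that ``\agginterp'' in the statement should read ``\agghtinterp''---are both accurate refinements of what the paper leaves implicit.
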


\begin{proof}
    Directly by Lemmas~\ref{lem:prima.agg-interp.cl}, \ref{lem:prima.agg-interp.clingo} and~\ref{lem:prima.agg-interp.dlv}.
\end{proof}

\begin{proof}[Proof of Proposition~\ref{prop:prima.agg.interpretation}]
    For the if-part, take any interpretation~$J$ of~$\hat\sigma$ that satisfies~$\HT$ and~$\AGG$.
    Since~$J$ satisfies~$\HT$, by Lemma~\ref{lem:prima.interpretation}, there is an \htinterp~$\tuple{H,I}$ such that~${I^H=J}$.
    Furthermore, since~$J$ satisfies~$\AGG$, by Proposition~\ref{lem:prima.agg-interp}, $\tuple{H,I}$ is an \agginterp.
    For the only-if part, take any \agghtinterp~$\tuple{H,I}$.
    By Proposition~\ref{lem:prima.agg-interp}, $I^H$ satisfies~$\AGG$.
    By Lemma~\ref{lem:prima.interpretation}, $I^H$ satisfies~$\HT$.
\end{proof}

\begin{proof}[Proof of Theorem~\ref{thm:strong.equivalece.classical}]
    First note that, since~$\Pi_1$ and~$\Pi_2$ are finite, there is a finite number of sentences in~$\HT$, $\AGG$, $\gamma\tau^x\Pi_1$, and~$\gamma\tau^x\Pi_2$.
    Hence, the conjunction of all the sentences in them is well\nobreakdash-defined.
    Let~$G_i$ be the conjunction of all sentences in~$\tau^x\Pi_i$, so that~$F_i$ is~$\gamma G_i$.
    We want to show that $\Pi_1$ is strongly equivalent to~$\Pi_2$ iff
    \begin{align*}
        &\text{for every standard model~$J$ of~$HT\cup\AGG$}
        \\
        &J\models \gamma G_1\hbox{ iff }J\models \gamma G_2.
    \end{align*}
    By Proposition~\ref{prop:prima.agg.interpretation}, this is further equivalent to the condition
    \begin{align*}
        &\text{for every \agghtinterp~$\tuple{H,I}$}
        \\
        &I^H\models \gamma G_1\hbox{ iff }I^H\models \gamma G_2.
    \end{align*}
    By Proposition~\ref{prop:prima.transformation.ht}, this is further equivalent to stating that
    \begin{align*}
        &\text{for every \agghtinterp~$\tuple{H,I}$}
        \\
        &\tuple{H,I}\modelsht G_1\hbox{ iff }\tuple{H,I}\modelsht G_2.
    \end{align*}
    that is, to the condition
    $$
    \hbox{$\tau^x\Pi_1$ and $\tau^x\Pi_2$ have the same \agghtmodels.}
    $$
    By Theorems~\ref{thm:strong.equivalece.clingo} and~\ref{thm:strong.equivalece.dlv}, this is equivalent to the condition that $\Pi_1$ and~$\Pi_2$ are strongly equivalent under the \clingo and \dlv semantics, respectively.
\end{proof}

\end{document}